\newcommand{\ppipp}{{\large \texttt{PPI++}}\xspace}
\newcommand{\ppipptitle}{{\fontsize{16pt}{0}\selectfont{\texttt{PPI\raisebox{.0ex}{++}}}\xspace}}
\begin{document}

\title{\ppipptitle: Efficient Prediction-Powered Inference}
\author{Anastasios N. Angelopoulos$^{1}$\footnote{The authors are ordered alphabetically.} \quad  John C. Duchi$^{2\,*}$ \quad  Tijana Zrnic$^{3\,*}$ \vspace{.5em} \\
  \normalsize{$^1$Department of Electrical Engineering and Computer Sciences, UC Berkeley}\\
  \normalsize{$^{2}$Departments of Statistics and Electrical Engineering,
    Stanford University} \\
  \normalsize{$^3$Department of Statistics and Stanford Data Science, Stanford University}
}
\date{}

\maketitle

\begin{abstract}
We present \ppipp: a computationally lightweight methodology for estimation and inference based on a small labeled dataset and a typically much larger dataset of machine-learning predictions. The methods automatically adapt to the quality of available
  predictions, yielding easy-to-compute confidence sets---for parameters
  of any dimensionality---that always improve on classical intervals using
  only the labeled data. \ppipp builds on prediction-powered inference (PPI),
  which targets the same problem setting, improving its computational and statistical efficiency.
Real and synthetic experiments demonstrate
  the benefits of the proposed adaptations.
\end{abstract}

%% \begin{abstract}
%%   Prediction-powered inference (PPI) is a recently proposed framework for constructing confidence intervals and p-values when large amounts of predicted labels are available, but ground-truth labels are scarce.
%%     This paper introduces a form of PPI that improves its computational and statistical efficiency, often to a great extent.
%%     The computational efficiency is achieved by an asymptotic normality result for the prediction-powered point estimate, yielding easy-to-compute intervals even in high-dimensional regression problems where the original algorithms may be entirely infeasible.
%%     The statistical efficiency is achieved by introducing a parameter that learns how much to factor in the predictions, allowing for massive gains in statistical power and, importantly, ensuring that the use of predictions never hurts.
%%     A series of synthetic and real-data experiments demonstrate the benefits of the proposed adaptations.
%% \end{abstract}

% -*- Mode: latex -*- %

\section{Introduction}

We cannot realize the promise of machine learning models---their ability to
ingest huge amounts of data, predict accurately in many domains, their
minimal modeling assumptions---until we can effectively, efficiently, and
responsibly use their predictions in support of scientific inquiry. This paper
takes a step toward that promise by showing how, in problems where
labeled observations or precise measurements of a phenomenon of interest are
scarce, we can leverage predictions from machine-learning algorithms to
supplement smaller real data to validly increase inferential power.

There is no reason to trust black-box machine-learned predictions in place
of actual measurements, calling into question the validity of studies
naively using such predictions in data analyses.
\citeauthor{AngelopoulosBaFaJoZr23}'s
\emph{Prediction Powered Inference} (PPI)~\citep{AngelopoulosBaFaJoZr23}
tackles precisely this issue, providing a framework for valid statistical
inference---computing p-values and confidence intervals---even using
black-box machine-learned models.
The idea of PPI is to combine a small amount of labeled data with a larger
collection of predictions to obtain valid and small
confidence intervals.  While the framework applies to a broad range of
estimands and arbitrary machine-learning models, the original
proposal has limitations:
\begin{enumerate}[label=(L\arabic*)]
\item \label{item:intractable}
  for many estimands, such as regression coefficients,
  computing the intervals is intractable, especially in high dimensions;
\item \label{item:adaptivity}
  when the provided predictions are inaccurate, the intervals can
  be worse than the ``classical intervals'' using only the labeled
  data.
\end{enumerate}
\ppipp addresses both limitations.

To set the stage, for observations $(X, Y) \sim \P$ and a loss function
$\loss_\theta(x, y)$, we wish to estimate and perform
inference for the
$d$-dimensional parameter
\begin{equation}
  \label{eq:m-est}
  \theta\opt = \argmin_{\theta \in \R^d}
  \{L(\theta) \defeq \E[\loss_\theta(X, Y)]\}.
\end{equation}
Typical loss functions include the squared error for linear
regression, $\loss_\theta(x, y) = \half(x^\top \theta - y)^2$, or the
negative log-likelihood of a conditional model for $Y$ given $X$,
$\loss_\theta(x, y) = -\log p_\theta(y \mid x)$.  We receive $n$
labeled data points $(X_i, Y_i) \simiid \P$ and $N$ unlabeled data points
$\Xt_i \simiid \Px$, where $X_i$ and $\Xt_i$ have identical distributions.
The key assumption is that we also have access to a black-box model $f$
mapping $X$ to predictions $\what{Y} = f(X)$, which we use to impute
labels for $\Xt_i$. PPI~\cite{AngelopoulosBaFaJoZr23} performs inference on a so-called \emph{rectifier} (an analog of a control variate or debiasing
term) to incorporate these black-box predictions; we reconsider
their methodology to provide two main improvements, adding the
\texttt{\raisebox{.0ex}{++}}.

\paragraph{\texttt{\raisebox{.0ex}{+}}Computational efficiency.}
The original prediction-powered inference constructs a confidence set
for $\theta\opt$ by searching through all possible values of
the estimand $\theta$ and performing a hypothesis test for each.
Some estimands, such as means, admit simplifications, but others, such as
logistic regression or other generalized linear model (GLM) coefficients, do
not. When it does not simplify, the original PPI
method~\cite{AngelopoulosBaFaJoZr23} grids the space of $\theta$ and tests
each potential parameter separately; this is both inexact and intractable,
especially in more than two dimensions. 

To address problem~\ref{item:intractable}, we develop computationally fast
convex-optimization-based algorithms for computing prediction-powered point
estimates and intervals for a wide class of estimands, which includes
all GLMs.  The algorithms use the asymptotic normality of these point
estimates and are thus asymptotically exact, computationally lightweight,
and easy to implement.  We show that the resulting intervals are
asymptotically equivalent to an idealized version of the intervals obtained
via the testing strategy (with an infinite grid).  In practice, the
intervals are often tighter, especially in high dimensions.

An important consequence of this theory appears when we wish to estimate
only a single coordinate $\theta\opt_j$ in the model, such as a causal
parameter, while controlling for other covariates and adjusting for nuisance
parameters. Whereas the original PPI~\cite{AngelopoulosBaFaJoZr23} implicitly requires a multiplicity
correction over the entire parameter vector, \ppipp's improved statistical efficiency in the presence
of nuisances allows for inference on single coordinates of regression
coefficients while avoiding multiplicity correction.

\paragraph{\texttt{\raisebox{.0ex}{+}}Power tuning.}
To address problem~\ref{item:adaptivity}, we develop a lightweight
modification of PPI that adapts to the accuracy of the predictive model $f$.
The modification automatically selects a parameter $\lambda$ to interpolate
between classical and prediction-powered inference.  The optimal value of
this $\lambda$ admits a simple plug-in estimate, which we term \emph{power
tuning}, because it optimizes statistical power.  Our experiments and theoretical results indicate that power tuning
is essentially never worse than either classical or prediction-powered
inference, and it can outperform both substantially. Power tuning improves the
performance of the original PPI algorithms when $f$ is not
highly accurate, and, when $f$ is accurate, it can sometimes
guarantee full statistical efficiency.

\section{Preliminaries}

We formalize our problem setting while recapitulating the
prediction-powered-inference (PPI) approach~\cite{AngelopoulosBaFaJoZr23}.
Recall that we have $n$ labeled data points, $(X_i,Y_i) \simiid \P,
i\in[n]$, as well as $N$ unlabeled data points, $\Xt_i \simiid \Px,
i\in[N]$, where the feature distribution is identical across both
datasets, $\P = \Px \times \P_{Y \mid X}$.  In addition to the data, we have
access to a machine-learning model $f$ that predicts outcomes from
features.

Define the population losses
\begin{equation*}
  L(\theta) \defeq \E[\loss_\theta(X, Y)]
  ~~ \mbox{and} ~~
  L^f(\theta) \defeq \E[\loss_\theta(X, f(X))].
\end{equation*}
The starting point of PPI and our approach is the recognition that $\E[\loss_\theta(X, f(X))]
= \E[\loss_\theta(\wt{X}, f(\wt{X}))] = L^f(\theta)$, so that the ``rectified'' loss,
\begin{equation*}
  \LPP(\theta) \defeq L_n(\theta) +
  \widetilde L^f_N(\theta) - L_n^f(\theta),
\end{equation*}
where
\begin{equation*}
 L_n(\theta) \defeq \frac{1}{n} \sum_{i = 1}^n \loss_\theta(X_i, Y_i), ~~ L_n^f(\theta) \defeq \frac{1}{n} \sum_{i = 1}^n \loss_\theta(X_i, f(X_i))
  ~~ \mbox{and} ~~
  \wt{L}_N^f(\theta) \defeq \frac{1}{N} \sum_{i = 1}^N \loss_\theta(\wt{X}_i,
  f(\wt{X}_i)),
\end{equation*}
is unbiased for the true objective: $\E[\LPP(\theta)] = L(\theta)$.

%Prediction-powered inference (PPI)~\cite{AngelopoulosBaFaJoZr23} combines
%the labeled data and the unlabeled data to perform more powerful inferences
%than classical procedures using only the labeled data in an assumption-light
%way, requiring only that $X$ and $\Xt$ have identical marginal
%distributions, treating $f$ as a black-box. 
%%This avoids the assumption- and
%%computation-heavy approaches that semiparametric inference typically
%%requires~\cite{RobinsRoZh94, BickelKlRiWe98, Imbens04, ImbensRu15,
%%  ChernozhukovChDeDuHaNeRo16}.
In the original, PPI first forms
$(1-\alpha)$-confidence sets for the gradient of the loss $\nabla L(\theta)$, denoted $\C^{\nabla}_\alpha(\theta)$, 
and then constructs the confidence set $\CPP_\alpha$ for $\theta\opt$ as
\begin{equation}
  \label{eq:CPP-original}
  \CPP_\alpha = \{\theta \in \R^d ~ \mbox{s.t.} ~ \zeros
  \in \C^\nabla_{\alpha}(\theta)\}.
\end{equation}
Constructing $C^\nabla_\alpha(\theta)$ for a \emph{fixed} $\theta$ is a trivial application of the central limit theorem to $\nabla \LPP (\theta)$. 
The issue is that forming the confidence set for \emph{every} $\theta \in \R^d$ is computationally infeasible, because one cannot simply iterate through every possible value.

\subsection{Computational efficiency: overview}

We take PPI as our point of departure, but adopt a different perspective. In particular, the
rectified loss leads to the \emph{prediction-powered point estimate},
\begin{align}
  \label{eq:ppi}
  \tag{$P_1$}
  \thetaPP &=\argmin_\theta \LPP(\theta), \text{ where } \LPP(\theta) \defeq L_n(\theta) + \widetilde L^f_N(\theta)  - L_n^f(\theta).
\end{align}
This estimate (along with its forthcoming variant~\eqref{eq:ppi-lambda}) is the
main object of study in this paper, and we derive its asymptotic normality
about $\theta\opt$.  This result is the keystone to our efficient
\ppipp algorithms, allowing us to construct confidence intervals of the form
\begin{equation}
  \label{eq:CPP-new}
  \CPP_{\alpha} = \left\{\thetaPP_j \pm z_{1-\alpha/2} \cdot \hat\sigma_j / \sqrt{n} \right\},
\end{equation}
where $\hat\sigma_j^2$ is a consistent estimate of the
asymptotic variance of $\thetaPP_j$ and $z_{q}$ denotes the
$q$-quantile of the standard normal distribution. Similar confidence intervals centered around $\thetaPP$ apply when inferring the whole vector $\theta\opt$ rather than a single coordinate $\theta\opt_j$.

It should be clear that the set~\eqref{eq:CPP-new} is easier to compute than the set~\eqref{eq:CPP-original}.
The latter requires the analyst to ``iterate through all $\theta \in \R^d$'' and test
optimality separately; the former has no such requirement, as \ppipp
simply requires placing error bars around a point estimate.  
This perspective, though simple, results in far more efficient algorithms.
In Section~\ref{sec:ppi++_glms}, we provide algorithms for generalized linear models, and in Section~\ref{sec:ppipp-m-estimation}, we show the case of general M-estimators. 
In Section~\ref{sec:equivalence}, we show that the \ppipp intervals are asymptotically equivalent to the naive strategy~\eqref{eq:CPP-original}, so the increased computational efficiency comes with no cost in terms of statistical power.

\subsection{Power tuning: overview}

The prediction-powered estimator~\eqref{eq:ppi} need not outperform classical inference when $f$ is inaccuate:
limitation~\ref{item:adaptivity} of PPI. 
We thus generalize PPI to allow it to adapt to the ``usefulness'' of the supplied predictions, yielding an estimator never worse than classical inference. 
To do so, we introduce a tuning parameter $\lambda \in \R$, defining
\begin{equation}
\label{eq:ppi-lambda}
\tag{$P_\lambda$}
\thetaPP_\lambda = \argmin_\theta \LPP_\lambda(\theta), \text{ where } \LPP_\lambda(\theta)\defeq L_n(\theta) + \lambda \cdot (\widetilde L^f_N(\theta) - L_n^f(\theta)).
\end{equation}
Certainly the objective remains unbiased for any $\lambda$, as
$\E[\LPP_\lambda(\theta)] = L(\theta)$. The problem \eqref{eq:ppi-lambda}
recovers \eqref{eq:ppi} when $\lambda = 1$ and reduces to performing
classical inference, ignoring the predictions, when
$\lambda=0$. 
In Section~\ref{sec:setting-lambda}, we show how to optimally tune $\lambda$ to maximize estimation and inferential efficiency, yielding a data-dependent tuning parameter $\hat{\lambda}$.
This adaptively chosen $\hat{\lambda}$ leads to better estimation and inference than both the classical and PPI strategies.
(In fact, sometimes it is fully statistically efficient, though that is
not our focus here.)
Section~\ref{sec:experiments} puts everything together through a series of experiments highlighting the benefits of \ppipp.

\subsection{Related work}

\citet{AngelopoulosBaFaJoZr23} develop prediction-powered inference (PPI)
to allow estimation and inference using black-box machine learning models;
our work's debt to the original proposal and motivation is clear.
%% %
%% The idea is to use the predictions in combination with a (usually small)
%% labled dataset to achieve more powerful inferences than possible with the
%% labeled dataset alone.
%% %
%% The main difference between the original PPI proposal and \ppipp is that our
%% paper explicitly derives the asymptotic distribution of the
%% prediction-powered point estimator, allowing for efficient confidence
%% interval constructions.
%% %
%% Our approach also introduces power tuning, which significantly tightens the
%% intervals while ensuring that PPI always outperforms the classical approach.
%
The PPI observation model---where we receive $n$ labeled observations and
$N$ unlabeled observations---also centrally motivates semi-supervised
learning and inference~\cite{ZhangBrCa19, ZhangBr22, SongLiZh23}.
This line of work also considers the problem of inference with a labeled
dataset and an unlabeled dataset; the key point of departure is PPI's access
to a pretrained machine learning model $f$.
Fitting empirically strong predictive models of a target $Y$ given $X$ has
been and remains a major focus of the machine learning
literature~\cite{LeCunBeHi15}, so that leveraging these
models as black boxes, in spite of the lack of essentially \emph{any}
statistical control over them (or even knowledge of how they were fit),
provides a major opportunity to leverage modern prediction techniques.
This enables PPI to apply to estimation problems where, for example,
high-performance deep learning models exist to impute labels (such as
AlphaFold~\cite{JumperEvEtAl21} for biological structures or
CLIP~\cite{RadfordKiHaRaGoAgSaAsMiClKrSu21} for image labeling), and, though we may have no
hope of understanding precisely what the models are doing, to obtain more
powerful inferences in those settings.

\newcommand{\thetaAIPW}{\hat{\theta}^{\textup{AIPW}}}

One may also take a perspective on the \ppipp approach coming from the
literatures on semiparametric inference, causality, and missing
data~\cite{RobinsRo95, BickelKlRiWe98, RobinsRoZh94, Tsiatis06,
  ChernozhukovChDeDuHaNeRo16}, as well as that on survey
sampling~\cite{SarndalSwWr03}.  We only briefly develop the connections, but
it is instructive to formulate our problem as one with missing data, where
the dataset $\{\Xt_i\}_{i=1}^N$ has no labels. Consider now augmented
inverse propensity weighting (AIPW) estimators~\cite{RobinsRo95, Tsiatis06},
focusing in particular on estimating the mean $\E[Y]$. Then because we know
the ratio $r = n/N$ (the propensity score, or probability of missingness), AIPW~\cite{RobinsRo95, Tsiatis06}
estimates $\E[Y]$ by
\begin{align}
  \thetaAIPW & \defeq \frac{1}{n} \sum_{i = 1}^n
  (Y_i - \hat{\E}[Y \mid X = X_i])
  + \frac{1}{N + n} \bigg(\sum_{i = 1}^n \hat{\E}[Y \mid X = X_i]
  + \sum_{i = 1}^N \hat{\E}[Y \mid X = \wt{X}_i]\bigg) \nonumber \\
  & \, = \frac{1}{n} \sum_{i = 1}^n
  (Y_i - f(X_i))
  + \frac{1}{N + n} \bigg(\sum_{i = 1}^n f(X_i)
  + \sum_{i = 1}^N f(\wt{X}_i)\bigg)   \label{eqn:aipw}
  \\
  & \, = \frac{1}{n} \sum_{i = 1}^n
  Y_i 
  + \frac{1}{1 + \frac n N} \bigg(-\frac 1 n \sum_{i = 1}^n f(X_i)
  + \frac 1 N \sum_{i = 1}^N f(\wt{X}_i)\bigg),
  \nonumber
\end{align}
where in the first line $\hat{\E}$ denotes the estimated conditional
expectation of $Y$ given $X$, and in the second we replace it with the
black-box predictor $f$.
This corresponds to the particular choice $\lambda = \frac{1}{1 + r}$ in the
\ppipp estimator~\eqref{eq:ppi-lambda}.
The point of emphasis in PPI is distinct, however; while semiparametric
inference in principle allows machine learning to estimate nuisance
parameters ($\E[Y \mid X]$ in the AIPW estimator~\eqref{eqn:aipw}), we
develop easy-to-compute, practical, and assumption-light estimators, where
we truly treat $f$ as an unknown black box, and \ppipp leverages the
predictions as much as is possible.
The practicality is essential (as our experiments highlight), and it
reflects the growing trend in science to use the strength of modern
machine-learned predictors $f$ to impute labels $Y$.
The debiasing~\eqref{eq:ppi-lambda} thus directly allows arbitrary
imputation of missing labels, avoiding challenges in more common imputation
approaches~\cite{Rubin18}.
We remark in passing that survey sampling approaches frequently use
auxiliary data to improve estimator efficiency~\cite{SarndalSwWr03}, but
these techniques are limited to simple estimands and do not use machine
learning.

Our work can be seen as an improvement of PPI along the lines of Veridical Data Science~\cite{Yu18, YuKumbier20, YuBa24}, the main tenets of which are predictability, computability, and stability (PCS).
On the note of predictability, PPI can diminish statistical power if the model $f$ cannot adequately predict $Y$; our procedure will automatically check if this is the case and reduces to classical inference if the model is bad. Therefore, if classical inference is believed to yield statistically valid results, \ppipp will be valid as well.
On the note of computability, for algorithms like logistic regression, PPI intervals were not efficiently computable, while \ppipp intervals are.

See also \citet{AngelopoulosBaFaJoZr23} for more discussion of related work.

%% More distantly, PPI is also related to the literatures on semiparametric inference~\cite{RobinsRo95, BickelKlRiWe98}, survey sampling~\cite{SarndalSwWr03}, and inference with missing data~\cite{RobinsRoZh94}.
%% Semiparametric inference also considers the use of black-box machine learning models to estimate a nuisance parameter, but does not usually consider the semi-supervised setting.
%% The survey sampling literature uses of auxiliary data to improve the efficiency of estimators, but these techniques are limited to a small set of simple estimands and do not use machine learning.
%% Our work can also be seen as a missing data problem where the labels are missing on the unlabeled dataset.
%% A common solution to the missing data problem is to impute the missing labels via prediction algorithms~\cite{Rubin18}. 
%% However, much of the literature on imputation does not have a principled way of debiasing the prediction errors for the purpose of computing downstream estimands. 
%% For a more detailed related owrk section, see~\cite{AngelopoulosBaFaJoZr23}.

% -*- Mode: latex -*- %

\section{\ppipptitle~for generalized linear models}
\label{sec:ppi++_glms}

Generalized linear models (GLMs) make the benefits and approach of \ppipp
clearest.  In the basic GLM case, the loss function takes the form
\begin{equation}
    \label{eq:glm-loss}
    \loss_\theta(x,y) = -\log p_\theta(y \mid x)
    = -y\theta^\top x + \psi(x^\top\theta),
\end{equation}
where $p_\theta(y \mid x) = \exp(y x^\top \theta - \psi(x^\top \theta))$ is
the probability density of $y$ given $x$ and the log-partition function
$\psi$ is convex and infinitely differentiable~\cite{Brown86}. 
The setting~\eqref{eq:glm-loss} captures the familiar linear and logistic regression, where in the former case, we have $y \in \R$ and set $\psi(s) = \half s^2$, and in the latter, we have $y \in \{0, 1\}$ and $\psi(s) = \log (1+ e^{s})$.
%Section~\ref{sec:general-expfam} gives the easy extension to more
%general exponential family regression.

For any loss of the form~\eqref{eq:glm-loss}, the key to computational
efficiency is that $\LPP_{\lambda}(\theta)$ is convex for $\lambda \in [0, 1]$.
Indeed,  $\lambda \wt{L}_N^f(\theta)$ is convex in $\theta$,
while
\begin{equation*}
  L_n(\theta) - \lambda L_n^f(\theta)
  = - \frac{1}{n}
  \sum_{i=1}^n (Y_i - \lambda f(X_i))\theta^\top X_i
  +  (1-\lambda) \frac{1}{n} \sum_{i=1}^n\psi(X_i^\top \theta)
\end{equation*}
consists of linear and convex components. 
Thus, $\LPP_{\lambda}(\theta)$ is a sum of convex terms, and one can compute $\thetaPP_{\lambda}$ efficiently~\cite{BoydVa04}.
In this setting, the following corollary of our forthcoming
Theorem~\ref{theorem:normality} shows the asymptotic normality of the prediction-powered point estimate.

\begin{corollary}
  \label{cor:normality_glms}
  Assume that $\hat\lambda = \lambda + o_P(1)$ for some limit $\lambda$,
  $\frac{n}{N} \rightarrow r$ and that
  $H_{\theta\opt}\defeq\E[\psi''(X^\top\theta\opt) X X^\top]$ is
  nonsingular. Define the covariance matrices
  \begin{align*}
    V_{f,\theta\opt}^\lambda &\defeq \lambda^2 \cov\left( (\psi'(X^\top\theta\opt)-f(X)) X \right)\text{ and }\\
    V_{\Delta,\theta\opt}^\lambda  &\defeq \cov\left( (1-\lambda)\psi'(X^\top\theta\opt)X
    + ( \lambda f(X) - Y) X\right).
  \end{align*}
  Then for $\Sigma^{\lambda} \defeq H_{\theta\opt}^{-1} \left(r V_{f,\theta\opt}^\lambda  + V_{\Delta,\theta\opt}^\lambda \right) H_{\theta\opt}^{-1}$, we have the convergence
  \begin{equation*}
    \sqrt{n}(\thetaPP_{\hat\lambda} - \theta\opt) \cd \normal\left(0,
    \Sigma^{\lambda} \right).
  \end{equation*}
\end{corollary}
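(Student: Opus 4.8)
Since the statement is billed as a corollary of Theorem~\ref{theorem:normality}, my plan is to instantiate that general M-estimation result for the specific GLM loss~\eqref{eq:glm-loss} and then read off the covariance. First I would record the first two derivatives of the loss, $\nabla_\theta \loss_\theta(x,y) = (\psi'(x^\top\theta) - y)\,x$ and $\nabla_\theta^2 \loss_\theta(x,y) = \psi''(x^\top\theta)\,x x^\top$. Twice-continuous-differentiability of $\psi$ furnishes exactly the smoothness the general theorem requires, and taking expectations at $\theta\opt$ identifies the population Hessian $\nabla^2 L(\theta\opt) = \E[\psi''(X^\top\theta\opt)XX^\top] = H_{\theta\opt}$, nonsingular by hypothesis. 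I would then verify the remaining regularity conditions of Theorem~\ref{theorem:normality}: finiteness of the relevant second moments (so the covariance blocks below are well defined), consistency of $\thetaPP_{\hat\lambda}$ (which for the $\lambda \in [0,1]$ regime of interest follows from the convexity of $\LPP_\lambda$ noted above, together with $\hat\lambda \to \lambda$), and the local uniform convergence of the empirical Hessians, so that the theorem's sandwich conclusion applies verbatim.

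The core of the argument is the sandwich expansion $\sqrt{n}(\thetaPP_\lambda - \theta\opt) = -H_{\theta\opt}^{-1}\,\sqrt{n}\,\nabla\LPP_\lambda(\theta\opt) + o_P(1)$, so it remains to compute the limiting variance of $\sqrt{n}\,\nabla\LPP_\lambda(\theta\opt)$. Because $\E[\LPP_\lambda(\theta)] = L(\theta)$ and $\nabla L(\theta\opt)=0$, this gradient is exactly mean-zero. Splitting it along the two independent samples, the labeled term $\nabla L_n(\theta\opt) - \lambda\nabla L_n^f(\theta\opt)$ has i.i.d.\ summands $(1-\lambda)\psi'(X_i^\top\theta\opt)X_i + (\lambda f(X_i) - Y_i)X_i$, whose covariance is exactly $V_{\Delta,\theta\opt}^\lambda$, while the unlabeled term $\lambda\nabla\wt{L}_N^f(\theta\opt)$ has summands $\lambda(\psi'(\wt{X}_j^\top\theta\opt) - f(\wt{X}_j))\wt{X}_j$ with per-sample covariance $V_{f,\theta\opt}^\lambda$. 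Applying the central limit theorem to each sum separately and using independence together with $n/N \to r$ gives $\sqrt{n}\,\nabla\LPP_\lambda(\theta\opt) \cd \normal(0,\, r V_{f,\theta\opt}^\lambda + V_{\Delta,\theta\opt}^\lambda)$; conjugating by $H_{\theta\opt}^{-1}$ yields $\Sigma^\lambda$.

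The one point that needs genuine care, and that I expect the general theorem to absorb, is that the estimator uses the data-dependent $\hat\lambda$ rather than its limit $\lambda$. The key observation making this harmless is that the rectifier gradient $\nabla\wt{L}_N^f(\theta\opt) - \nabla L_n^f(\theta\opt)$ has identical means across the two samples and is therefore mean-zero and of size $O_P(n^{-1/2})$. Consequently the perturbation of the estimating equation, $(\hat\lambda-\lambda)\big(\nabla\wt{L}_N^f(\theta\opt) - \nabla L_n^f(\theta\opt)\big) = o_P(1)\cdot O_P(n^{-1/2}) = o_P(n^{-1/2})$, is negligible after rescaling by $\sqrt n$, so $\thetaPP_{\hat\lambda}$ and $\thetaPP_\lambda$ share the same limit law. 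The main obstacle is thus not any single deep estimate but the careful matching of the abstract covariance from Theorem~\ref{theorem:normality} to the two explicit GLM covariance blocks and the verification that substituting $\hat\lambda$ for $\lambda$ leaves the first-order asymptotics intact.
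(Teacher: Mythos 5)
Your proposal is correct and follows essentially the same route as the paper: the authors prove Corollary~\ref{cor:normality_glms} precisely by combining Theorem~\ref{theorem:normality} with Proposition~\ref{prop:consistency-convex} (consistency via convexity of $\LPP_\lambda$ for $\lambda\in[0,1]$), then specializing $\nabla\loss_{\theta\opt}=(\psi'(X^\top\theta\opt)-Y)X$ and $\nabla\loss^f_{\theta\opt}=(\psi'(X^\top\theta\opt)-f(X))X$ to identify $H_{\theta\opt}$, $V^\lambda_{f,\theta\opt}$, and $V^\lambda_{\Delta,\theta\opt}$, exactly as you do. Your extra discussion of why the random $\hat\lambda$ is harmless (the rectifier gradient at $\theta\opt$ is mean-zero and $O_P(n^{-1/2})$, so the $(\hat\lambda-\lambda)$ perturbation is $o_P(n^{-1/2})$) is the same mechanism already absorbed into the proof of Theorem~\ref{theorem:normality}, so it duplicates rather than replaces any step.
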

\noindent Corollary \ref{cor:normality_glms} allows the construction of confidence intervals around $\thetaPP_{\hat\lambda}$; see Algorithm \ref{alg:glms}.

We develop results on the optimal choice of $\hat \lambda$ in Section
\ref{sec:setting-lambda}. For now, a reasonable heuristic is to simply think
of $\hat\lambda = 1$ when the predictive model $f$ is accurate and
$\hat\lambda = 0$ when it is not.  When the model $f$ is good and
$\hat\lambda = 1$, we expect $V_{\Delta,\theta\opt}^\lambda = \cov((f(X) -
Y)X) \approx 0$, so that the asymptotic covariance $\Sigma^{\lambda}$ is
proportional to $r = \frac{n}{N}$; a large unlabeled data size $N$ makes
this covariance tiny.
Otherwise, when the model $f$ is bad and $\hat\lambda = 0$, the covariance
reduces to the classical~\cite{VanDerVaart98, Efron22} sandwich-type GLM
covariance
\begin{equation*}
  H_{\theta\opt}^{-1} \cov(\nabla \loss_{\theta\opt}(X, Y)) H_{\theta\opt}^{-1}.
\end{equation*}

Building on Corollary \ref{cor:normality_glms}, we construct confidence
intervals for $\theta\opt$ using a plug-in estimator for
$\Sigma^{\lambda}$. For concreteness, we provide the general
confidence interval construction for GLMs in Algorithm~\ref{alg:glms};
instatiations for particular choices of the loss~\eqref{eq:glm-loss}
follow. In the algorithm, for any map $g$, we use $\widehat\Cov_n(g(X_i,Y_i))$
to denote the empirical covariance of $g(X_i,Y_i)$ computed on the labeled
data, and $\widehat\Cov_{N+n}(g(X_i))$ to denote the empirical covariance of
$g(X_i)$ computed on the labeled and unlabeled data combined.
The guarantees of Algorithm \ref{alg:glms} follow from
Corollary~\ref{cor:normality_glms}.

\begin{algorithm}[t]
\setstretch{1.3}
\caption{\ppipp: Prediction-powered inference for GLMs}
\label{alg:glms}
\begin{algorithmic}[1]
  \Require labeled data $\{(X_i, Y_i)\}_{i=1}^n$, unlabeled data $\{\Xt_i\}_{i=1}^N$, predictive model $f$, error level $\alpha\in~(0,1)$, coefficient index
  $j \in [d]$ of interest
  \State Select tuning parameter $\hat\lambda$
  \Comment{set tuning parameter}

  \State $\thetaPP_{\hat\lambda} = \argmin_\theta \LPP_{\hat\lambda}(\theta)$ \Comment{prediction-powered point estimate}
    \State $\widehat H = \frac{1}{N+n} (\sum_{i=1}^n \psi''(X_i^\top \thetaPP_{\hat\lambda}) X_iX_i^\top + \sum_{i=1}^N \psi''(\Xt_i^\top \thetaPP_{\hat\lambda}) \Xt_i \Xt_i^\top)$
\State $\widehat V_f = \hat\lambda^2 \cdot \widehat\Cov_{N+n} (  (\psi'(X_i^\top \thetaPP_{\hat\lambda})-f(X_i)) X_i)$ 
\State $\widehat V_\Delta = \widehat\Cov_{n} ( (1-\hat \lambda)\psi'(X_i^\top\thetaPP_{\hat\lambda}) + (\hat\lambda f(X_i) -Y_i)X_i)$ 
\State $\widehat\Sigma = \widehat H^{-1}( \frac{n}{N} \widehat V_f + \widehat V_\Delta) \widehat H^{-1}$ \Comment{covariance estimate}
\Ensure 
prediction-powered confidence interval $\CPP_\alpha = \left(\thetaPP_{\hat\lambda, j} \pm z_{1-\alpha/2} \sqrt{\frac{\widehat\Sigma_{jj}}{n}} \right)$
for coordinate $j$
\end{algorithmic}
\end{algorithm}

\begin{corollary}
  Let the conditions of Corollary~\ref{cor:normality_glms} hold.  Then, for
  each coordinate $j \in [d]$, the prediction-powered confidence interval in
  Algorithm \ref{alg:glms} has valid coverage:
  \[\lim_{n} \P\left(\theta\opt_{j} \in \CPP_\alpha \right) = 1-\alpha.\]
\end{corollary}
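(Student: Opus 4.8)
The plan is to turn the coverage statement into a studentized central limit theorem and conclude via Slutsky's lemma. By Corollary~\ref{cor:normality_glms} we have $\sqrt{n}(\thetaPP_{\hat\lambda} - \theta\opt) \cd \normal(0, \Sigma^{\lambda})$, so reading off the $j$-th coordinate gives $\sqrt{n}(\thetaPP_{\hat\lambda, j} - \theta\opt_j) \cd \normal(0, \Sigma^{\lambda}_{jj})$. The interval produced by Algorithm~\ref{alg:glms} is $\thetaPP_{\hat\lambda,j} \pm z_{1-\alpha/2}\sqrt{\widehat\Sigma_{jj}/n}$, and the event $\theta\opt_j \in \CPP_\alpha$ is exactly $\sqrt{n}\,|\thetaPP_{\hat\lambda,j} - \theta\opt_j|/\sqrt{\widehat\Sigma_{jj}} \le z_{1-\alpha/2}$. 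Hence it suffices to show the studentized statistic converges to a standard normal; given the coordinatewise CLT above, this reduces to proving $\widehat\Sigma_{jj} \to \Sigma^{\lambda}_{jj}$ in probability, since then Slutsky's lemma yields $\sqrt{n}(\thetaPP_{\hat\lambda,j} - \theta\opt_j)/\sqrt{\widehat\Sigma_{jj}} \cd \normal(0,1)$ and the probability of the event tends to $\Phi(z_{1-\alpha/2}) - \Phi(-z_{1-\alpha/2}) = 1-\alpha$.

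The substance of the proof is therefore the consistency of the plug-in estimator $\widehat\Sigma = \widehat H^{-1}(\frac{n}{N}\widehat V_f + \widehat V_\Delta)\widehat H^{-1}$ for $\Sigma^{\lambda} = H_{\theta\opt}^{-1}(r\, V_{f,\theta\opt}^\lambda + V_{\Delta,\theta\opt}^\lambda)H_{\theta\opt}^{-1}$. First I would record that the point estimate is consistent, $\thetaPP_{\hat\lambda} \to \theta\opt$ in probability; this is a standard M-estimation consequence of the asymptotic normality already established (alternatively it follows from uniform convergence of $\LPP_{\hat\lambda}$ to the convex population loss $L$, whose unique minimizer is $\theta\opt$). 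With $\thetaPP_{\hat\lambda} \to \theta\opt$ and $\hat\lambda = \lambda + o_P(1)$ in hand, each ingredient converges by the law of large numbers combined with the continuous mapping theorem. The Hessian estimate $\widehat H$ is a pooled average of $\psi''(X_i^\top \thetaPP_{\hat\lambda})X_iX_i^\top$ over the labeled and unlabeled samples; since $\Xt_i$ and $X_i$ share a distribution, since $\psi''$ is continuous, and since $\thetaPP_{\hat\lambda}\to\theta\opt$, this converges to $H_{\theta\opt} = \E[\psi''(X^\top\theta\opt)XX^\top]$ in probability. Identical reasoning applied to the empirical covariances $\widehat\Cov_{N+n}$ and $\widehat\Cov_n$ (now using continuity of $\psi'$ and $\hat\lambda\to\lambda$) gives $\widehat V_f \to V_{f,\theta\opt}^\lambda$ and $\widehat V_\Delta \to V_{\Delta,\theta\opt}^\lambda$ in probability. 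Finally, $\frac{n}{N}\to r$ and the assumed nonsingularity of $H_{\theta\opt}$ make matrix inversion continuous at $H_{\theta\opt}$, so one more application of the continuous mapping theorem yields $\widehat\Sigma \to \Sigma^{\lambda}$ in probability, and in particular $\widehat\Sigma_{jj} \to \Sigma^{\lambda}_{jj}$.

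The one delicate step, which I would treat most carefully, is passing from the population moments evaluated at the fixed $(\theta\opt,\lambda)$ to the sample quantities evaluated at the random arguments $(\thetaPP_{\hat\lambda},\hat\lambda)$: plugging the estimate into the nonlinear maps $\psi'(X^\top\,\cdot)$ and $\psi''(X^\top\,\cdot)$ must not destroy convergence. This is handled by a standard uniform/dominated-convergence argument—a local Lipschitz bound on $\psi'$ and $\psi''$ in a neighborhood of $\theta\opt$ together with integrable envelopes for the relevant products of $X$ and $Y$—so the error from using $\thetaPP_{\hat\lambda}$ in place of $\theta\opt$ is $o_P(1)$. Once $\widehat\Sigma_{jj} \to \Sigma^{\lambda}_{jj}$ in probability is in place, the conclusion is immediate from Slutsky's lemma and continuity of the normal CDF, as outlined in the first paragraph; the argument tacitly uses $\Sigma^{\lambda}_{jj} > 0$, which holds generically under nonsingularity of $H_{\theta\opt}$.
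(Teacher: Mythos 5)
Your proposal is correct and takes essentially the same route as the paper, which gives no separate argument for this corollary beyond noting that it ``follows from Corollary~\ref{cor:normality_glms}''---i.e., asymptotic normality of $\thetaPP_{\hat\lambda,j}$, consistency of the plug-in covariance $\widehat\Sigma$, and Slutsky's lemma, exactly the steps you spell out in detail. The one caveat you flag, that the argument tacitly requires $\Sigma^{\lambda}_{jj}>0$, is likewise implicit in the paper's statement, so there is no substantive discrepancy.
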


Algorithm~\ref{alg:glms}'s efficient implementation separates
it from the original PPI methodology~\cite{AngelopoulosBaFaJoZr23}, which in general
requires performing a hypothesis test for each possible $\theta$. For
example, their algorithm for logistic regression discretizes the parameter space
and performs a hypothesis test for \emph{each} grid point, which is
essentially non-implementable in more than two
or three dimensions and prone to numerical inaccuracies.

\begin{example}[Linear regression]
  To obtain confidence intervals on the $j$th coefficient of a
  $d$-dimensional linear regression, run Algorithm~\ref{alg:general_ppi}
  with $\psi(s) = \frac{1}{2}s^2$, so $\psi'(s) = s$ and
  $\psi''(s) \equiv 1$.
  The Hessian  simplifies to $\widehat H = \frac{1}{N+n} (
  \sum_{i=1}^n X_i X_i^\top + \sum_{i=1}^N \tX_i\tX_i^\top )$.
\end{example}

\begin{example}[Logistic regression]
To obtain confidence intervals on the $j$th coefficient of a $d$-dimensional
logistic regression, run Algorithm~\ref{alg:general_ppi} with $\psi(s) =
\log(1 + e^{s})$.  Here, one substitutes $\psi'(s) =
\frac{e^s}{1+e^s}$ and $\psi''(s) = \psi'(s)(1-\psi'(s))
= \frac{e^s}{(1 + e^s)^2}$.
\end{example}

\begin{example}[Poisson regression]
  A variable $Y \sim \poisson(\lambda)$ has
  probability mass $p(y) = \frac{\lambda^y e^{-\lambda}}{y!}$;
  the choice $\lambda = e^{x^\top \theta}$ gives GLM $p_\theta(y \mid x)
  = \exp(y x^\top \theta - e^{-x^\top \theta} - \log(y!))$. Thus
  one runs Algorithm~\ref{alg:general_ppi} with
  $\psi(s) = e^{-s}$, $\psi'(s) = - e^{-s}$, and
  $\psi''(s) = e^{-s}$.
\end{example}

\subsection{General exponential-family regression}
\label{sec:general-expfam}

The results of Section \ref{sec:ppi++_glms} immediately extend to
exponential-family regression models with a general vector-valued sufficient
statistic $T(x,y)$.  In this case, the conditional likelihood becomes
\begin{equation*}
  % \label{eq:exponential-family-likelihood}
  p_{\theta}(y \mid x) = \exp\left( T(x,y)^\top \theta - \psi(\theta, x) \right),
\end{equation*}
where $\psi(\theta,x) = \log \int \exp(T(x,y)^\top \theta) d\nu(y \mid x)$
for some carrier measure $\nu$ on $Y$.
The log-partition $\psi(\theta, x)$ is $\mc{C}^\infty$
and convex in $\theta$~\cite{Brown86}, and it 
induces the convex loss
\begin{equation}
  \label{eq:exponential-family-loss}
  \loss_\theta(x,y) = -T(x,y)^\top \theta + \psi(\theta,x).
\end{equation}
Taking $T(x,y)=y x$ recovers the GLM losses~\eqref{eq:glm-loss}.

\begin{example}[Multiclass logistic regression]
  In $k$-class multiclass logistic regression with covariates $x \in \R^d$,
  the parameter vector $\theta$ is a $kd$-dimensional vector whose $y$th
  block $\theta_y \in \R^d$ of components represent parameters for class $y
  \in [k]$. The sufficient statistic is $T(x,y) = E_y x$, where $E_y$ is the
  matrix $(\mathbf{0}_{d\times (y-1)d}, \mathbf{I}_{d \times d},
  \mathbf{0}_{d\times (K-y)d})^\top$, and
  \begin{equation*}
    \loss_\theta(x, y)
    = -x^\top \theta_y + \log\bigg(\sum_{y' = 1}^k \exp(x^\top \theta_{y'})\bigg).
  \end{equation*}
  Thus $\psi(\theta, x) = \log(\sum_{y' = 1}^k e^{x^\top \theta_{y'}})$ is the
  log-partition function.
\end{example}

The linear plus convex representation of the exponential-family
loss~\eqref{eq:exponential-family-loss} means
it too admits the same computational efficiencies
as the basic GLM approach.

% -*- mode: latex -*- %

\newcommand{\lipobj}{M}

\section{\ppipptitle~for general M-estimators}
\label{sec:ppipp-m-estimation}

Our results on generalized linear models follow from more general
results on M-estimators, which we turn to here. We consider convex
losses $\loss_\theta(x, y)$, and we continue to write
\begin{equation*}
  \theta\opt \defeq \argmin_\theta \E [\loss_\theta(X,Y)]
\end{equation*}
as the target of interest.
Our main theorem applies to what we term \emph{smooth enough} losses, which
roughly satisfy that $\loss_\theta$ is differentiable at $\theta\opt$ with
probability 1 and is locally Lipschitz; see
Definition~\ref{definition:smooth-enough} in
Appendix~\ref{sec:proof-normality}.
%
%% We use the short-hand notation $\ell_{\theta,i} =
%% \ell_{\theta}(X_i,Y_i)$, $\ell_{\theta,i}^f = \ell_{\theta}(X_i,f(X_i))$,
%% and $\widetilde \ell_{\theta,i}^f = \ell_{\theta}(\widetilde
%% X_i,f(\widetilde X_i))$. We will also sometimes use $\ell_{\theta}$ and
%% $\ell_{\theta}^f$ to denote the respective losses applied to a generic
%% draw from $\P$, e.g. $\ell_{\theta} \equiv \ell_{\theta,1}$ and
%% $\ell_{\theta}^f \equiv \ell^f_{\theta,1}$. We use analogous notation for
%% gradients, $\nabla\ell_{\theta,i}, \nabla\ell_{\theta,i}^f,
%% \nabla\widetilde \ell_{\theta,i}^f$, etc.
%
Smooth enough losses admit the following general asymptotic normality
result, whose proof we defer to Appendix~\ref{sec:proof-normality}. We use
the short-hand notation $\nabla\ell_{\theta} = \nabla\ell_{\theta}(X,Y)$ and
$\nabla \ell_{\theta}^f = \nabla \ell_{\theta}(X,f(X))$.

\begin{theorem}
  \label{theorem:normality}
  Assume that $\hat\lambda = \lambda + o_P(1)$ and that
  $\frac{n}{N}\rightarrow r$ for some $r\geq 0$. Let the losses
  $\loss_\theta$ be
  smooth enough (Definition~\ref{definition:smooth-enough}) and let $H_{\theta\opt} \defeq \nabla^2 L(\theta\opt)$.
  Define the covariance matrices
  \begin{align*}
    V_{f,\theta\opt}^\lambda \defeq \lambda^2 \cov(\nabla \loss_{\theta\opt}^f)\text{ and } V_{\Delta,\theta\opt}^\lambda  \defeq \cov( \nabla \loss_{\theta\opt} -  \lambda \nabla \loss_{\theta\opt}^f ).
  \end{align*}
  If the
  consistency $\thetaPP_{\hat \lambda} \stackrel{p}{\to} \theta\opt$ holds,
  then
  \begin{equation*}
    \sqrt{n}(\thetaPP_{\hat \lambda} - \theta\opt) \cd \normal\left(0, \Sigma^\lambda\right),
  \end{equation*}
  where 
  \begin{equation}
  \label{eqn:sigma-lambda}	
\Sigma^\lambda \defeq H_{\theta\opt}^{-1} \left(r \cdot
    V_{f,\theta\opt}^\lambda +
    V_{\Delta,\theta\opt}^\lambda
    \right) H_{\theta\opt}^{-1}.
    \end{equation}
\end{theorem}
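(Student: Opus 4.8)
The plan is to run a standard $M$-estimation (empirical-process) asymptotic-normality argument on the rectified criterion $\LPP_{\hat\lambda}$, adapted to the two features specific to our setting: the data-dependent weight $\hat\lambda$, and the two independent samples (labeled and unlabeled) that enter the objective at different scales $n$ and $N$. First I would reduce the claim to a linear (influence-function) expansion of $\sqrt{n}(\thetaPP_{\hat\lambda}-\theta\opt)$ about $\theta\opt$; then I would identify the limiting law of the leading linear term by a central-limit computation that exploits the independence of the two samples, and finally sandwich with the inverse Hessian.

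For the expansion, I would combine the consistency hypothesis $\thetaPP_{\hat\lambda}\to_p\theta\opt$ with the smooth-enough conditions (Definition~\ref{definition:smooth-enough}) to justify a quadratic approximation of $\LPP_{\hat\lambda}$ near $\theta\opt$ of the van der Vaart type (Theorem 5.23 of \cite{VanDerVaart98}). A crucial simplification occurs in the curvature: the Hessian of $\LPP_{\hat\lambda}$ at $\theta\opt$ is $\nabla^2 L_n(\theta\opt) + \hat\lambda\big(\nabla^2\wt L^f_N(\theta\opt) - \nabla^2 L^f_n(\theta\opt)\big)$, and since $\nabla^2\wt L^f_N(\theta\opt)$ and $\nabla^2 L^f_n(\theta\opt)$ both converge in probability to the same population Hessian $\nabla^2 L^f(\theta\opt)$, their difference is $o_P(1)$. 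Hence the curvature converges to $H_{\theta\opt}=\nabla^2 L(\theta\opt)$ regardless of the value of $\hat\lambda$ (using invertibility of $H_{\theta\opt}$), giving
\[
\sqrt{n}(\thetaPP_{\hat\lambda}-\theta\opt) = -H_{\theta\opt}^{-1}\,\sqrt{n}\,\nabla\LPP_{\hat\lambda}(\theta\opt) + o_P(1).
\]

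It then remains to find the limit of $\sqrt{n}\,\nabla\LPP_{\hat\lambda}(\theta\opt) = \sqrt{n}\,\nabla L_n(\theta\opt) + \hat\lambda\,\sqrt{n}\big(\nabla\wt L^f_N(\theta\opt)-\nabla L^f_n(\theta\opt)\big)$. Write $\mu_f \defeq \nabla L^f(\theta\opt) = \E[\nabla\loss_{\theta\opt}(X,f(X))]$, and note $\E[\nabla\loss_{\theta\opt}(X,Y)]=\nabla L(\theta\opt)=0$ by first-order optimality. Centering the prediction gradients by $\mu_f$, the two prediction terms share the same mean, so the $\mu_f$ contributions cancel in their difference; I would then regroup the labeled contributions as $\frac{1}{\sqrt n}\sum_{i=1}^n\big(\nabla\loss_{\theta\opt}(X_i,Y_i) - \hat\lambda(\nabla\loss_{\theta\opt}(X_i,f(X_i))-\mu_f)\big)$ and the unlabeled contribution as $\hat\lambda\sqrt{n/N}\cdot\frac{1}{\sqrt N}\sum_{i=1}^N\big(\nabla\loss_{\theta\opt}(\wt X_i,f(\wt X_i))-\mu_f\big)$. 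Applying the CLT to each group, and using $\hat\lambda\to_p\lambda$, $n/N\to r$, and Slutsky, the labeled group converges to $\normal(0,V_{\Delta,\theta\opt}^\lambda)$ since $\cov(\nabla\loss_{\theta\opt}-\lambda(\nabla\loss_{\theta\opt}^f-\mu_f))=\cov(\nabla\loss_{\theta\opt}-\lambda\nabla\loss_{\theta\opt}^f)$, while the unlabeled group converges to $\normal(0,r\,V_{f,\theta\opt}^\lambda)$ since the scalar factor tends to $\lambda\sqrt r$ and the summand has covariance $\cov(\nabla\loss_{\theta\opt}^f)$. By independence of the labeled and unlabeled samples these two limits are independent, so their sum has covariance $r V_{f,\theta\opt}^\lambda + V_{\Delta,\theta\opt}^\lambda$; applying $-H_{\theta\opt}^{-1}$ on both sides yields $\Sigma^\lambda$ and finishes the argument. (The $r=0$ case, where $N$ grows faster than $n$, is covered, as the unlabeled variance then vanishes.)

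The main obstacle is making the quadratic expansion rigorous for smooth-enough losses, which need not be differentiable everywhere: the first-order/stationarity heuristic used above must be replaced by the empirical-process argument, where local Lipschitzness and almost-sure differentiability at $\theta\opt$ deliver the stochastic equicontinuity needed to control the expansion's remainder uniformly over shrinking neighborhoods of $\theta\opt$. A secondary, easier subtlety is the random weight $\hat\lambda$: because $\sqrt{n}\big(\nabla\wt L^f_N(\theta\opt)-\nabla L^f_n(\theta\opt)\big)$ is $O_P(1)$ and the curvature difference is $o_P(1)$ at the fixed point $\theta\opt$, writing $\hat\lambda=\lambda+o_P(1)$ shows that replacing $\hat\lambda$ by $\lambda$ costs only $o_P(1)$, so Slutsky applies cleanly throughout.
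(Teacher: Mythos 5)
Your proposal is correct and takes essentially the same route as the paper's proof: a van der Vaart Theorem 5.23-style expansion of $\LPP_{\hat\lambda}$ in which the $\nabla^2 L^f$ curvature contributions of the two prediction terms cancel, yielding the linearization $\sqrt{n}(\thetaPP_{\hat\lambda}-\theta\opt) = -H_{\theta\opt}^{-1}\sqrt{n}\,\nabla\LPP_{\hat\lambda}(\theta\opt)+o_P(1)$, followed by a two-sample CLT with Slutsky's lemma to absorb $\hat\lambda = \lambda + o_P(1)$. The obstacle you flag---replacing the stationarity/Hessian heuristic with a stochastic-equicontinuity argument based on local Lipschitzness and almost-sure differentiability at $\theta\opt$---is exactly how the paper proceeds, via \citet[Lemma 19.31]{VanDerVaart98}.
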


The conditions sufficient for Theorem \ref{theorem:normality} to hold are
standard and fairly mild. As is familiar from classical asymptotic theory,
the one that is least straightforward to verify is the consistency of
$\thetaPP_{\hat \lambda}$, so here we give a few sufficient conditions. At a
high level, if $\LPP_\lambda$ is convex---as in the case of mean estimation
or GLMs---or the domain $\Theta$ of $\theta$ is compact, consistency holds
under mild additional regularity. Below we state the guarantee of
consistency for convex $\LPP_\lambda$ (providing a proof in
Appendix~\ref{sec:proof-consistency-convex}); for the treatment when
$\Theta$ is compact, see \citet[Chapter 5]{VanDerVaart98}.

\begin{proposition}
  \label{prop:consistency-convex}
  Assume that $\LPP_\lambda(\theta)$ is convex in $\theta$ with probability
  $1$ and that $\theta\opt$ is a unique minimum of $L(\theta)$.  Suppose
  also that $\loss_{\theta}$ is locally Lipschitz near $\theta\opt$
  (in the sense of
  Definition~\ref{definition:smooth-enough}\ref{item:locally-lipschitz}).
  Then, $\thetaPP_{\lambda} \cp
  \theta\opt$. If $\hat{\lambda} \cp \lambda$, then
  so long as $\P(\LPP_{\hat{\lambda}} ~\mbox{is~convex}) \to 1$, the same
  result holds.
\end{proposition}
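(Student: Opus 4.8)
The plan is to regard $\thetaPP_\lambda$ as the minimizer of the random convex function $\LPP_\lambda$ and to deduce $\thetaPP_\lambda \cp \theta\opt = \argmin L$ from the convergence of $\LPP_\lambda$ to $L$, using the classical convexity lemma for convex stochastic processes (Pollard; Andersen--Gill; Niemiro; Hjort--Pollard). First I would establish \emph{pointwise} convergence $\LPP_\lambda(\theta) \cp L(\theta)$ for each fixed $\theta$ in a neighborhood of $\theta\opt$. Writing $\LPP_\lambda(\theta) = L_n(\theta) + \lambda\,(\wt L_N^f(\theta) - L_n^f(\theta))$ and noting that $n/N \to r$ forces $N \to \infty$ as $n \to \infty$, the law of large numbers gives $L_n(\theta) \cp L(\theta)$, $L_n^f(\theta) \cp L^f(\theta)$, and $\wt L_N^f(\theta) \cp L^f(\theta)$; the last two cancel, leaving $\LPP_\lambda(\theta) \cp L(\theta)$. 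The integrability needed to apply the LLN at each such $\theta$ is supplied by the local Lipschitz hypothesis (Definition~\ref{definition:smooth-enough}\ref{item:locally-lipschitz}), which bounds $\loss_\theta$ near $\theta\opt$ by an integrable envelope and, in particular, makes $L$ finite and convex in a neighborhood of $\theta\opt$.

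Second, I would upgrade pointwise to \emph{locally uniform} convergence via the convexity lemma: a sequence of convex functions converging in probability pointwise to a finite convex limit on a dense set converges uniformly in probability on every compact subset of the limit's domain. Since each $\LPP_\lambda$ is convex by assumption and $L$ is convex and finite near $\theta\opt$, this yields $\sup_{\theta \in \bar B}|\LPP_\lambda(\theta) - L(\theta)| \cp 0$ on any closed ball $\bar B = \bar B(\theta\opt, \epsilon)$.

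Third, I would convert uniform convergence into convergence of minimizers by the standard ``sandwich on the sphere'' argument, which is where convexity does the real work. Fix $\epsilon > 0$; by uniqueness of $\theta\opt$ and continuity of the convex $L$, the gap $\delta \defeq \inf_{\|\theta - \theta\opt\| = \epsilon} L(\theta) - L(\theta\opt)$ is strictly positive. On the event $\{\sup_{\bar B}|\LPP_\lambda - L| < \delta/3\}$, whose probability tends to $1$, suppose $\thetaPP_\lambda$ lay outside $B(\theta\opt,\epsilon)$. Projecting radially onto the sphere at $\tilde\theta = (1-t)\theta\opt + t\,\thetaPP_\lambda$ with $t = \epsilon/\|\thetaPP_\lambda - \theta\opt\| \le 1$, convexity and minimality give $\LPP_\lambda(\tilde\theta) \le \LPP_\lambda(\theta\opt)$, while the uniform bound forces $\LPP_\lambda(\tilde\theta) \ge L(\theta\opt) + 2\delta/3 > L(\theta\opt) + \delta/3 \ge \LPP_\lambda(\theta\opt)$, a contradiction. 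Hence $\|\thetaPP_\lambda - \theta\opt\| < \epsilon$ with probability tending to $1$, i.e. $\thetaPP_\lambda \cp \theta\opt$. For the data-dependent case, I would write $\LPP_{\hat\lambda}(\theta) = \LPP_\lambda(\theta) + (\hat\lambda - \lambda)(\wt L_N^f(\theta) - L_n^f(\theta))$, where the extra term is a product of two $o_P(1)$ factors, so pointwise convergence $\LPP_{\hat\lambda}(\theta) \cp L(\theta)$ persists; since $\LPP_{\hat\lambda}$ is convex only with probability tending to $1$, I would apply the convexity lemma to the modified functions that equal $\LPP_{\hat\lambda}$ on the convexity event and equal the convex limit $L$ off it, then repeat the argmin argument on the intersection of the convexity and uniform-convergence events.

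The main obstacle is the second step, passing from pointwise to locally uniform convergence. I expect to lean on the stochastic convexity lemma rather than a bracketing or entropy argument, which is exactly what lets the convexity hypothesis carry the weight of the proof; the only additional wrinkle is handling the merely asymptotically convex objective $\LPP_{\hat\lambda}$, which the truncation/modification device above resolves cleanly.
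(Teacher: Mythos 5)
Your proof is correct, and it reaches the same endgame as the paper by a partially different route. The final step is essentially identical: your ``sandwich on the sphere'' with the radial projection $\tilde\theta = (1-t)\theta\opt + t\,\thetaPP_\lambda$ is exactly the paper's Lemma~\ref{lemma:convexity_growth} (the rearranged convexity inequality), used the same way to exclude all minimizers outside the $\epsilon$-ball once a gap $\delta$ holds on the sphere. Where you genuinely diverge is in how the uniform convergence $\sup_{\|\theta-\theta\opt\|\le\epsilon}|\LPP_\lambda(\theta)-L(\theta)| \cp 0$ is obtained: the paper derives it from the local Lipschitz hypothesis via a standard covering argument, so Lipschitzness carries the uniformity; you instead establish only pointwise LLN convergence and invoke the Pollard/Andersen--Gill convexity lemma, so convexity carries the uniformity and Lipschitzness is demoted to supplying integrable envelopes and finiteness of $L$ near $\theta\opt$. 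The treatment of the data-dependent $\hat\lambda$ also differs: the paper reruns its covering argument uniformly over a neighborhood of $\lambda$ and notes $\hat\lambda$ lands there with probability tending to one, whereas you decompose $\LPP_{\hat\lambda} = \LPP_\lambda + (\hat\lambda-\lambda)(\wt L_N^f - L_n^f)$, observe the correction is $o_P(1)$ pointwise, and restore almost-sure convexity with a modification device (replace $\LPP_{\hat\lambda}$ by $L$ off the convexity event) before applying the convexity lemma---this is clean and valid. The trade-off: your route exploits convexity twice and thus needs less from the smoothness hypothesis, making it marginally more general for convex losses; the paper's covering argument is more self-contained (no appeal to the stochastic convexity lemma) and extends naturally to uniformity in the tuning parameter, which is precisely what its $\hat\lambda$ argument requires. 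One small implicit assumption in both arguments, worth flagging: the pointwise LLN at fixed $\theta$ needs $\E|\loss_{\theta\opt}(X,Y)| < \infty$ in addition to the Lipschitz envelope, which is tacitly assumed since $L(\theta\opt)$ is the finite minimum.
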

\noindent
Because GLMs~\eqref{eq:glm-loss} and~\eqref{eq:exponential-family-loss}
have a convex loss $\LPP_\lambda(\theta)$
for $\lambda \in [0, 1]$, Theorem~\ref{theorem:normality} and
Proposition~\ref{prop:consistency-convex} combine to imply the asymptotic
normality result of Corollary~\ref{cor:normality_glms}.

With Theorem \ref{theorem:normality} in hand, we can generalize \ppipp developed for GLMs to general convex
M-estimators. In Algorithm~\ref{alg:general_ppi} we state the main
procedure, and Corollary~\ref{cor:validity_general} gives its validity.
Analogously to last section, $\widehat\Cov_n(\nabla
\loss_{\theta}),\widehat\Cov_{N+n}(\nabla \loss^f_{\theta})$, and
$\widehat\Cov_n(\nabla \loss_{\theta} - \hat \lambda \nabla
\loss_{\theta}^f)$ denote empirical covariance matrices computed using the
labeled data or the labeled and unlabeled data combined.

\begin{algorithm}[t]
\setstretch{1.3}
\caption{\ppipp: Prediction-powered inference for general M-estimators}
\label{alg:general_ppi}
\begin{algorithmic}[1]
  \Require labeled data $\{(X_i, Y_i)\}_{i=1}^n$, unlabeled data
  $\{\Xt_i\}_{i=1}^N$, predictive model $f$, error level $\alpha\in~(0,1)$,
  coefficient $j\in[d]$ of interest
  \State Select tuning parameter $\hat\lambda$
  \Comment{set tuning parameter}
  \State $\thetaPP_{\hat\lambda} = \argmin_\theta \LPP_{\hat\lambda}(\theta)$
  \Comment{prediction-powered point estimate}
  \State $\widehat H = \frac{1}{n} \sum_{i=1}^n \nabla^2 \ell_{\thetaPP_{\hat\lambda}}(X_i, Y_i)$
  \State $\widehat V_f = \hat \lambda^2 \widehat\Cov_{N+n} (\nabla \ell_{\thetaPP_{\hat\lambda}}^f)$ 
  \State $\widehat V_\Delta = \widehat\Cov_{n} (\nabla \ell_{\thetaPP_{\hat\lambda}} - \hat\lambda \nabla \ell_{\thetaPP_{\hat\lambda}}^f)$ 
  \State $\widehat\Sigma = \widehat H^{-1}( \frac{n}{N} \widehat V_f + \widehat V_\Delta) \widehat H^{-1}$ \Comment{covariance estimate}
  \Ensure 
  prediction-powered confidence set $\CPP_\alpha = \left(\thetaPP_{\hat\lambda, j} \pm z_{1-\alpha/2} \sqrt{\frac{\widehat\Sigma_{jj}}{n}} \right)$
\end{algorithmic}
\end{algorithm}

\begin{corollary}
  \label{cor:validity_general}
  Let the assumptions of Theorem~\ref{theorem:normality} hold and
  assume $\nabla^2 \loss_\theta$ is continuous in $\theta$.
  Then the
  prediction-powered confidence interval in Algorithm \ref{alg:general_ppi}
  has valid coverage: for any $j \in [d]$,
  \begin{equation*}
    \lim_{n} \P\left(\theta\opt_j \in \CPP_\alpha \right) = 1-\alpha.
  \end{equation*}
\end{corollary}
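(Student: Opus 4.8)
The plan is to recognize that $\CPP_\alpha$ is a standard Wald-type interval, so its validity follows by combining the asymptotic normality of Theorem~\ref{theorem:normality} with consistency of the plug-in variance estimate $\widehat\Sigma_{jj}$ through Slutsky's lemma. Concretely, Theorem~\ref{theorem:normality} gives $\sqrt{n}(\thetaPP_{\hat\lambda} - \theta\opt) \cd \normal(0, \Sigma^\lambda)$, and reading off the $j$th coordinate yields $\sqrt{n}(\thetaPP_{\hat\lambda,j} - \theta\opt_j) \cd \normal(0, \Sigma^\lambda_{jj})$. Once I show $\widehat\Sigma_{jj} \cp \Sigma^\lambda_{jj}$, Slutsky's lemma gives $\sqrt{n}(\thetaPP_{\hat\lambda,j} - \theta\opt_j)/\sqrt{\widehat\Sigma_{jj}} \cd \normal(0,1)$, and since $\theta\opt_j \in \CPP_\alpha$ is equivalent to $|\sqrt{n}(\thetaPP_{\hat\lambda,j} - \theta\opt_j)/\sqrt{\widehat\Sigma_{jj}}| \le z_{1-\alpha/2}$, the coverage probability converges to $1-\alpha$ by continuity of the standard normal CDF at $\pm z_{1-\alpha/2}$.

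The bulk of the work is therefore establishing $\widehat\Sigma \cp \Sigma^\lambda$, which by the continuous mapping theorem (matrix inversion being continuous at the nonsingular limit $H_{\theta\opt}$) reduces to the three convergences $\widehat H \cp H_{\theta\opt}$, $\widehat V_f \cp V_{f,\theta\opt}^\lambda$, and $\widehat V_\Delta \cp V_{\Delta,\theta\opt}^\lambda$, together with $\frac{n}{N} \to r$, which holds by assumption. For $\widehat H = \frac1n \sum_{i=1}^n \nabla^2 \ell_{\thetaPP_{\hat\lambda}}(X_i,Y_i)$, I would combine the assumed continuity of $\theta \mapsto \nabla^2 \loss_\theta$, the consistency $\thetaPP_{\hat\lambda} \cp \theta\opt$ supplied in the hypotheses, and a law of large numbers to obtain $\widehat H \cp \E[\nabla^2 \loss_{\theta\opt}(X,Y)] = \nabla^2 L(\theta\opt) = H_{\theta\opt}$. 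The arguments for $\widehat V_f$ and $\widehat V_\Delta$ are analogous: each empirical covariance is a smooth function of sample averages of the gradient maps $\nabla \ell_{\thetaPP_{\hat\lambda}}$, $\nabla \ell^f_{\thetaPP_{\hat\lambda}}$ and their outer products, so I would use continuity of these maps, $\hat\lambda \cp \lambda$, and $\thetaPP_{\hat\lambda} \cp \theta\opt$ to replace the random evaluation point by $\theta\opt$ in the limit, then invoke the law of large numbers (noting $\frac{n}{N}\to r$ with $r$ finite forces $N \to \infty$, so the combined covariance over $N+n$ samples is consistent as well).

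The main obstacle is that all three estimators are evaluated at the data-dependent point $\thetaPP_{\hat\lambda}$ rather than at the fixed $\theta\opt$, so a naive pointwise law of large numbers does not directly apply. The cleanest route is a uniform argument: on a shrinking neighborhood of $\theta\opt$, the local Lipschitz and smoothness properties packaged in Definition~\ref{definition:smooth-enough} furnish an integrable envelope for the relevant summands, yielding a uniform law of large numbers over that neighborhood; combined with $\thetaPP_{\hat\lambda} \cp \theta\opt$ this justifies interchanging the random argument with its limit. Equivalently, one can decompose each estimator as its value at $\theta\opt$, handled by the ordinary law of large numbers, plus a remainder controlled by the modulus of continuity of the summands and the consistency of $\thetaPP_{\hat\lambda}$. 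Assembling these pieces, the continuous mapping theorem delivers $\widehat\Sigma \cp \Sigma^\lambda$, and the Slutsky argument above completes the proof of the claimed coverage.
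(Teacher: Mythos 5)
Your proof is correct and is essentially the paper's own (implicit) argument: the paper gives no separate proof of Corollary~\ref{cor:validity_general}, asserting it as the standard Wald/Slutsky consequence of Theorem~\ref{theorem:normality} together with consistency of the plug-in covariance $\widehat\Sigma$, which is exactly what you spell out, including the uniform-law-of-large-numbers step needed because $\widehat H$, $\widehat V_f$, and $\widehat V_\Delta$ are evaluated at the random point $\thetaPP_{\hat\lambda}$. One small caveat, shared with the paper's informal statement: the integrable envelope you invoke for $\widehat H$ is not literally supplied by Definition~\ref{definition:smooth-enough}, whose square-integrable Lipschitz constants $\lipobj(x,y)$ bound gradients rather than Hessians, so a fully rigorous treatment would add a domination hypothesis such as $\E[\sup_{\theta \in B}\|\nabla^2 \loss_\theta(X,Y)\|] < \infty$ on a neighborhood $B$ of $\theta\opt$.
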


If we seek a confidence set for the whole vector $\theta\opt$, rather than a
single coordinate, many choices are
possible for a resulting confidence set. The ellipse
$\mc{E}_{\Sigma,\alpha} = \{v \in \R^d \mid v^\top \Sigma^{-1} v \le
\chi^2_{d,1-\alpha}\}$, where $\chi^2_{d,1-\alpha}$ denotes the $(1-\alpha)$-quantile
of the $\chi^2$ distribution with $d$ degrees of freedom,
satisfies $\P(Z \in \mc{E}_{\Sigma,\alpha}) = 1 - \alpha$ for
$Z \sim \normal(0, \Sigma)$. Thus
we can modify Algorithm~\ref{alg:general_ppi} to return
\begin{equation*}
  \CPP_\alpha = \thetaPP_{\hat\lambda}
  + \mc{E}_{\what{\Sigma} / n, \alpha},
\end{equation*}
which yields $\lim_n \P(\theta\opt \in \CPP_\alpha) = 1 - \alpha$.
Alternatively, we can apply the union bound, which gives rectangular sets of the form
\begin{equation*}
  \CPP_\alpha = \left[\thetaPP_{\hat\lambda,j} \pm \frac{z_{1-\alpha/(2d)}}{\sqrt{n}}
    \sqrt{\what{\Sigma}_{jj}} \right]_{j = 1}^d.
\end{equation*}
These sets guarantee $\liminf_n \P(\theta\opt \in \CPP_\alpha) \ge 1 - \alpha$.

\section{Equivalence to confidence sets from testing}
\label{sec:equivalence}

As we discuss in the introduction, the original PPI
algorithm~\cite{AngelopoulosBaFaJoZr23} constructs confidence sets by testing the null that $\nabla L(\theta) = 0$ individually for each $\theta \in \R^d$.
A few special cases, including means, one-dimensional quantiles, and linear
regression coefficients, admit efficient implementations; however, the
general algorithm is intractable. Algorithm~\ref{alg:general_ppi}, on the
other hand, is computationally straightforward.
This computational efficiency comes at no cost in terms of statistical
power: we show that the confidence sets from Theorem~\ref{theorem:normality}
with $\lambda = 1$ are asymptotically equivalent to a careful choice of
confidence sets~\eqref{eq:CPP-original} obtained by testing $\nabla
L(\theta) = 0$.

To conduct the comparison, we study the asymptotic distribution of the test
statistics the two approaches rely on. Because the original
prediction-powered inference only considers the unweighted variant ($\lambda
= 1$), we focus on the estimate~\eqref{eq:ppi} and not~\eqref{eq:ppi-lambda}.
Theorem~\ref{theorem:normality} suggests that
given any estimate $\Sigmahat \cp \Sigma$, we consider the
test statistic
\begin{equation*}
  T_n(\theta) \defeq \sqrt{n} \Sigmahat^{-1/2} (\thetaPP - \theta).
\end{equation*}
For the original approach~\eqref{eq:CPP-original}, letting
$\VhatDelta(\theta) := \widehat\Var_n(\nabla \loss_{\theta} - \nabla
\loss_{\theta}^f)$ and $\Vhatf(\theta) := \widehat\Var_{N} (\nabla
\loss_{\theta}^f)$, the relevant test statistic is
\begin{equation*}
  U_n(\theta) \defeq \sqrt{n} \left(\VhatDelta(\theta) + \frac n N \Vhatf (\theta) \right)^{-1/2}
  \nabla \LPP(\theta).
\end{equation*}
Both approaches leverage pivotal asymptotic normality; in particular,
\begin{equation*}
  T_n(\theta\opt) \cd \normal(0, I_d)
  ~~ \mbox{and} ~~
  U_n(\theta\opt) \cd \normal(0, I_d).
\end{equation*}
Therefore, each implies an asymptotically exact $\chi^2$-based
$(1 - \alpha)$-confidence set
\begin{equation*}
\C_\alpha^{\mathrm{PP},T} \defeq \left\{\theta : \ltwo{T_n(\theta)}^2
  \le \chi^2_{d,1-\alpha}\right\}
  ~~ \mbox{and} ~~
\C_\alpha^{\mathrm{PP},U} \defeq \left\{\theta : \ltwo{U_n(\theta)}^2 \le \chi^2_{d,1-\alpha}
  \right\},
\end{equation*}
which satisfy $\lim_n \P(\theta\opt \in \C_\alpha^{\textup{PP},T}) = 1 -
\alpha$ and $\lim_n \P(\theta\opt \in \C_\alpha^{\textup{PP},U}) = 1 -
\alpha$. We remark in passing that $\C_\alpha^{\textup{PP},U}$ has
asymptotically correct level $\alpha$, while the original PPI
approach~\cite[Thm.~B.1]{AngelopoulosBaFaJoZr23} applies a union bound over
the $d$ coordinates and is thus conservative, making $\C_\alpha^{\textup{PP},U}$
tighter than the original PPI confidence set and the coming
equivalence stronger.

Theorem \ref{theorem:confidence-set-equivalence} shows the asymptotic
equivalence of $\C_\alpha^{\textup{PP},T}$ and
$\C_\alpha^{\textup{PP},U}$. For this, we require an additional condition
beyond the losses $\ell_\theta$ being smooth enough
(Definition~\ref{definition:smooth-enough}), which we term \emph{stochastic
smoothness}. To avoid distracting technical definitions, we state it in
Assumption~\ref{assumption:donsker} in
Appendix~\ref{sec:proof-confidence-set-equivalence}, which includes the
proof of Theorem~\ref{theorem:confidence-set-equivalence}. The condition
holds if, for example, $\nabla \ell_{\theta}$ is locally Lipschitz in a
neighborhood of $\theta\opt$, but can also hold for non-differentiable
losses.

%\begin{theorem}
%  \label{theorem:confidence-set-equivalence}
%  Let Assumption~\ref{ass:donsker} hold and
%  assume that $\Theta = \R^d$, so that $\nabla L(\theta\opt) = 0$. Then
%  $C_{0,n}$ and $C_{1,n}$ are asymptotically equivalent in that
%  for any $c < \infty$, if $Z \sim \normal(0, I)$ then
%  \begin{equation*}
%    \limsup_n
%    \sup_{\norm{h} \le c}
%    \left|
%    \P(\theta\opt + h/\sqrt{n} \in C_{a,n})
%    - \P\big(\ltwobig{\Sigma^{-1/2} h + Z}^2
%    \le \chi^2_{d,1-\alpha} \big) \right|
%    = 0
%  \end{equation*}
%  for $a = 0, 1$.
%\end{theorem}

\begin{theorem}
  \label{theorem:confidence-set-equivalence}
  Let the loss functions $\loss_\theta$ be smooth enough
  (Definition~\ref{definition:smooth-enough}) and
  in addition assume they are stochastically smooth
  (Assumption~\ref{assumption:donsker}).  Then $\C_\alpha^{\mathrm{PP},T}$
  and $\C_\alpha^{\mathrm{PP},U}$ are asymptotically equivalent in that, for
  any $c < \infty$,
  \begin{equation*}
    \limsup_n
    \sup_{\norm{h} \le c}
    \left|
    \P(\theta\opt + h/\sqrt{n} \in \C_\alpha^{\mathrm{PP},T})
    - \P(\theta\opt + h/\sqrt{n} \in \C_\alpha^{\mathrm{PP},U}) \right|
    = 0.
  \end{equation*}
  Furthermore, for any $t_n \gg \frac{1}{\sqrt{n}}$ and $c>0$, $\limsup_n
    \sup_{\norm{h} \ge c}
    \P(\theta\opt + h t_n \in \C_\alpha^{\mathrm{PP},T})
    = 0.$
\end{theorem}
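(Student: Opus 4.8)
The plan is to reduce both squared test statistics $\ltwo{T_n(\theta\opt + h/\sqrt n)}^2$ and $\ltwo{U_n(\theta\opt + h/\sqrt n)}^2$ to one and the same quadratic form, up to an error that is $o_P(1)$ uniformly over $\norm{h}\le c$, and then transfer this equivalence to the coverage probabilities by an anti-concentration argument. Writing $H := H_{\theta\opt}$, $G_n := \sqrt n\,\nabla\LPP(\theta\opt)$, $W := V_{\Delta,\theta\opt}^{1} + r\,V_{f,\theta\opt}^{1}$, and $\theta_h := \theta\opt + h/\sqrt n$, the two ingredients are: (i) a Bahadur-type linearization of the point estimate, $\sqrt n(\thetaPP - \theta\opt) = -H^{-1}G_n + o_P(1)$, which is exactly the expansion underlying the proof of Theorem~\ref{theorem:normality} at $\lambda = 1$; and (ii) a local expansion of the rectified gradient, $\sqrt n\,\nabla\LPP(\theta_h) = G_n + Hh + o_P(1)$ uniformly in $\norm{h}\le c$.

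For ingredient (ii) I would split $\nabla\LPP(\theta_h)$ into its centered empirical part and its mean $\nabla L(\theta_h)$. The mean contributes $\sqrt n\,\nabla L(\theta_h) = Hh + o(1)$ by a second-order Taylor expansion of $L$ at $\theta\opt$ (using $\nabla L(\theta\opt)=0$ and $\nabla^2 L(\theta\opt)=H$), uniformly over the bounded set $\norm{h}\le c$. The centered empirical part is handled by stochastic smoothness (Assumption~\ref{assumption:donsker}): it is precisely the asymptotic equicontinuity statement $\sqrt n\,(\nabla\LPP-\nabla L)(\theta_h)=\sqrt n\,(\nabla\LPP-\nabla L)(\theta\opt)+o_P(1)$, uniform in $\norm{h}\le c$, that lets us replace the empirical gradient at $\theta_h$ by its value at $\theta\opt$, namely $G_n$. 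This empirical-process control is the main obstacle, and is the reason the extra stochastic-smoothness hypothesis is required; everything else is classical M-estimation asymptotics.

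Given (i) and (ii), substituting into the definitions and using consistency of the variance estimators, $\Sigmahat \cp \Sigma = H^{-1}WH^{-1}$ and $\VhatDelta(\theta_h) + \tfrac nN\Vhatf(\theta_h)\cp W$ (valid since $\theta_h\to\theta\opt$ uniformly), a short computation collapses both statistics to a common quadratic form,
\[
\ltwo{T_n(\theta_h)}^2 = (G_n + Hh)^\top W^{-1}(G_n + Hh) + o_P(1) = \ltwo{U_n(\theta_h)}^2 + o_P(1),
\]
where the algebraic identity $\Sigma^{-1}=HW^{-1}H$ is what makes the two normalizations agree. Because $G_n = O_P(1)$ and $\norm{h}\le c$ is bounded, all error terms are uniform in $h$, giving $\sup_{\norm{h}\le c}\big|\ltwo{T_n(\theta_h)}^2 - \ltwo{U_n(\theta_h)}^2\big| = o_P(1)$. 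To pass to probabilities, note that $\ltwo{U_n(\theta_h)}^2$ converges to a noncentral $\chi^2_d$ law with noncentrality $h^\top\Sigma^{-1}h$, whose CDF is continuous in both its argument and its noncentrality; since the noncentrality ranges over the compact set $\{h^\top\Sigma^{-1}h:\norm{h}\le c\}$, a standard $\delta$-sandwiching, bounding $\P(\ltwo{T_n}^2\le\chi^2_{d,1-\alpha})$ between $\P(\ltwo{U_n}^2\le\chi^2_{d,1-\alpha}\pm\delta)$ plus the uniform error probability, yields the first claim.

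For the second (separation) claim, fix $t_n$ with $\sqrt n\,t_n\to\infty$ and write $T_n(\theta\opt + h t_n) = \Sigmahat^{-1/2}\big(\sqrt n(\thetaPP - \theta\opt) - \sqrt n\,t_n h\big)$. The first summand is $O_P(1)$ by Theorem~\ref{theorem:normality}, while the norm of the second is at least $\lambda_{\min}(\Sigmahat^{-1/2})\,\sqrt n\,t_n\,\norm{h}\ge \lambda_{\min}(\Sigmahat^{-1/2})\,\sqrt n\,t_n\,c$. Since $\Sigmahat\cp\Sigma\succ 0$, $\lambda_{\min}(\Sigmahat^{-1/2})$ is bounded away from $0$ with probability tending to one, so by the reverse triangle inequality $\inf_{\norm{h}\ge c}\ltwo{T_n(\theta\opt + h t_n)}\to\infty$ in probability. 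Hence $\sup_{\norm{h}\ge c}\P(\ltwo{T_n(\theta\opt + h t_n)}^2\le\chi^2_{d,1-\alpha})\to 0$, which is the desired statement.
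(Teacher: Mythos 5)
Your proposal is correct, and it shares the paper's core ingredients---the Bahadur expansion $\sqrt{n}(\thetaPP - \theta\opt) = -H_{\theta\opt}^{-1}\sqrt{n}\,\nabla\LPP(\theta\opt) + o_P(1)$ from the proof of Theorem~\ref{theorem:normality}, consistency of the variance estimators at local alternatives (the paper's Lemma~\ref{lemma:variance-control}, via Donsker $\Rightarrow$ Glivenko--Cantelli for squares), and stochastic equicontinuity of the gradient process (the paper's Lemma~\ref{lemma:gradient-process-linear})---but the final mechanism is genuinely different. The paper never couples the two statistics: it shows separately that $T_n(\theta\opt + h_n) \cd \normal(-\Sigma^{-1/2}h, I)$ and $U_n(\theta\opt + h_n) \cd \normal((V_{\Delta,\theta\opt} + r V_{f,\theta\opt})^{-1/2} H_{\theta\opt} h, I)$, observes that the two mean vectors have equal norm (the same identity $h^\top H_{\theta\opt} W^{-1} H_{\theta\opt} h = h^\top \Sigma^{-1} h$ you use, with $W \defeq V_{\Delta,\theta\opt} + r V_{f,\theta\opt}$), and concludes by rotational invariance that both coverage probabilities converge to the common limit $\P(\ltwo{-\Sigma^{-1/2}h + Z}^2 \le \chi^2_{d,1-\alpha})$, with uniformity in $h$ obtained by a subsequence/compactification argument. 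You instead prove the stronger pathwise statement $\ltwo{T_n(\theta_h)}^2 = \ltwo{U_n(\theta_h)}^2 + o_P(1)$ uniformly over $\norm{h}\le c$, by reducing both to the quadratic form $(G_n + H_{\theta\opt}h)^\top W^{-1}(G_n + H_{\theta\opt}h)$, and then transfer to probabilities by anti-concentration. Your route makes the algebraic reason for equivalence ($\Sigma^{-1} = H_{\theta\opt} W^{-1} H_{\theta\opt}$) more transparent and shows the two tests are asymptotically the \emph{same} random variable, not merely equi-distributed; the price is that you need uniform-in-$h$ versions of the two lemmas (which do follow from asymptotic equicontinuity, since the relevant parameters lie within $c/\sqrt{n}$ of $\theta\opt$) and a uniform anti-concentration step, which you only sketch: to make the ``$\delta$-sandwiching'' rigorous you still need uniform-in-$h$ weak convergence of $\ltwo{U_n(\theta_h)}^2$ to the noncentral $\chi^2$ family, and the cleanest way to get that is precisely the compactness/subsequence argument the paper deploys, so the two proofs reconverge at that point. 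Your argument for the second (separation) claim is essentially identical to the paper's.
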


In words, the probability of including any point within $\sim
\frac{1}{\sqrt{n}}$ of $\theta\opt$ in the confidence set is the same for \ppipp (with no power tuning) and a sharper variant of the
PPI proposal~\eqref{eq:CPP-original}
based on point-wise testing.
Moreover, this neighborhood of $\theta\opt$ is
the only relevant region of the parameter space: all points more than $\sim
\frac{1}{\sqrt{n}}$ away from the optimum have vanishing probability of
inclusion.
When we tune the parameter $\lambda$, as we consider in the next section,
the confidence sets \ppipp yields are only smaller, yielding
further improvements.

% -*- mode: latex -*- %

\section{Power tuning}
\label{sec:setting-lambda}

The last component of \ppipp is to adaptively tune the weighting parameter
$\lambda$ and thus, in addition to computational benefits, to achieve
statistical efficiency improvements over ``standard'' prediction-powered
inference and classical inference.
Towards this goal, we show how to optimally tune the weighting parameter
$\lambda$.
Along the way we make a few theoretical remarks
on the consequent statistical efficiency (see also the
note~\cite{AngelopoulosDuZr23w}).
While we cannot expect \ppipp to guarantee
full statistical efficiency for any distribution---this would require
modeling assumptions that
\ppipp explicitly avoids by providing a wrapper around black-box
prediction models $f$---we demonstrate
that it always improves over classical inference and in some cases is
fully statistically efficient.

\subsection{The optimal weighting parameter $\lambda$}

Our first step is to derive the value $\lambda\opt$ of $\lambda$ minimizing
the asymptotic variance $\Tr (\Sigma^\lambda)$ of the prediction-powered
estimate, where we recall the asymptotic covariance
\begin{equation*}
  \Sigma^\lambda
  = H_{\theta\opt}^{-1}
  \left(r \lambda^2 \cov(\nabla \loss_{\theta\opt}^f)
  + \cov(\nabla \loss_{\theta\opt} - \lambda \nabla \loss_{\theta\opt}^f)
  \right) H_{\theta\opt}^{-1}
\end{equation*}
from definition~\eqref{eqn:sigma-lambda}. One could straightforwardly choose
alternative scalarizations of $\Sigma^\lambda$, such as the maximum diagonal
entry $\max_j \Sigma_{jj}^\lambda$, to yield different minimizing
$\lambda$, but the principles remain the same.
(See Appendix~\ref{sec:proof-optimal-lambda} for the quick proof.)

%% We derive $\lambda\opt$ based on the expression for $\Sigma^\lambda$ from Theorem \ref{theorem:normality}. Since $\thetaPP_\lambda$ could be multidimensional, we define $\lambda\opt$ as the minimizer of the cumulative variance:
%% \begin{equation}
%% \label{eq:optimization_problem_lambda}  
%% \lambda\opt = \argmin_{\lambda} \sum_{i=1}^d \Sigma_{ii}^\lambda.
%% \end{equation}
%% Analytically solving for $\lambda\opt$ in the problem \eqref{eq:optimization_problem_lambda} gives the optimal tuning. Below, $\Tr$ denotes the matrix trace.

\begin{proposition}
  \label{prop:optimal-lambda}
  The prediction-powered estimate $\thetaPP_\lambda$ has minimal asymptotic
  variance for
  \begin{equation*}
    \lambda\opt
    \defeq \argmin_\lambda \Tr(\Sigma^\lambda)
    = \frac{\Tr(H_{\theta\opt}^{-1}(
      \Cov(\nabla \loss_{\theta\opt}, \nabla \loss_{\theta\opt}^f)
      + \Cov(\nabla \loss_{\theta\opt}^f, \nabla \loss_{\theta\opt}))
      H_{\theta\opt}^{-1})}{
      2(1+r) \Tr(H_{\theta\opt}^{-1}\Cov(\nabla \loss_{\theta\opt}^f) H_{\theta\opt}^{-1} )}.
  \end{equation*}
\end{proposition}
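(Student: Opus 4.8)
The plan is to minimize the scalar objective $\Tr(\Sigma^\lambda)$ directly in $\lambda$, treating it as a smooth one-dimensional function and solving the first-order condition. The key observation is that $\Sigma^\lambda$ depends on $\lambda$ through only two terms, so $\Tr(\Sigma^\lambda)$ is a quadratic polynomial in $\lambda$ with positive leading coefficient, and its minimizer is found by setting the derivative to zero.

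First I would expand $\Tr(\Sigma^\lambda)$ into its constituent pieces by linearity of trace. Writing $G \defeq H_{\theta\opt}^{-1}$ for brevity, we have
\[
\Tr(\Sigma^\lambda) = r\lambda^2 \Tr\!\big(G\,\Cov(\nabla\loss_{\theta\opt}^f)\,G\big) + \Tr\!\big(G\,\Cov(\nabla\loss_{\theta\opt} - \lambda\nabla\loss_{\theta\opt}^f)\,G\big).
\]
The second term I would expand using bilinearity of covariance: $\Cov(A - \lambda B) = \Cov(A) - \lambda\,\Cov(A,B) - \lambda\,\Cov(B,A) + \lambda^2\Cov(B)$, with $A = \nabla\loss_{\theta\opt}$ and $B = \nabla\loss_{\theta\opt}^f$. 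Collecting powers of $\lambda$, the terms constant in $\lambda$ drop out of the optimization, the coefficient of $\lambda^2$ becomes $(1+r)\Tr(G\,\Cov(\nabla\loss_{\theta\opt}^f)\,G)$, and the coefficient of $\lambda$ is $-\Tr\!\big(G(\Cov(\nabla\loss_{\theta\opt},\nabla\loss_{\theta\opt}^f) + \Cov(\nabla\loss_{\theta\opt}^f,\nabla\loss_{\theta\opt}))G\big)$.

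Next I would set the derivative with respect to $\lambda$ to zero. Since the quadratic coefficient is nonnegative—$\Cov(\nabla\loss_{\theta\opt}^f)$ is positive semidefinite and conjugation by the symmetric $G$ preserves this, so its trace is nonnegative, and the $(1+r)$ factor is positive—the stationary point is the unique global minimizer (assuming the covariance of $\nabla\loss_{\theta\opt}^f$ is nondegenerate, so the leading coefficient is strictly positive). Solving $\frac{d}{d\lambda}\Tr(\Sigma^\lambda) = 0$ yields
\[
\lambda\opt = \frac{\Tr\!\big(G(\Cov(\nabla\loss_{\theta\opt},\nabla\loss_{\theta\opt}^f) + \Cov(\nabla\loss_{\theta\opt}^f,\nabla\loss_{\theta\opt}))G\big)}{2(1+r)\Tr\!\big(G\,\Cov(\nabla\loss_{\theta\opt}^f)\,G\big)},
\]
matching the claimed expression after substituting $G = H_{\theta\opt}^{-1}$.

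There is no serious obstacle here; the only points requiring minor care are the correct bookkeeping of the symmetric cross-covariance terms (keeping both $\Cov(\nabla\loss_{\theta\opt},\nabla\loss_{\theta\opt}^f)$ and its transpose, which together contribute the factor making the numerator symmetric) and confirming that the $r\lambda^2$ term from $V_{f,\theta\opt}^\lambda$ combines with the $\lambda^2\Cov(B)$ term from $V_{\Delta,\theta\opt}^\lambda$ to produce the $(1+r)$ prefactor in the denominator. The whole argument is a routine univariate quadratic minimization.
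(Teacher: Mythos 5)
Your proof is correct and follows essentially the same route as the paper's: expand $\Cov(\nabla\loss_{\theta\opt} - \lambda\nabla\loss_{\theta\opt}^f)$ by bilinearity, use linearity of the trace to reduce $\Tr(\Sigma^\lambda)$ to a quadratic in $\lambda$ with leading coefficient $(1+r)\Tr(H_{\theta\opt}^{-1}\Cov(\nabla\loss_{\theta\opt}^f)H_{\theta\opt}^{-1})$, and solve the first-order condition. Your added remark that the leading coefficient is nonnegative (strictly positive under nondegeneracy), so the stationary point is the global minimizer, is a minor point the paper leaves implicit.
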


A few remarks are in order here; we defer discussion of empirical implications
of this tuning to the next section and focus on analysis
here.
Substituting $\lambda\opt$ into the
expression~\eqref{eqn:sigma-lambda} for $\Sigma^\lambda$ yields optimal
asymptotic variance
\begin{equation*}
  \Tr(\Sigma^{\lambda\opt})
  = \Tr(H_{\theta\opt}^{-1}\Cov\left(\nabla \loss_{\theta\opt}\right)
  H_{\theta\opt}^{-1})
  - \frac{\Tr(H_{\theta\opt}^{-1}(\Cov(\nabla \loss_{\theta\opt},
    \nabla \loss_{\theta\opt}^f ) + \Cov(\nabla \loss_{\theta\opt}^f, \nabla \loss_{\theta\opt} ) )H_{\theta\opt}^{-1})^2}{4(1+r) \Tr(H_{\theta\opt}^{-1}\Cov(\nabla \loss_{\theta\opt}^f) H_{\theta\opt}^{-1} ) }.
\end{equation*}
The first term is the cumulative variance of the classical estimate, while
the second term is non-negative. Therefore, we see that with the optimal
choice of $\lambda$, \ppipp per
Algorithm~\ref{alg:general_ppi} (asymptotically) dominates classical
inference. Furthermore, the improvement over classical inference grows with
the correlation of the true gradients and those with predicted labels. By the Cauchy-Schwarz inequality and the law of total variance,
we have
\begin{align*}
  \Tr(\Sigma^{\lambda\opt})
  & \ge \Tr\left(H_{\theta\opt}^{-1} \Cov(\nabla \loss_{\theta\opt})
  H_{\theta\opt}^{-1}\right)
  - \frac{1}{1 + r}
  \Tr\left(H_{\theta\opt}^{-1} \Cov(\E[\nabla \loss_{\theta\opt} \mid X])
  H_{\theta\opt}^{-1}\right) \\
  & =
  \frac{r}{1 + r}
  \Tr\left(H_{\theta\opt}^{-1} \Cov(\nabla \loss_{\theta\opt}) H_{\theta\opt}^{-1}
  \right)
  + \frac{1}{1 + r} \Tr\left(H_{\theta\opt}^{-1}\E[\Cov(\nabla\loss_{\theta\opt} \mid X)]
  H_{\theta\opt}^{-1}\right),
\end{align*}
with equality in the first step if and only if
$\nabla \loss_{\theta\opt}(x, f(x)) = c\E[\nabla \loss_{\theta\opt}(x, Y)
  \mid X = x]$ for some fixed $c \ge 0$. The last bound is
the minimal asymptotic variance of \emph{any} estimator given
both a labeled and unlabeled sample (see~\cite{AngelopoulosDuZr23w}).
Essentially, \ppipp is always more efficient than classical estimators,
and it \emph{can} achieve optimal statistical efficiency if the black-box prediction function $f$ is sufficiently strong.

\subsection{A plug-in estimator and tuning procedure}

In finite samples, $\lambda\opt$ admits the natural
plug-in estimate
\begin{equation}
\label{eq:plugin_lambda}
\hat \lambda = \frac{\Tr(\widehat H_{\thetaPP}^{-1}( \widehat \Cov_n(\nabla \loss_{\thetaPP}, \nabla \loss_{\thetaPP}^f ) + \widehat \Cov_n(\nabla \loss_{\thetaPP}^f, \nabla \loss_{\thetaPP} ) ) H_{\thetaPP}^{-1} )}{2(1+\frac n N) \Tr( \widehat H_{\thetaPP}^{-1}\widehat\Cov_{N+n}(\nabla \loss_{\thetaPP}^f) \widehat H_{\thetaPP}^{-1} ) },
\end{equation}
where $\thetaPP = \thetaPP_\lambda$ is any estimate obtained by taking a
fixed $\lambda \in [0, 1]$, as each of these is $\sqrt{n}$-consistent
(Theorem~\ref{theorem:normality}), and $\what{H}_\theta = \nabla^2
L_n(\theta)$ is the empirical Hessian. The estimate $\hat\lambda$ is
consistent, satisfying
$\hat\lambda \cp \lambda\opt$, as (when the losses are smooth
enough as in Definition~\ref{definition:smooth-enough}) each individual term
consistently estimates its population counterpart.
Theorem~\ref{theorem:normality} thus implies the following corollary.
\begin{corollary}
  \label{cor:clt_w_lambdahat}
  Let $\hat\lambda$ be the estimate \eqref{eq:plugin_lambda} and
  the conditions of Theorem~\ref{theorem:normality} hold. Then
\[\sqrt{n}(\thetaPP_{\hat\lambda}- \theta\opt)\cd \normal(0,\Sigma^{\lambda\opt}).\]
\end{corollary}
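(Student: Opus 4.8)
The plan is to reduce the entire statement to Theorem~\ref{theorem:normality} by verifying that the data-dependent weight $\hat\lambda$ defined in~\eqref{eq:plugin_lambda} converges in probability to the optimal limit $\lambda\opt$ of Proposition~\ref{prop:optimal-lambda}. Once $\hat\lambda \cp \lambda\opt$ is established, i.e.\ $\hat\lambda = \lambda\opt + o_P(1)$, the remaining hypotheses of Theorem~\ref{theorem:normality}---smooth-enough losses, $n/N \to r$, and consistency of $\thetaPP_{\hat\lambda}$---hold by assumption, and the theorem delivers the stated limit $\normal(0,\Sigma^{\lambda\opt})$ directly, with $\lambda = \lambda\opt$ in the covariance formula~\eqref{eqn:sigma-lambda}. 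It is worth noting that Theorem~\ref{theorem:normality} is stated precisely to accommodate a random, data-driven $\hat\lambda$, so no work beyond identifying its probability limit is required.

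Thus the entire argument lies in proving $\hat\lambda \cp \lambda\opt$. First I would observe that the preliminary point estimate $\thetaPP = \thetaPP_\lambda$ used inside~\eqref{eq:plugin_lambda} is consistent: for any fixed $\lambda \in [0,1]$ this follows from Proposition~\ref{prop:consistency-convex} in the convex/GLM case, or from the consistency hypothesis of Theorem~\ref{theorem:normality} more generally, so $\thetaPP \cp \theta\opt$. I would then show that each empirical functional appearing in~\eqref{eq:plugin_lambda}, evaluated at the random point $\thetaPP$, converges in probability to its population analogue at $\theta\opt$: the empirical Hessian $\widehat H_{\thetaPP} = \nabla^2 L_n(\thetaPP) \cp H_{\theta\opt}$, and the empirical covariances $\widehat\Cov_n(\nabla\loss_{\thetaPP}, \nabla\loss_{\thetaPP}^f)$ and $\widehat\Cov_{N+n}(\nabla\loss_{\thetaPP}^f)$ converge to $\Cov(\nabla\loss_{\theta\opt}, \nabla\loss_{\theta\opt}^f)$ and $\cov(\nabla\loss_{\theta\opt}^f)$, respectively. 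Together with $n/N \to r$, the continuous-mapping theorem applied to the ratio in~\eqref{eq:plugin_lambda} then yields $\hat\lambda \cp \lambda\opt$, provided the denominator has a nonzero limit---which holds because $H_{\theta\opt}$ is nonsingular and $\cov(\nabla\loss_{\theta\opt}^f)$ is non-degenerate, so that $\Tr(H_{\theta\opt}^{-1}\cov(\nabla\loss_{\theta\opt}^f)H_{\theta\opt}^{-1}) > 0$.

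The main obstacle is this middle step: the empirical functionals are evaluated at the estimated parameter $\thetaPP$ rather than at a deterministic point, so pointwise laws of large numbers do not apply off the shelf. I would handle this through a uniform law of large numbers over a fixed compact (or shrinking) neighborhood of $\theta\opt$, combining $\thetaPP \cp \theta\opt$ with continuity of $\theta \mapsto \nabla\loss_\theta$ and $\theta \mapsto \nabla^2\loss_\theta$ together with a local domination/integrability bound; the local Lipschitz and differentiability structure packaged into ``smooth enough'' (Definition~\ref{definition:smooth-enough}) supplies exactly the regularity needed. A clean way to phrase this is to write each functional as a continuous map applied to sample averages of terms that are continuous in $\theta$ and dominated near $\theta\opt$, so that Slutsky's lemma and the continuous-mapping theorem finish the argument. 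With $\hat\lambda \cp \lambda\opt$ in hand, invoking Theorem~\ref{theorem:normality} completes the proof.
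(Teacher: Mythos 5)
Your proposal is correct and takes essentially the same route as the paper: the paper likewise proves the corollary by noting that $\hat\lambda \cp \lambda\opt$---because, under the smooth-enough conditions of Definition~\ref{definition:smooth-enough}, each individual term in the plug-in formula~\eqref{eq:plugin_lambda} (evaluated at the $\sqrt{n}$-consistent preliminary estimate $\thetaPP$) consistently estimates its population counterpart---and then invoking Theorem~\ref{theorem:normality}, which is stated precisely to allow a random $\hat\lambda$ with a probability limit, here $\lambda = \lambda\opt$. Your uniform-law-of-large-numbers elaboration merely fills in the convergence detail the paper leaves implicit.
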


The astute reader may notice that both $\lambda\opt$ and $\hat{\lambda}$ may
fail to lie on the region $[0, 1]$, which is the only region in which we can
guarantee convexity---and thus tractability of computing $\thetaPP_{\hat\lambda}$---even for GLMs. (This may occur, for example, if the
predictions $f(X_i)$ are anti-correlated with the true outcomes $Y_i$;
see Example~\ref{example:mean-tuning}.)  In
this case, we can take one of two approaches: the first, naively, is to
simply clip $\hat{\lambda}$ to lie in $[0, 1]$ in Algorithm~\ref{alg:glms}
or \ref{alg:general_ppi}. The second is to use classical one-step
estimators~\cite[e.g.][Chapter 5.7]{VanDerVaart98}. In this case, we
recognize that Theorem~\ref{theorem:normality} implies that for any fixed
$\lambda$, $\thetaPP_\lambda$ is $\sqrt{n}$-consistent, and take a single
Newton step on $\LPP_{\hat\lambda}$,
\begin{equation*}
  \thetaPP_{\textup{os}}
  \defeq \thetaPP_\lambda - \nabla^2 \LPP_{\hat{\lambda}}(
  \thetaPP_\lambda)^{-1} \nabla \LPP_{\hat\lambda}(\thetaPP_\lambda).
\end{equation*}
We call $\thetaPP_{\textup{os}}$ and the resulting inferences ``one-step
PPI.''
The difference between the one-step estimator and $\thetaPP_{\hat\lambda}$,
where we clip $\hat\lambda$ to be in $[0, 1]$, is usually empirically
immaterial; we explore this in Appendix~\ref{app:one-step-ppi}. We have the
following corollary~\cite[Thm.~5.45]{VanDerVaart98}.
\begin{corollary}
\label{cor:one-step}
  Let $\hat{\lambda}$ be the estimate~\eqref{eq:plugin_lambda}
  and the conditions of Theorem~\ref{theorem:normality} hold. Then
  \begin{equation*}
    \sqrt{n} (\thetaPP_{\textup{os}} - \theta\opt)
    \cd \normal(0, \Sigma^{\lambda\opt}).
  \end{equation*}
\end{corollary}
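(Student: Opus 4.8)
The plan is to recognize $\thetaPP_{\textup{os}}$ as a classical one-step (Newton) estimator built from the $\sqrt n$-consistent pilot $\thetaPP_\lambda$, and to show that a single Newton step on $\LPP_{\hat\lambda}$ annihilates the initializer and leaves an estimator that is asymptotically linear in the score $\nabla\LPP_{\hat\lambda}(\theta\opt)$, with the same limit law as $\thetaPP_{\hat\lambda}$ in Corollary~\ref{cor:clt_w_lambdahat}. Writing $\Psi_n(\theta) \defeq \nabla\LPP_{\hat\lambda}(\theta)$ for the estimating function and $\dot\Psi_n(\theta) \defeq \nabla^2\LPP_{\hat\lambda}(\theta)$ for its Jacobian, we have $\thetaPP_{\textup{os}} = \thetaPP_\lambda - \dot\Psi_n(\thetaPP_\lambda)^{-1}\Psi_n(\thetaPP_\lambda)$, which is exactly the object treated by \citet[Thm.~5.45]{VanDerVaart98}; I would reprove the relevant reduction directly in this setting. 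Two structural facts drive the argument: (i) $\thetaPP_\lambda - \theta\opt = o_P(1)$ and is of order $1/\sqrt n$ for any fixed $\lambda \in [0,1]$ by Theorem~\ref{theorem:normality}; and (ii) both $\Psi_n$ and $\dot\Psi_n$ are affine in the weight, since $\nabla\LPP_\mu(\theta) = \nabla L_n(\theta) + \mu(\nabla\wt L_N^f(\theta) - \nabla L_n^f(\theta))$ and likewise for the Hessian. This affineness, together with $\hat\lambda \cp \lambda\opt$, is what lets the data-dependent weight be handled cleanly.

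First I would pin down the Jacobian limit. Since $\nabla^2 L_n(\theta\opt) \cp H_{\theta\opt}$ and $\nabla^2 \wt L_N^f(\theta\opt)$ and $\nabla^2 L_n^f(\theta\opt)$ both converge to the common population Hessian $\nabla^2 L^f(\theta\opt)$ by the law of large numbers, the weighted difference cancels in the limit and $\nabla^2\LPP_{\hat\lambda}(\theta\opt) \cp H_{\theta\opt}$ for \emph{any} consistent $\hat\lambda$. Continuity of $\theta \mapsto \nabla^2\loss_\theta$, a local uniform law of large numbers, and $\thetaPP_\lambda \cp \theta\opt$ then give $\dot\Psi_n(\thetaPP_\lambda) \cp H_{\theta\opt}$, which is nonsingular by assumption. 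Next, a mean-value Taylor expansion of $\Psi_n$ about $\theta\opt$ yields $\sqrt n\,\Psi_n(\thetaPP_\lambda) = \sqrt n\,\Psi_n(\theta\opt) + H_{\theta\opt}\sqrt n(\thetaPP_\lambda - \theta\opt) + o_P(1)$, where fact (i) absorbs the error from evaluating the Hessian along the path. Substituting into the definition of $\thetaPP_{\textup{os}}$ and scaling by $\sqrt n$, the identity $\dot\Psi_n(\thetaPP_\lambda)^{-1} = H_{\theta\opt}^{-1} + o_P(1)$ makes the term $\dot\Psi_n(\thetaPP_\lambda)^{-1}H_{\theta\opt}\sqrt n(\thetaPP_\lambda - \theta\opt) = \sqrt n(\thetaPP_\lambda - \theta\opt) + o_P(1)$ cancel the pilot contribution exactly, leaving the asymptotic linearization $\sqrt n(\thetaPP_{\textup{os}} - \theta\opt) = -H_{\theta\opt}^{-1}\sqrt n\,\Psi_n(\theta\opt) + o_P(1)$.

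It remains to identify the limit of $\sqrt n\,\Psi_n(\theta\opt)$. Because $\Psi_n(\theta\opt) = \nabla L_n(\theta\opt) + \hat\lambda(\nabla\wt L_N^f(\theta\opt) - \nabla L_n^f(\theta\opt))$ and the bracketed difference is of order $1/\sqrt n$ (both averages estimate the common mean $\nabla L^f(\theta\opt)$ and $n/N \to r$), Slutsky's theorem with $\hat\lambda - \lambda\opt = o_P(1)$ gives $\sqrt n\,\Psi_n(\theta\opt) = \sqrt n\,\nabla\LPP_{\lambda\opt}(\theta\opt) + o_P(1)$, so the data-dependent weight may be replaced by its limit. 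The two-sample central limit computation from the proof of Theorem~\ref{theorem:normality} then applies verbatim: the score has mean zero since $\nabla L(\theta\opt) = 0$, the independent labeled block $\nabla\loss_{\theta\opt} - \lambda\opt\nabla\loss_{\theta\opt}^f$ contributes $V_{\Delta,\theta\opt}^{\lambda\opt}$, and the unlabeled block $\lambda\opt\nabla\wt L_N^f$ contributes $\frac nN V_{f,\theta\opt}^{\lambda\opt} \to r\,V_{f,\theta\opt}^{\lambda\opt}$, whence $\sqrt n\,\nabla\LPP_{\lambda\opt}(\theta\opt) \cd \normal(0,\, r\,V_{f,\theta\opt}^{\lambda\opt} + V_{\Delta,\theta\opt}^{\lambda\opt})$. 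Composing with the prefactor $-H_{\theta\opt}^{-1}$ gives $\sqrt n(\thetaPP_{\textup{os}} - \theta\opt) \cd \normal(0, \Sigma^{\lambda\opt})$, as claimed. The main obstacle is the interplay between the data-dependent $\hat\lambda$ and the Newton correction, as $\hat\lambda$ enters both the inverted Jacobian and the score; but the affineness of $\LPP_\mu$ in $\mu$ decouples these and reduces everything to the fixed-weight analysis of Theorem~\ref{theorem:normality}, leaving the uniform Taylor-remainder control along $\thetaPP_\lambda \to \theta\opt$ as the only genuinely analytic input, which is standard given continuity of $\nabla^2\loss_\theta$.
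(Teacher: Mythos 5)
Your proof is correct and is essentially the paper's own argument: the paper establishes this corollary by directly invoking the classical one-step estimator theory of \citet[Thm.~5.45]{VanDerVaart98}, and your proposal simply unfolds that same linearization (pilot $\sqrt{n}$-consistency, Jacobian convergence to $H_{\theta\opt}$, cancellation of the pilot, and identification of the score limit via the two-sample CLT from Theorem~\ref{theorem:normality}). Your explicit handling of the data-dependent $\hat\lambda$ through affineness of $\LPP_\mu$ in $\mu$ and Slutsky's theorem is a welcome detail that the paper's one-line citation glosses over, but it mirrors the technique already used in the paper's proof of Theorem~\ref{theorem:normality}, so it does not constitute a different route.
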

Using the plug-in estimate \eqref{eq:plugin_lambda} thus yields a
prediction-powered point estimate with minimal asymptotic variance, which in
turn implies smaller confidence intervals than both PPI according to  problem~\eqref{eq:ppi} and classical inference.

%% As a side note, we generally clip $\hat\lambda$ to be in $[0,1]$ to ensure that the final problem is convex.
%% This is usually irrelevant, since $\hat\lambda < 0$ only when the predictions are far off from the true labels (in the sense that the corresponding gradients are anticorrelated).
%% Furthermore, the clipping is not strictly necessary; we propose a version of the procedure in Section~\ref{app:one-step-ppi} that avoids this issue by taking only a single Newton step.

\begin{example}[Optimal tuning for mean estimation]
  \label{example:mean-tuning}
  The mean-estimation objective with \ppipp is always
  convex for any choice of $\lambda \in \R$, thus obviating the need
  for clipping or one-step corrections.
  Let $\theta\opt = \E[Y] = \argmin_\theta \E[\loss_\theta(Y)]$
  for the loss $\loss_\theta(y) = (y-\theta)^2$.
  Then for any $\lambda \in \R$, we have
  $\thetaPP_\lambda = \frac{1}{n} \sum_{i = 1}^n Y_i
  + \lambda(\frac{1}{N} \sum_{i = 1}^N f(\wt{X}_i) - \frac{1}{n}
  \sum_{i = 1}^n f(X_i))$.
  Substituting into the
  expression for $\lambda\opt$ from Proposition \ref{prop:optimal-lambda}
  gives
  \begin{equation*}
    \lambda\opt = \frac{\Cov(Y,f(X))}{(1+r)\Var(f(X))}.
  \end{equation*}
  We use the plug-in estimator
  \begin{equation*}
    \hat\lambda = \frac{\widehat\Cov_n(Y_i,f(X_i))}{(1+\frac n N )\widehat \Var_{N+n}(f(X_i))}
  \end{equation*}
   of $\lambda\opt$ in our experiments.
\end{example}

%% \subsection{On statistical efficiency}

%% Finally, we turn to a brief discussion of asymptotic efficiency, where we
%% must address a few subtleties. While our estimators are always guaranteed to
%% be more statistically efficient than both classical M-estimators (without
%% unlabeled data) and prediction-powered inference (PPI)---as these correspond
%% to the choices $\lambda = 0$ and $\lambda = 0$, respectively---we do not
%% achieve the full asymptotic statistical efficiency bounds allowed by more
%% sophisticated semiparametric inference
%% techniques~\cite[e.g.][Ch.~25]{VanDerVaart98}. Of course, the point of \ppipp
%% is to provide a lightweight approach that automatically leverages black-box
%% prediction models without sophisticated additional modeling or assumptions,
%% so that we would not expect to achieve full semiparametric
%% efficiency. Nonetheless, we present a few results highlighting what is
%% possible and a few restricted efficiency guarantees.

% -*- mode: latex -*- %

\section{Experiments}
\label{sec:experiments}

We evaluate the benefits of the proposed methodology in simulation and on
real data. First, we study the effect of power tuning by comparing
power-tuned PPI (\ppipp) to both classical and prediction-powered inference
(PPI). We also compare to the original PPI approach for generic M-estimators
involving gridding $\R^d$ to test $\nabla \LPP(\theta) = 0$. Code implementing the \ppipp methods used in the experiments is available in the \texttt{ppi-py} package.\footnote{\texttt{https://github.com/aangelopoulos/ppi\_py}}

\subsection{The benefits of power tuning}

Our first set of experiments investigates tuning the weighting $\lambda$ via
Proposition~\ref{prop:optimal-lambda} and the
plug-in~\eqref{eq:plugin_lambda}, which we use in the
estimator~\eqref{eq:ppi-lambda} for $\thetaPP_{\hat{\lambda}}$. We first
perform experiments with simulated data, then revisit the
real-data experiments that \citet{AngelopoulosBaFaJoZr23} consider.
In all experiments, we use the confidence level $\alpha=0.1$.

\subsubsection{Simulation studies}
\label{sec:simulations}

Each simulation experiment follows a similar pattern.
We sample two i.i.d.\ datasets, $\{(X_i,Y_i)\}_{i=1}^n$ and $\{(\tX_i,
\tY_i)\}_{i=1}^N$, the first serving as the labeled data and the second as
the unlabeled data (so the algorithms have no access to
$\{\tY_i\}_{i=1}^N$).
We corrupt the labels to form simulated predictions $\{f(X_i)\}_{i=1}^n$ and
$\{f(\tX_i)\}_{i=1}^N$ by adding noise to the true labels $Y_i$ and
$\tY_i$.
We compute the average coverage and confidence interval width over 100 trials for the following methods:
\begin{itemize}
\item PPI with no power tuning, equivalent to setting $\lambda=1$ (green);
\item Classical inference, equivalent to setting $\lambda=0$ (blue);
\item Power-tuned PPI using the estimate~\eqref{eq:plugin_lambda}
  for $\hat{\lambda}$ (orange).
\end{itemize}
We set $N=10000$ and vary the labeled sample size $n$
as well as the noise in the predictions.
We choose the noise levels heuristically to represent low, medium, and high noise.

\paragraph{Mean estimation.}

The goal is to estimate the mean outcome $\E[Y]$, where $Y \sim \normal(0,1)$.
We form predictions as $f(X_i) = Y_i + \sigma \epsilon_i$, $\epsilon_i \simiid \normal(0, 1)$.
We set the noise level $\sigma$ to be $0.1$, $1$, and $2$, successively.
There are no covariates in this problem.
We show the results in Figure~\ref{fig:mean_estimation}. We see that tuned PPI is essentially never worse than either baseline.
When the noise is low, \ppipp behaves similarly to standard PPI
($\lambda=1$). When the noise is high, \ppipp behaves like classical
inference ($\lambda=0$). In the intermediate noise regime, \ppipp
significantly outperforms both baselines.

\begin{figure}[t]
    \centering
    \includegraphics[width=0.9\textwidth]{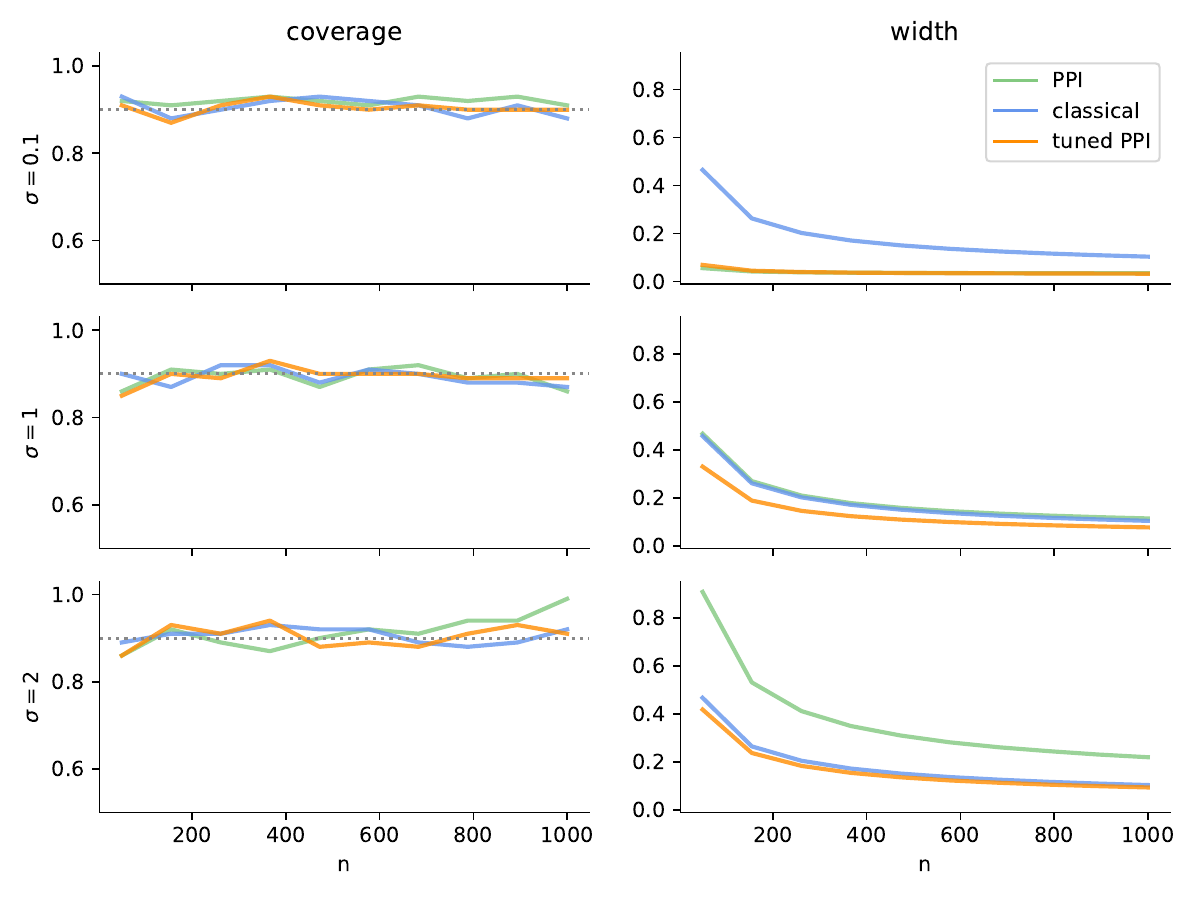}
    \caption{Mean estimation simulation study.  The left column shows
      coverage, the right column shows confidence interval width.  The top,
      middle, and bottom rows correspond to noise levels $\sigma=0.1$,
      $\sigma=1$, and $\sigma=2$, respectively.}
    \label{fig:mean_estimation}
\end{figure}

\paragraph{Linear regression.}

The goal is to estimate the coefficients of a linear regression model $Y =
X^\top \theta + \epsilon$, where $X \sim \normal(0,I_d)$, $\theta \in \R^d$,
and $\epsilon \sim \normal(0,1)$, in dimension $d = 2$.
We use predictions $f(X_i) = X_i^\top \theta + \epsilon_i$,
where $\epsilon_i \sim \normal(-2, \sigma^2)$.
We vary the noise level $\sigma \in \{0.1, 0.5, 1\}$, plotting
results in Figure~\ref{fig:linear_regression}.
The results are qualitatively similar to those in
Figure~\ref{fig:mean_estimation}: \ppipp adaptively interpolates between
classical inference and standard PPI as the noise level changes,
outperforming both.

\begin{figure}[t]
  \centering
  \includegraphics[width=0.9\textwidth]{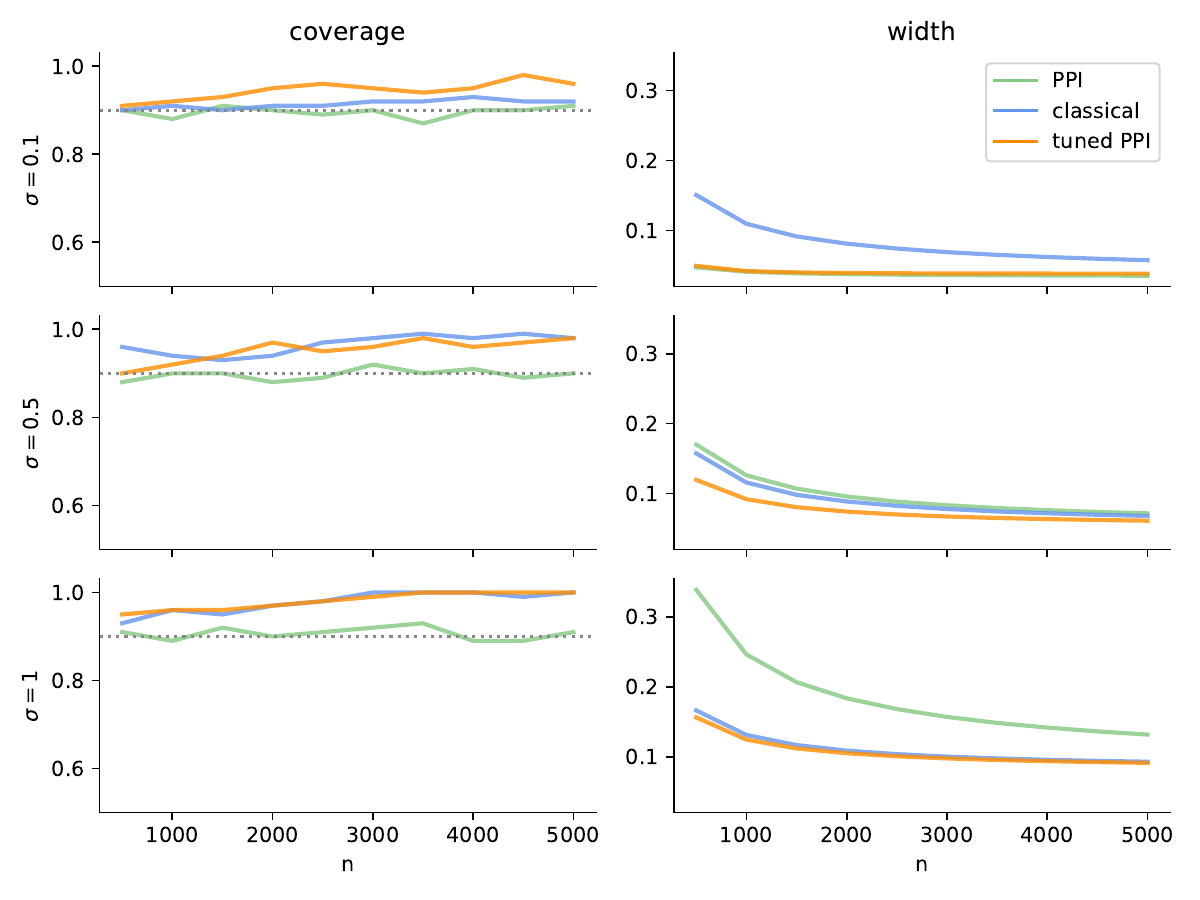}
  \caption{Linear regression simulation study.  The left column shows
    coverage, the right column shows confidence interval width. The top,
    middle, and bottom rows correspond to noise levels $\sigma=0.1$,
    $\sigma=0.5$, and $\sigma=1$, respectively.}
  \label{fig:linear_regression}
\end{figure}

\paragraph{Logistic regression.}

The goal is to estimate the coefficients of a logistic regression model,
$\P(Y=1) = \mu_\theta(X)$, where $\mu_\theta(x) = \frac{e^{\theta^\top x}}{1
  + e^{\theta^\top x}}$ for $\theta \in \R^d$ with dimension $d = 2$.
We draw $X \sim \normal(0,I_d)$ and to simulate a binary classifier $f$ with
error $\sigma$, we set the predictions $f(X_i)$ to be $Y_i$ but randomly
flipped with probability $\sigma$.
We vary the flipping probability $\sigma \in \{0.01, 0.1, 0.2\}$, presenting
results in Figure~\ref{fig:logistic_regression}.
As before, \ppipp successfully adapts to the varying noise in the
predictions.

\begin{figure}[t]
    \centering
    \includegraphics[width=0.9\textwidth]{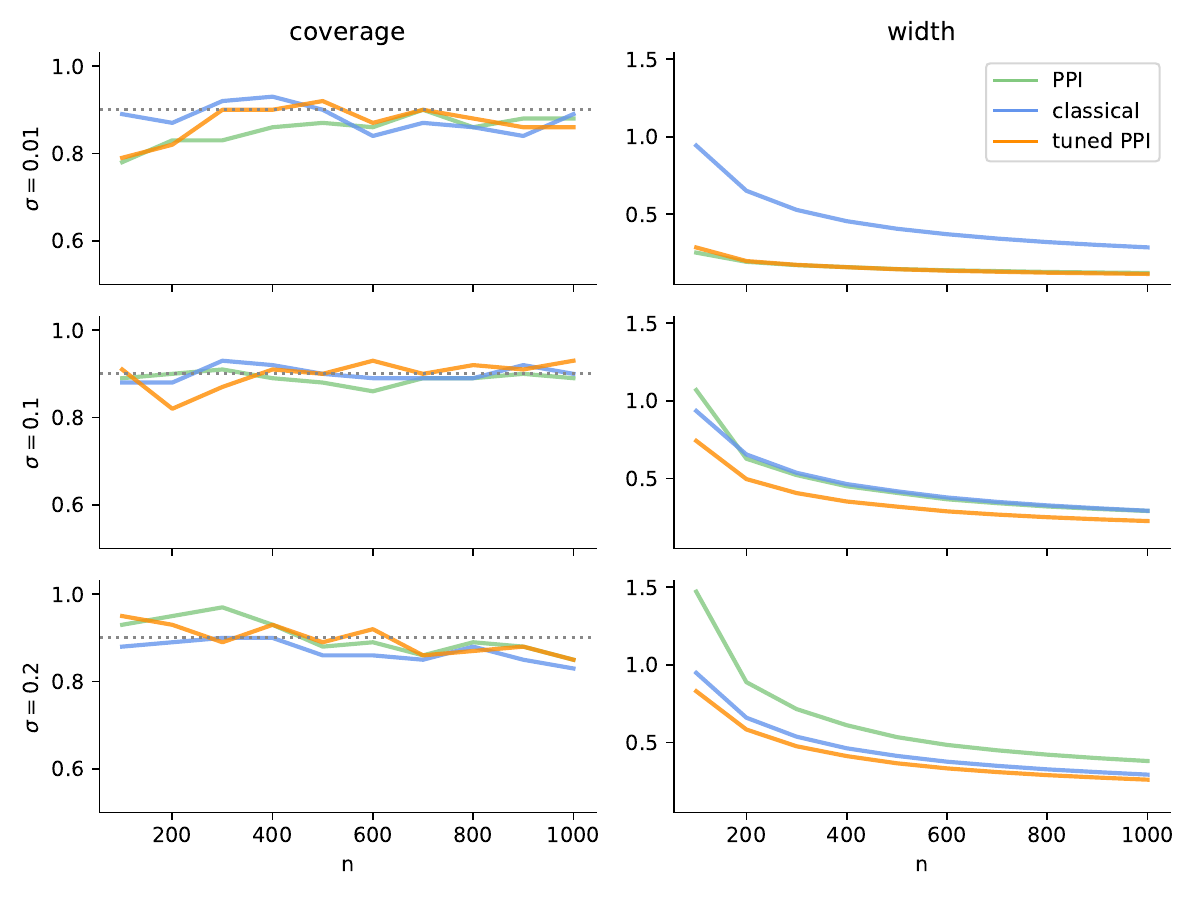}
    \caption{Logistic regression simulation study. The left column shows
      coverage, the right column shows confidence interval width. The top,
      middle, and bottom rows correspond to noise levels $\sigma=0.01$,
      $\sigma=0.1$, and $\sigma=0.2$, respectively.}
    \label{fig:logistic_regression}
\end{figure}

\subsubsection{Real data experiments}

We now revisit five experiments on real data that
\citeauthor{AngelopoulosBaFaJoZr23} perform~\citep{AngelopoulosBaFaJoZr23},
where we replicate their experimental settings, comparing to
classical inferential approaches (that do not use the unlabeled data)
as well.

\paragraph{Mean estimation on deforestation data.}

The goal is to estimate the mean deforestation rate in the Amazon based on
satellite imagery data~\cite{AngelopoulosBaFaJoZr23}.  The dataset includes
$1596$ total pairs of observations and predictions, where the observations
indicate whether deforestation occurred in a given randomly sampled region
and the predictions are probits corresponding to these indicators.  We take
these predictions from the same gradient-boosted classifier as
in~\cite{AngelopoulosBaFaJoZr23}, which is fit on a publicly available
machine learning system that released tree canopy forest cover predictions
at 30m resolution in the years 2000 and 2015.  Generally, when the amount of
forest cover in 2015 is substantially lower than in 2000, the model predicts
that deforestation occurred.

Of the $1596$ total data points, we vary $n$, and then
randomly take a labeled subset of size $n$ and
use $N=1596-n$ as unlabeled.
Figure~\ref{fig:mean_estimation_real_data} shows the results of the
procedures over $100$ random splits into labeled and unlabeled data. For small
$n$, \ppipp behaves similarly to standard PPI. When $n$ is large, meaning
the unlabeled dataset size $N$ is small, classical inference outperforms
standard PPI; \ppipp outperforms both.

\begin{figure}[H]
    \centering
    \includegraphics[width=0.9\textwidth]{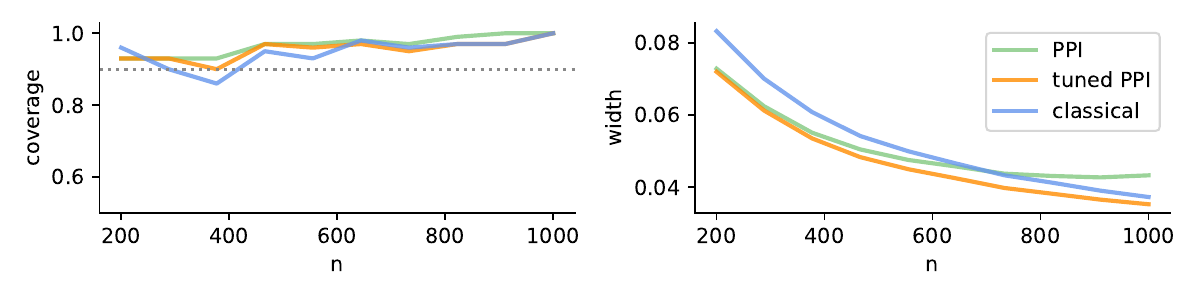}
    \caption{Mean estimation on deforestation data. The left panel shows coverage and the right shows width.}
    \label{fig:mean_estimation_real_data}
\end{figure}

\paragraph{Mean estimation on galaxy data.}

The goal is to estimate the fraction of spiral galaxies in the local universe based on images from the Sloan Digital Sky Survey (SDSS) and a pre-trained ResNet classifier, as in~\cite{AngelopoulosBaFaJoZr23}.
The scale of the universe makes human labeling of galaxies impossible, so
scientists wish to use a computer-vision-based classifier to study the
development of the universe. The classifier ingests images of galaxies and
outputs an estimated probability that the galaxy is spiral.
The dataset includes 1,364,122 total observations, of which we randomly take $n$ to serve as the labeled data and $N=1,\!364,\!122-n$ as the unlabeled data, for varying $n$.
We show the results in Figure~\ref{fig:mean_estimation_real_data_galaxies} over $100$ random splits into labeled and unlabeled data.
As the predictions are very accurate in this problem, \ppipp and standard PPI are basically indistinguishable, and both are significantly more powerful than classical inference.

\begin{figure}[H]
    \centering
    \includegraphics[width=0.9\textwidth]{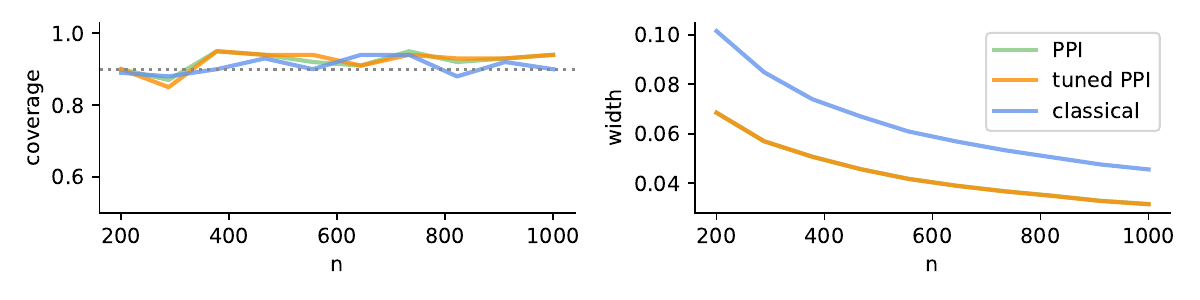}
    \caption{Mean estimation on galaxy data. The left panel shows coverage and the right shows width.}
    \label{fig:mean_estimation_real_data_galaxies}
\end{figure}

\paragraph{Odds ratio estimation on AlphaFold data.}

The goal is to estimate the odds ratio of a protein being phosphorylated and
being part of an intrinsically disordered region (IDR) using the predictions
from AlphaFold~\cite{JumperEvEtAl21}, as in~\cite{AngelopoulosBaFaJoZr23}.
The cost of measuring protein structure has prompted scientists to
increasingly rely on AlphaFold's predictions to study the
natural distribution of protein structures. The data set includes $10802$
total observations of ground-truth IDR indicators and corresponding
AlphaFold predictions, which we randomly split into $n$ labeled
and $N=10802-n$ unlabeled data points, for varying $n$.  See
Figure~\ref{fig:odds_ratio_estimation_real_data} for the results over $100$
random splits into labeled and unlabeled data. For moderately low values of
$n$, tuned PPI and standard PPI have similar performance, both outperforming
classical inference. As $n$ becomes large, tuned PPI gradually becomes more
powerful than both alternatives.

The original PPI confidence sets~\citep{AngelopoulosBaFaJoZr23} are formed by splitting the error budget between two means and post-processing them to recover a confidence set for the odds ratio. 
We do not use this overly conservative approach; instead, we derive an asymptotically exact confidence set for the odds ratio using the delta method.
See Appendix~\ref{app:odds-ratio} for details.

\begin{figure}[H]
    \centering
    \includegraphics[width=0.9\textwidth]{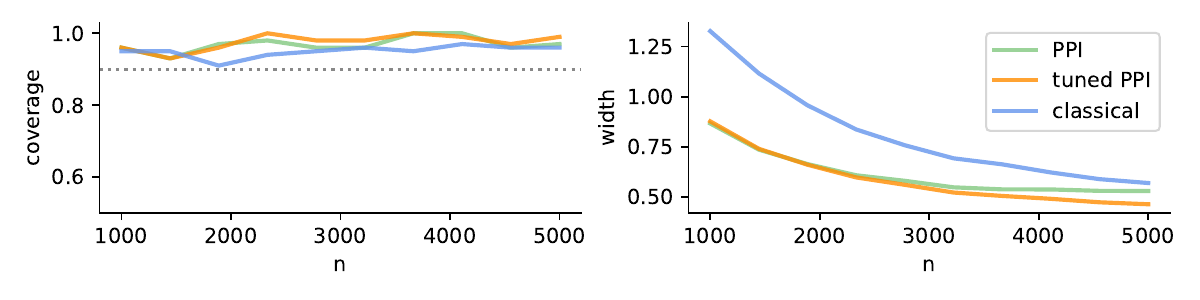}
    \caption{Odds ratio estimation on AlphaFold data. The left panel shows coverage and the right shows width.}
    \label{fig:odds_ratio_estimation_real_data}
\end{figure}

\paragraph{Linear regression on census data.}

In this experiment, the goal is to estimate the coefficients of a linear
regression model relating age (1-99) and sex (M/F) to income based on census
data from California, as in~\cite{AngelopoulosBaFaJoZr23}. We download the
data through the Folktables~\cite{DingHaMiSc21} interface and use an XGBoost
model~\cite{ChenGu16} taking as input the ten covariates available in
Folktables (including age, sex, the binary indicator of private health
insurance) to produce predictions.  We train the model on the entire census
dataset in the year 2018 and use it to predict income for the year
2019.  The 2019 data includes $377,\!575$ observations, of which we randomly
take $n$ as labeled and $N=377,\!575-n$ as unlabeled, for varying $n$.  We show
the results in Figure~\ref{fig:ols_real_data} over $100$ random splits into
labeled and unlabeled data. \ppipp is visually indistinguishable from
standard PPI with $\lambda=1$.

\begin{figure}[ht]
    \centering
    \includegraphics[width=0.9\textwidth]{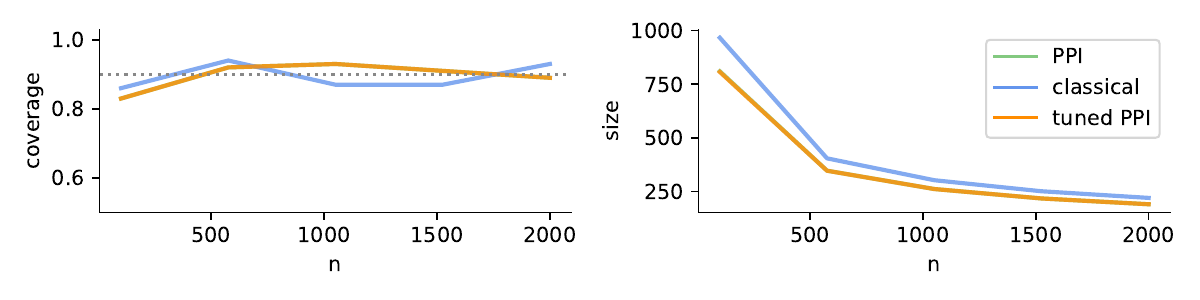}
    \caption{Linear regression on census data. The left panel shows coverage and the right shows width.}
    \label{fig:ols_real_data}
\end{figure}

\paragraph{Logistic regression on census data.}
The goal is to estimate the coefficients of a logistic regression model relating income (\$) to the binary indicator of private health insurance based on census data.
The setup otherwise resembles the previous experiment: we train an XGBoost model to predict income from ten other covariates, we have $377,\!575$ total observations, and so on.
Figure~\ref{fig:logistic_real_data} shows the results over $100$ random splits into labeled and unlabeled data. Tuned PPI behaves similarly to PPI, and both are more powerful than classical inference for all values of $n$.

\begin{figure}[ht]
    \centering
    \includegraphics[width=0.9\textwidth]{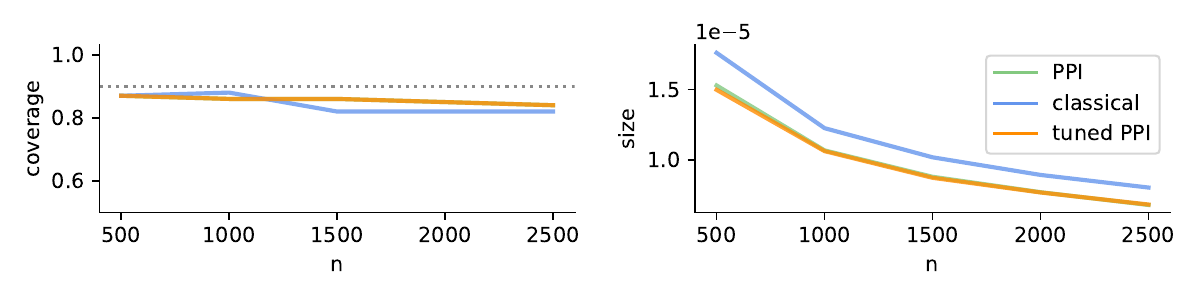}
    \caption{Logistic regression on census data. The left panel shows coverage and the right shows width.}
    \label{fig:logistic_real_data}
\end{figure}

\subsection{Comparison to confidence sets from testing}

The final comparison is to the naive implementation of prediction-powered
inference~\citep[Algorithm 3]{AngelopoulosBaFaJoZr23}.  We do not apply
power tuning in this comparison as the goal is to understand the independent
effect of the gridding procedure, which tests for ``each'' $\theta$ whether
$\nabla \LPP(\theta) = 0$ is plausible, versus the technique proposed
herein. The gridding means the original PPI method has much larger
computational cost than the proposed method, while it is also less accurate
(leading to larger sets, as in Figure~\ref{fig:comparison}).  This follows
because, to remain valid, the gridding method necessarily includes
an extra grid point along each dimension.  Furthermore, the method often
outputs infinite sets because, in an effort to avoid an infinite runtime,
the algorithm simply terminates grid refinement after a certain number of
iterations.  See Figure~\ref{fig:comparison} for the median interval width
for varying dimension $d$ and limit on the number of grid points before the
algorithm returns an infinite set.

\begin{figure}[ht]
    \centering
    \includegraphics[width=0.9\textwidth]{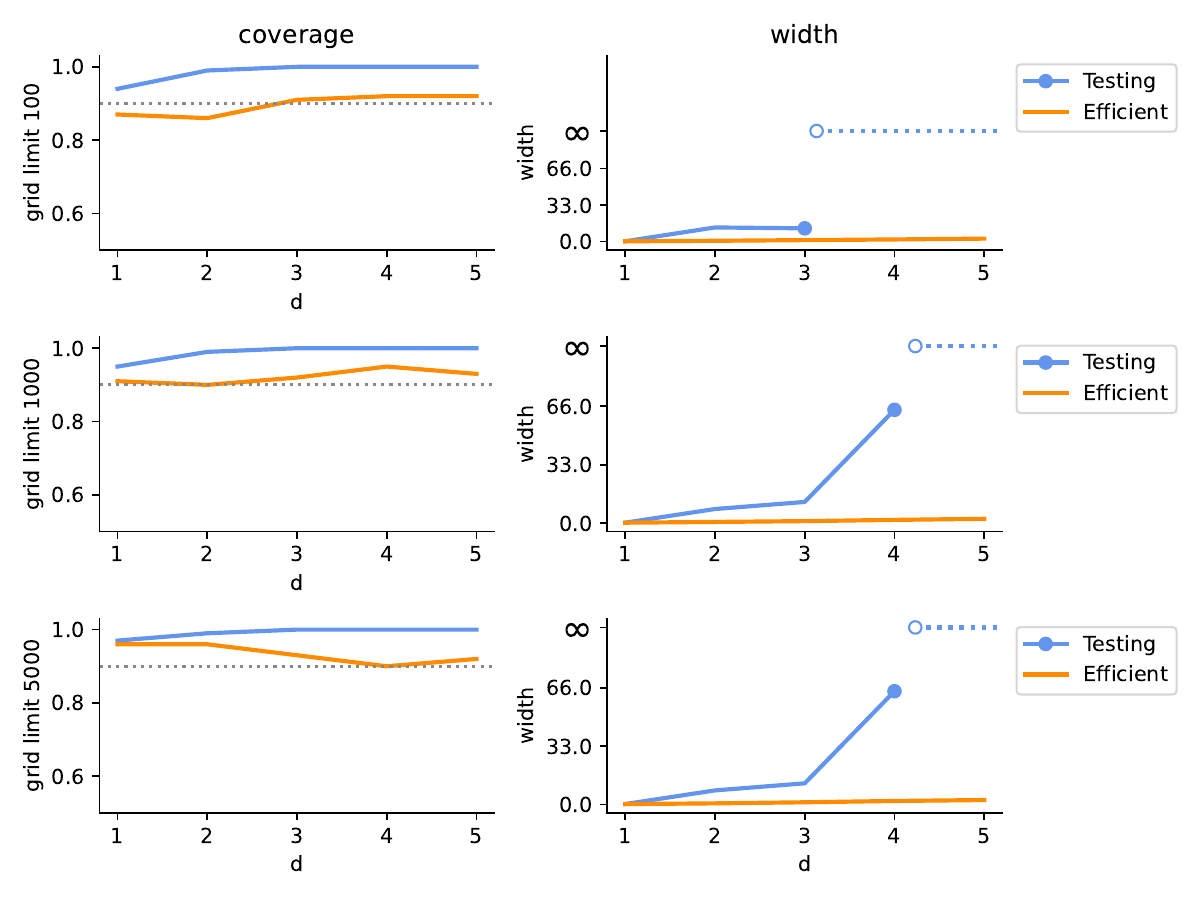}
    \caption{Comparison of \ppipp logistic regression, as in
      Algorithm~\ref{alg:glms}, and standard PPI logistic
      regression~\cite[Algorithm 3]{AngelopoulosBaFaJoZr23}. The left panel
      shows coverage and the right shows width.}
    \label{fig:comparison}
\end{figure}

\section*{Acknowledgements}
The authors would like to acknowledge the support of the Mohamed bin Zayed
University of Artificial Intelligence and president Eric Xing as well as
Barbara Engelhardt for their organization of the Michael Jordan Symposium,
which led to this idea and collaboration.  We also thank Michael Jordan for
his encouragement.  JCD's work was additionally supported by the Office of
Naval Research Grant N00014-22-1-2669 and as a Thomas and Polly Bredt
Faculty Scholar.
% We would also like to thank Mike Jordan for lubricating our discussions with ``liquid happiness.''

\bibliography{bib}
\bibliographystyle{abbrvnat}

\newpage

\appendix
% -*- Mode: latex -*- %

\section{Proofs}

\subsection{Proof of Theorem \ref{theorem:normality}}
\label{sec:proof-normality}

We formally state the smoothness condition needed for Theorem \ref{theorem:normality}.

\begin{definition}[Smooth enough losses]
  \label{definition:smooth-enough}
  The loss $\loss_\theta$ is \emph{smooth enough} if
  \begin{enumerate}[label=(\roman*),leftmargin=1.5em]
  \item  
  \label{item:loss-differentiability} the losses
    $\loss_\theta(x, y)$ and $\loss_\theta(x, f(x))$ are
    differentiable at $\theta\opt$ for $\P$-almost
    every $(x, y)$,
  \item \label{item:locally-lipschitz}
    the losses $\loss_\theta$ are
    locally Lipschitz around $\theta\opt$: there is a neighborhood
    of $\theta\opt$ such that
    $\loss_\theta(x, y)$ is $\lipobj(x, y)$-Lipschitz
    and $\loss_{\theta}(x, f(x))$ is $\lipobj(x)$-Lipschitz in $\theta$,
    where $\E[\lipobj(X, Y)^2 + \lipobj(X)^2] < \infty$,
  \item the population losses
    $L(\theta) = \E[\loss_\theta(X, Y)]$ and
    $L^f(\theta) = \E[\loss_\theta(X, f(X))]$ have Hessians $H_{\theta\opt} = \nabla^2 L(\theta\opt) \succ 0$ and
    $H_{\theta\opt}^f = \nabla^2 L^f(\theta\opt)$.
  \end{enumerate}
\end{definition}

The proof follows that of \citet[Theorem
  5.23]{VanDerVaart98}, with the modifications necessary
to address the different observation model and empirical objective.  Throughout the proof, we recall the labeling
conventions that a tilde $\wt{\cdot}$ indicates the unlabeled sample
$\{\wt{X}_i\}_{i = 1}^N$ and a superscripted $f$ (i.e.\ $\cdot^f$) indicates
using the machine-learning predictions $f$.

With this notational setting,
given a function $g$, we use the shorthand notation
\begin{align}
  \E_n g &:= \frac{1}{n} \sum_{i=1}^n g(X_i,Y_i), \quad \GG_n g = \sqrt{n}\left(\E_n g - \E[g(X,Y)]\right); \label{eq:Gn}\\
  \widetilde \E_N^f g &:= \frac{1}{N} \sum_{i=1}^N g(\widetilde X_i, f(\widetilde X_i)),\quad \widetilde \GG_N^f g := \sqrt{N} ( \widetilde \E_N^f g - \E[g(X,f(X))]); \label{eq:GNf}\\
  \E_n^f g &:= \frac{1}{n} \sum_{i=1}^n g( X_i, f(X_i)), \quad \GG_n^f g := \sqrt{n}  (\E_n^f g - \E[g(X,f(X))]) \label{eq:Gnf}.
\end{align}

The differentiability of $\loss_\theta(x,y)$ and
$\loss_{\theta}(x, f(x))$ at $\theta\opt$ (Definition~\ref{definition:smooth-enough}\ref{item:loss-differentiability}) and the local Lipschitzness of the losses
(Definition~\ref{definition:smooth-enough}\ref{item:locally-lipschitz})
imply that
for every (possibly random) sequence $h_n = O_P(1)$, we have
\begin{equation*}
  \GG\left[\sqrt{n} \left(\loss_{\theta\opt + \frac{h_n}{\sqrt{n}}} - \loss_{\theta\opt} \right) - h_n^\top \nabla \loss_{\theta\opt} \right]
  \cp 0
\end{equation*}
for any of the empirical processes $\GG\in \{\GG_n, \widetilde \GG_N^f,
\GG_n^f \}$ (see \cite[Lemma 19.31]{VanDerVaart98}).  By applying a second-order Taylor expansion, this implies
\begin{align*}
  n \E_n\left(\loss_{\theta\opt + \frac{h_n}{\sqrt{n}}} - \loss_{\theta\opt}\right)
  & =
  n \left(L\Big(\theta\opt + \frac{h_n}{\sqrt{n}}\Big)
  - L(\theta\opt)\right)
  + h_n^\top \GG_n \nabla \loss_{\theta\opt} + o_P(1) \\
  & = \half h_n^\top H_{\theta\opt} h_n
  + h_n^\top \GG_n \nabla\loss_{\theta\opt} + o_P(1).
\end{align*}
Similarly, we have the two equalities
\begin{align*}
  \hat\lambda n \widetilde \E_N^f\left(\loss_{\theta\opt + \frac{h_n}{\sqrt{n}}} - \loss_{\theta\opt}\right) &= (\lambda + o_P(1))\left(\frac{1}{2} h_n^\top H_{\theta\opt}^f h_n + \sqrt{\frac{n}{N}} h_n ^\top \widetilde \GG_N^f \nabla \loss_{\theta\opt} + o_P(1)\right)\\
  & = \lambda \left(\frac{1}{2} h_n^\top H_{\theta\opt}^f h_n + \sqrt{\frac{n}{N}} h_n ^\top \widetilde \GG_N^f \nabla \loss_{\theta\opt}\right) + o_P(1),
  ~~ \mbox{and} \\
 - \hat\lambda n \E_n^f\left(\loss_{\theta\opt + \frac{h_n}{\sqrt{n}}} - \loss_{\theta\opt}\right) &= (\lambda + o_P(1))\left( -\frac{1}{2} h_n^\top H_{\theta\opt}^f h_n - h_n ^\top \GG_n^f  \nabla \loss_{\theta\opt} + o_P(1) \right)\\
 &= \lambda \left( -\frac{1}{2} h_n^\top H_{\theta\opt}^f h_n - h_n ^\top \GG_n^f  \nabla \loss_{\theta\opt}\right) + o_P(1)  \label{eq:G_nf}.
\end{align*}
%Note that for the second equation we multiplied the expression by $\sqrt{\frac{n}{N}}$.

Adding the preceding three equations and recalling the definition
that 
\[\LPP_\lambda(\theta) = \E_n \loss_\theta
+ \lambda (\wt{\E}_N^f \loss_\theta - \E_n^f \loss_\theta),\]
we have
\begin{equation*}
  n \left(\LPP_{\hat\lambda}\left(\theta\opt + \frac{h_n}{\sqrt{n}}\right)
  - \LPP_{\hat\lambda}(\theta\opt)\right)
  = \half h_n^\top H_{\theta\opt} h_n
  + h_n^\top \left(\GG_n \nabla\loss_{\theta\opt}
  + \lambda\sqrt{\frac{n}{N}} \widetilde \GG_N^f \nabla\loss_{\theta\opt}
  - \lambda \GG_n^f \nabla\loss_{\theta\opt}\right) + o_P(1).
\end{equation*}

We now evaluate this expression for two particular choices of the sequence
$h_n$. First, consider $h_n\opt = \sqrt{n}(\thetaPP_{\hat\lambda} -
\theta\opt)$.  Because the losses are smooth enough
(Definition~\ref{definition:smooth-enough}), $\theta \mapsto \lambda
\loss_\theta$ is locally Lipschitz uniformly for $\lambda$ in (any) compact set,
and so the consistency $\thetaPP_{\hat\lambda}\to \theta\opt$ implies that
$h_n\opt = O_P(1)$ (\citet[Corollary 5.53]{VanDerVaart98}).
This yields
\begin{equation*}
  n \left(\LPP_{\hat\lambda}(\thetaPP_{\hat\lambda}) - \LPP_{\hat\lambda}(\theta\opt)\right) = \frac{1}{2} {h_n\opt}^\top H_{\theta\opt} h_n\opt + {h_n\opt} ^\top \left(\GG_n \nabla\loss_{\theta\opt} + \lambda\sqrt{\frac{n}{N}} \widetilde \GG_N^f \nabla\loss_{\theta\opt} - \lambda \GG_n^f \nabla\loss_{\theta\opt}\right) + o_P(1).
\end{equation*}
We can perform a similar expansion with
$h_n = -H_{\theta\opt}^{-1}
(\GG_n \nabla\loss_{\theta\opt} + \lambda \sqrt{\frac{n}{N}} \widetilde
\GG_N^f \nabla\loss_{\theta\opt} - \lambda \GG_n^f
\nabla\loss_{\theta\opt})$, which is certainly $O_P(1)$,
yielding
\begin{equation*}
  n \left(\LPP_{\hat\lambda}(\theta\opt -h_n/\sqrt{n}) - \LPP_{\hat\lambda}(\theta\opt)\right)
  = -\half h_n^\top H_{\theta\opt} h_n + o_P(1).
%% &= -\half \left(\GG_n \nabla\loss_{\theta\opt} + \lambda\sqrt{\frac{n}{N}} \widetilde \GG_N^f \nabla\loss_{\theta\opt} - \lambda \GG_n^f \nabla\loss_{\theta\opt}\right)^\top  H_{\theta\opt}^{-1} \left(\GG_n \nabla\loss_{\theta\opt} + \lambda \sqrt{\frac{n}{N}} \widetilde \GG_N^f \nabla\loss_{\theta\opt} - \lambda \GG_n^f \nabla\loss_{\theta\opt}\right) + o_P(1).
\end{equation*}
By the definition of $\thetaPP_{\hat\lambda}$, the left-hand side of the
first equation is smaller than the left-hand side of the second equation,
and so
\begin{align*}
  \half {h_n\opt}^\top H_{\theta\opt} h_n\opt
  - {h_n\opt}^\top H_{\theta\opt} h_n
  \le - \half h_n^\top H_{\theta\opt} h_n + o_P(1),
\end{align*}
or, rearranging,
\begin{equation*}
  \half (h_n\opt - h_n)^\top H_{\theta\opt} (h_n\opt - h_n)
  = o_P(1).
\end{equation*}
As $H_{\theta\opt}$ is positive definite, we thus must have
$h_n\opt = h_n + o_P(1)$, that is,
\begin{equation*}
  \sqrt{n} \left(\thetaPP_{\hat\lambda} - \theta\opt\right)
  = -H_{\theta\opt}^{-1} \left(\GG_n \nabla \loss_{\theta\opt}
  + \lambda \sqrt{\frac{n}{N}} \wt{\GG}_N^f \nabla \loss_{\theta\opt}
  - \lambda \GG_n^f \nabla \loss_{\theta\opt}\right) + o_P(1).
\end{equation*}

It remains to demonstrate the asymptotic normality of the empirical
processes on the right-hand side. This follows by the central limit theorem:
\begin{align*}
&\GG_n \nabla\loss_{\theta\opt} + \lambda \sqrt{\frac{n}{N}} \widetilde \GG_N^f \nabla\loss_{\theta\opt} - \lambda \GG_n^f \nabla\loss_{\theta\opt}\\
&\quad = \lambda \sqrt{\frac{n}{N}} \cdot \frac{1}{\sqrt{N}} \sum_{i=1}^N (\nabla \tilde \loss_{\theta\opt,i}^f - \E[ \nabla \loss_{\theta\opt}^f])  + \frac{1}{\sqrt n} \sum_{i=1}^n \left( \nabla \loss_{\theta\opt,i} - \lambda \nabla \loss_{\theta\opt,i}^f - \E[\nabla \loss_{\theta\opt} -  \lambda\nabla \loss_{\theta\opt}^f]\right)\\
&\quad \cd \normal \left(0, \lambda^2 r \cdot \cov(\nabla \loss_{\theta\opt}^f) + \cov(\nabla \loss_{\theta\opt} - \lambda\nabla \loss_{\theta\opt}^f)\right).
\end{align*}
Thus $H_{\theta\opt}^{-1} \left(\GG_n \nabla\loss_{\theta\opt} + \lambda
\sqrt{\frac{n}{N}} \widetilde \GG_N^f \nabla\loss_{\theta\opt} - \lambda
\GG_n^f \nabla\loss_{\theta\opt}\right)\cd \normal(0,\Sigma^\lambda)$,
completing the proof.

\subsection{Proof of Proposition \ref{prop:consistency-convex}}
\label{sec:proof-consistency-convex}

We begin by stating an auxiliary lemma.

\begin{lemma}
  \label{lemma:convexity_growth}
  Let $g:\R^d\to\R$ be a convex function. Fix $\theta_0,\theta_1\in\R^d$ and
  $t\geq 1$, and define $\theta_t = \theta_0 + t(\theta_1 - \theta_0)$. Then
  $g(\theta_t) \geq g(\theta_0) + t(g(\theta_1) - g(\theta_0))$.
\end{lemma}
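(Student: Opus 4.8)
The plan is to recognize that the claimed inequality is just the defining convexity inequality for $g$, rearranged, once we observe the geometric relationship among the three points. The key observation is that for $t \geq 1$, the point $\theta_1$ lies on the line segment between $\theta_0$ and $\theta_t$. Indeed, since $1/t \in (0,1]$, I would check the convex-combination identity
\begin{equation*}
  \theta_1 = \Big(1 - \tfrac{1}{t}\Big)\theta_0 + \tfrac{1}{t}\,\theta_t,
\end{equation*}
which follows immediately by substituting $\theta_t = \theta_0 + t(\theta_1 - \theta_0)$ and simplifying: the $\theta_0$ terms collapse to $\theta_0 - \theta_0 + \theta_1 = \theta_1$.

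Having established that $\theta_1$ is a genuine convex combination of $\theta_0$ and $\theta_t$ (both weights lie in $[0,1]$ precisely because $t \geq 1$), I would apply the convexity of $g$ to obtain
\begin{equation*}
  g(\theta_1) \leq \Big(1 - \tfrac{1}{t}\Big) g(\theta_0) + \tfrac{1}{t}\, g(\theta_t).
\end{equation*}
The final step is pure algebra: multiply through by $t$, isolate $g(\theta_t)$, and collect the $g(\theta_0)$ terms to arrive at $g(\theta_t) \geq g(\theta_0) + t\big(g(\theta_1) - g(\theta_0)\big)$, which is exactly the claim.

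There is no real obstacle here; the entire content of the lemma is the identification of the correct convex combination, after which the result is a one-line rearrangement. The only point requiring any care is checking that the weights $1 - 1/t$ and $1/t$ are nonnegative, which is where the hypothesis $t \geq 1$ is used, and that the inequality direction is preserved when dividing by the positive quantity $1/t$. Intuitively, the statement captures the fact that a convex function grows at least linearly as one extrapolates past $\theta_1$ along the ray from $\theta_0$.
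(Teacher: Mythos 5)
Your proof is correct and is essentially identical to the paper's: both identify the convex-combination identity $\theta_1 = \bigl(1 - \tfrac{1}{t}\bigr)\theta_0 + \tfrac{1}{t}\theta_t$, apply the definition of convexity at $\theta_1$, and rearrange. No gaps.
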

\begin{proof}
  Rearranging terms, we have $\theta_1 = \frac 1 t \theta_t + (1 - \frac 1
  t)\theta_0$ . Applying the definition of convexity at $\theta_1$ gives
  \[g(\theta_1) \leq \frac 1 t g(\theta_t) + \left(1-\frac 1 t \right)g(\theta_0),\]
  which is equivalent to $g(\theta_t) \geq g(\theta_0) + t(g(\theta_1) - g(\theta_0))$.
\end{proof}

Returning to Proposition~\ref{prop:consistency-convex}, the local
Lipschitzness condition implies there is an $\epsilon > 0$ such that
\begin{align}
  \label{eq:convergence_epsball}
  \sup_{\theta:\|\theta-\theta\opt\|\leq \epsilon} |\LPP_{\lambda}(\theta) - L(\theta)| \cp 0
\end{align}
by a standard covering argument.
By the uniqueness of $\theta\opt$, for all
$\epsilon>0$ we know there exists a $\delta>0$ such that $L(\theta) -
L(\theta\opt)\geq \delta$ for all $\theta$ on the $\epsilon$-shell $\{\theta
\mid \norm{\theta -
\theta\opt}=\epsilon\}$ around $\theta\opt$.
With this we can write
\begin{align*}
  \inf_{\norm{\theta-\theta\opt} = \epsilon}
  \LPP_{\lambda}(\theta) - \LPP_{\lambda}(\theta\opt)
  & = \inf_{\norm{\theta - \theta\opt} =\epsilon} \left((\LPP_{\lambda}(\theta) - L(\theta) + (L(\theta) - L(\theta\opt)) + (L(\theta\opt) - \LPP_{\lambda}(\theta\opt))\right)\\
  &\geq \delta - o_P(1),
\end{align*}
where the convergence~\eqref{eq:convergence_epsball}
guarantees that the first and third terms above vanish.

Now fix any $\theta$ such that $\norm{\theta-\theta\opt} \geq \epsilon$. We
apply Lemma~\ref{lemma:convexity_growth} with $\theta_0 = \theta\opt$ and
$\theta_1 = \frac{\theta-\theta\opt}{\|\theta-\theta\opt\|}\epsilon$.
Combining the preceding
display with Lemma~\ref{lemma:convexity_growth} gives
\begin{equation*}
  \LPP_{\lambda}(\theta) - \LPP_{\lambda}(\theta\opt) \geq \frac{\|\theta-\theta\opt\|}{\epsilon}(\delta-o_P(1)) \geq \delta - o_P(1).
\end{equation*}
Therefore, no $\theta$ with $\|\theta-\theta\opt\|\geq \epsilon$ can
minimize $\LPP_{\lambda}$, and so by contradiction we have shown
$\P(\norms{\thetaPP_{\lambda} - \theta\opt}< \epsilon)\to 1$, as desired.

We turn to the final claim of the proposition, which allows random
$\hat{\lambda}$. The convergence~\eqref{eq:convergence_epsball} holds
uniformly for $\lambda'$ in a small neighborhood of $\lambda$ as well
because of an identical covering argument; as $\hat{\lambda} = \lambda +
o_P(1)$, with probability tending to 1 we have that $\hat{\lambda}$ belongs to
the neighborhood of $\lambda$. The rest of the proof is identical.

\subsection{Proof of Theorem \ref{theorem:confidence-set-equivalence}}
\label{sec:proof-confidence-set-equivalence}

We will require that the loss function is stochastically smooth in the following sense.

\begin{assumption}[Stochastic smoothness]
  \label{assumption:donsker}
  There exists a compact neighborhood $B$ of $\theta\opt$ such that
  the classes
  \begin{equation*}
    \mc{L} \defeq \{(x, y) \mapsto \nabla \loss_\theta(x, y)
    : \theta \in B \}
    ~~~ \mbox{and} ~~~
    \mc{L}^f
    \defeq \{x \mapsto \nabla \loss_\theta(x, f(x)) : \theta \in B\}
  \end{equation*}
  are both $\P$-Donsker.
\end{assumption}

Let $Z \sim \normal(0, I)$. We will show that,
  for any $c < \infty$,
  \begin{equation}
  \label{eq:thm2_proofgoal}
    \limsup_n
    \sup_{\norm{h} \le c}
    \left|
    \P(\theta\opt + h/\sqrt{n} \in \CPP_\alpha)
    - \P\big(\ltwobig{-\Sigma^{-1/2} h + Z}^2
    \le \chi^2_{d,1-\alpha} \big) \right|
    = 0,
  \end{equation}
  for $\CPP_\alpha \in\{ \C_\alpha^{\mathrm{PP},T}, \C_\alpha^{\mathrm{PP},U}\}$, where $\Sigma$ is the asymptotic covariance from Theorem \ref{theorem:normality} for $\lambda = 1$, i.e. $\Sigma \equiv \Sigma^1 = H_{\theta\opt}^{-1} (r V^1_{f,\theta\opt} + V^1_{\Delta,\theta\opt}) H_{\theta\opt}^{-1}$.
  
The result \eqref{eq:thm2_proofgoal} directly implies the main result of the theorem.

  Let $h_n$ be any sequence such that $\sqrt{n} h_n \to h$ for some fixed $h
  \in \R^d$.  We give alternative representations of the statistics $T_n$
  and $U_n$ under the local alternative parameters $\theta\opt + h_n$. We
  leverage the following two technical lemmas:

 \begin{lemma}
    \label{lemma:variance-control}
    Under the conditions of Theorem~\ref{theorem:confidence-set-equivalence},
    \begin{equation*}
      \VhatDelta(\theta\opt + h_n)
      \cp \cov(\nabla \loss_{\theta\opt}
      - \nabla \loss_{\theta\opt}^f)
      ~~ \mbox{and} ~~
      \Vhatf(\theta\opt + h_n)
      \cp \cov(\nabla \loss_{\theta\opt}^f).
    \end{equation*}
  \end{lemma}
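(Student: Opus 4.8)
The plan is to reduce the claim to uniform laws of large numbers for the raw first and second moments and then localize at $\theta\opt$. Write $\theta_n := \theta\opt + h_n$, so that $\theta_n \to \theta\opt$ (since $\sqrt n\, h_n \to h$, and $N \to \infty$ because $n/N \to r$), and abbreviate $g_\theta := \nabla\loss_\theta - \nabla\loss_\theta^f$, the integrand relevant for $\VhatDelta$; for $\Vhatf$ one replaces $g_\theta$ by $\nabla\loss_\theta^f$ and the labeled average $\E_n$ by the unlabeled average $\wt{\E}_N^f$. Each estimator is an empirical covariance, so entrywise
\[
  \VhatDelta(\theta_n) = \E_n\big[g_{\theta_n} g_{\theta_n}^\top\big] - \big(\E_n g_{\theta_n}\big)\big(\E_n g_{\theta_n}\big)^\top .
\]
By the continuous mapping theorem it therefore suffices to prove that the empirical means converge to $\E g_{\theta\opt}$ and that the empirical second-moment matrices converge to $\E[g_{\theta\opt} g_{\theta\opt}^\top]$; the statements for $\Vhatf$ are identical on the unlabeled sample. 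I treat the $\Delta$ case below.

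For the first moments, the class $\{g_\theta : \theta \in B\}$ is $\P$-Glivenko--Cantelli, being the difference of the two $\P$-Donsker (hence Glivenko--Cantelli) classes $\mc{L}$ and $\mc{L}^f$ of Assumption~\ref{assumption:donsker}. Thus $\sup_{\theta \in B}\norm{(\E_n - \E) g_\theta} \cp 0$. Writing $\E_n g_{\theta_n} = \E g_{\theta_n} + (\E_n - \E) g_{\theta_n}$, the second term is bounded by this vanishing supremum, while $\E g_{\theta_n} \to \E g_{\theta\opt}$ by continuity of $\theta \mapsto \E g_\theta$ at $\theta\opt$ --- itself a dominated-convergence consequence of the square-integrable Lipschitz envelope $M(X,Y) + M(X)$ of Definition~\ref{definition:smooth-enough}\ref{item:locally-lipschitz}. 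Hence $\E_n g_{\theta_n} \cp \E g_{\theta\opt}$.

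The second moments are the crux, and the main obstacle is that the gradients carry only a square-integrable envelope, so the products $g_\theta g_\theta^\top$ are merely integrable; a naive Chebyshev bound fails because fourth moments of $g_{\theta\opt}$ may be infinite. I resolve this by a Glivenko--Cantelli preservation argument: since $\{g_\theta\}$ is Glivenko--Cantelli with square-integrable envelope, the entrywise product class $\{(g_\theta)_j (g_\theta)_k : \theta \in B\}$ has integrable envelope $(M(X,Y)+M(X))^2$ and is itself $\P$-Glivenko--Cantelli by the standard preservation result for products of such classes. This gives $\sup_{\theta \in B}\norm{(\E_n - \E)[g_\theta g_\theta^\top]} \cp 0$, and combining with continuity of $\theta \mapsto \E[g_\theta g_\theta^\top]$ at $\theta\opt$ (again dominated convergence under the $L^2$ envelope) yields $\E_n[g_{\theta_n} g_{\theta_n}^\top] \cp \E[g_{\theta\opt} g_{\theta\opt}^\top]$.

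Assembling the two displays, the continuous mapping theorem gives $\VhatDelta(\theta_n) \cp \var(\nabla\loss_{\theta\opt} - \nabla\loss_{\theta\opt}^f)$, and the identical argument on the unlabeled sample gives $\Vhatf(\theta_n) \cp \var(\nabla\loss_{\theta\opt}^f)$. If one prefers to avoid invoking product-class preservation, the same second-moment limit follows from the decomposition $g_{\theta_n} g_{\theta_n}^\top = g_{\theta\opt} g_{\theta\opt}^\top + d_n g_{\theta\opt}^\top + g_{\theta\opt} d_n^\top + d_n d_n^\top$ with $d_n := g_{\theta_n} - g_{\theta\opt}$: the fixed term obeys the ordinary law of large numbers, and because $\theta_n$ is deterministic the increment terms are controlled by Markov's inequality applied to $\E_n \norm{d_n}^2$ (whose mean $\E\norm{d_n}^2 \to 0$ by $L^2(\P)$-continuity) together with Cauchy--Schwarz. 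The choice between these two routes is the only real decision point; the remaining steps are bookkeeping.
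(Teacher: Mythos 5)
Your main argument is, at its core, the same as the paper's, just written out in full. The paper disposes of the lemma in a single sentence, citing \citet[Lemma~2.10.14]{VanDerVaartWe96} --- squares of Donsker classes are Glivenko--Cantelli --- to get uniform convergence of the empirical second moments over the classes of Assumption~\ref{assumption:donsker}; your product-class Glivenko--Cantelli preservation step does the identical job (the two are interchangeable via polarization, since coordinate sums and differences of a Donsker class remain Donsker). Everything else you wrote --- the decomposition of the empirical covariance into second moments minus an outer product of first moments, the Glivenko--Cantelli argument for the difference class $\{\nabla\loss_\theta - \nabla\loss_\theta^f\}$, and the localization from $\theta\opt + h_n$ to $\theta\opt$ --- is left entirely implicit in the paper, so there you are filling gaps rather than diverging. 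Your second, preservation-free route (expand $g_{\theta_n}g_{\theta_n}^\top$ around $g_{\theta\opt}g_{\theta\opt}^\top$, apply the ordinary law of large numbers to the fixed term and Markov plus Cauchy--Schwarz to the increments, exploiting that $h_n$ is deterministic) is genuinely different from anything in the paper and more elementary, at the price of leaning directly on $L^2(\P)$-continuity of $\theta \mapsto \nabla\loss_\theta$ at $\theta\opt$.

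One step deserves a flag, though it is a weakness of the lemma's stated hypotheses shared with the paper rather than a defect specific to your argument: every route (yours and the paper's) needs the population moments to converge, $\E[\nabla\loss_{\theta_n}\nabla\loss_{\theta_n}^\top] \to \E[\nabla\loss_{\theta\opt}\nabla\loss_{\theta\opt}^\top]$ and likewise for the $f$ and difference versions, i.e.\ some continuity of $\theta \mapsto \nabla\loss_\theta$ in $L^2(\P)$ at $\theta\opt$. Your dominated-convergence appeal presupposes $\nabla\loss_{\theta_n} \to \nabla\loss_{\theta\opt}$ almost surely, but Definition~\ref{definition:smooth-enough} gives differentiability only \emph{at} $\theta\opt$, and a locally Lipschitz loss differentiable at a single point can have an oscillating gradient nearby: loss increments of the form $Z(x,y)\int_0^{\theta - \theta\opt}\sin(1/t)\,dt$ with $\E[Z]=0$ and $\var(Z)>0$ satisfy every hypothesis of the lemma (including the Donsker condition, as the resulting gradient class sits inside a finite-dimensional class), yet make $\var(\nabla\loss_{\theta_n} - \nabla\loss_{\theta_n}^f)$ --- and with it $\VhatDelta(\theta_n)$ --- oscillate rather than converge. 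The missing ingredient is a.s.\ or $L^2$ continuity of the gradient near $\theta\opt$, for instance the ``$\nabla\loss_\theta$ locally Lipschitz'' condition that the paper mentions as sufficient for Assumption~\ref{assumption:donsker} but never formally imposes. The paper's one-sentence proof silently skips the same localization, so on this point your write-up is more explicit about what is actually being used, not less correct.
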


\begin{lemma}
    \label{lemma:gradient-process-linear}
   Define the (sub)gradient process
\begin{equation}
  \label{eqn:gradient-process}
\widehat G(\theta) \defeq \E_n[\nabla \loss_{\theta}
  - \nabla \loss_{\theta}^f] - \E[\nabla \loss_{\theta}
  - \nabla \loss_{\theta}^f] + \widetilde \E_N[\nabla \loss_{\theta}^f] - \E[\nabla \loss_{\theta}^f].
 \end{equation}
Then, under the conditions of Theorem~\ref{theorem:confidence-set-equivalence}, $\widehat G(\theta\opt + h_n) = \widehat G(\theta\opt) + o_P(1/\sqrt{n}).$
\end{lemma}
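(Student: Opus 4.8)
The plan is to recognize $\widehat G$ as a sum of two centered empirical processes, one over the labeled and one over the unlabeled sample, and to push the increment to zero using the asymptotic equicontinuity that the $\P$-Donsker property (Assumption~\ref{assumption:donsker}) supplies. Multiplying through by $\sqrt n$ and using the empirical-process notation of \eqref{eq:Gn}--\eqref{eq:Gnf}, the claim $\widehat G(\theta\opt + h_n) = \widehat G(\theta\opt) + o_P(1/\sqrt n)$ is equivalent to
\begin{equation*}
  \GG_n\big[(\nabla \loss_{\theta\opt + h_n} - \nabla \loss_{\theta\opt + h_n}^f) - (\nabla \loss_{\theta\opt} - \nabla \loss_{\theta\opt}^f)\big]
  + \sqrt{\frac{n}{N}}\,\widetilde \GG_N^f\big[\nabla \loss_{\theta\opt + h_n}^f - \nabla \loss_{\theta\opt}^f\big] = o_P(1),
\end{equation*}
so it suffices to show that each of the two increments vanishes in probability.

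For the first term I would invoke the defining property of a $\P$-Donsker class: the empirical process $\GG_n$ is asymptotically equicontinuous in the $L^2(\P)$ seminorm $\rho$, meaning $\GG_n(f_n - g_n) \cp 0$ whenever $f_n, g_n$ lie in the class and $\rho(f_n, g_n) \to 0$ (see, e.g., \citealp[Section~19.2]{VanDerVaart98}). Since $\mc{L}$ and $\mc{L}^f$ are both $\P$-Donsker, the class of differences $\{\nabla\loss_\theta - \nabla\loss_\theta^f : \theta \in B\}$ is again $\P$-Donsker, being a subset of the Donsker class of sums. It then remains to verify the $L^2$ distances
\[
\norm{\nabla \loss_{\theta\opt + h_n} - \nabla \loss_{\theta\opt}}_{L^2(\P)} \to 0, \qquad \norm{\nabla \loss_{\theta\opt + h_n}^f - \nabla \loss_{\theta\opt}^f}_{L^2(\Px)} \to 0,
\]
which I expect to hold because $h_n = h/\sqrt n \to 0$ and the smooth-enough conditions (Definition~\ref{definition:smooth-enough}) furnish a square-integrable local Lipschitz majorant dominating the gradients near $\theta\opt$, so that $\theta \mapsto \nabla\loss_\theta$ is $L^2$-continuous at $\theta\opt$. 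The second increment is handled identically: $\mc{L}^f$ is $\P$-Donsker, $N \to \infty$ since $n/N \to r$, and $\sqrt{n/N} \to \sqrt r$ is bounded, so the prefactor does not disturb the $o_P(1)$ conclusion. Dividing the displayed increment by $\sqrt n$ then yields exactly $\widehat G(\theta\opt + h_n) - \widehat G(\theta\opt) = o_P(1/\sqrt n)$.

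The step I expect to be the main obstacle is establishing the $L^2$-seminorm convergence $\rho(\theta\opt + h_n, \theta\opt) \to 0$ of the indexing gradients, i.e.\ $L^2$-continuity of $\theta \mapsto \nabla \loss_\theta$ at $\theta\opt$. We assume only a.e.\ differentiability of $\loss_\theta$ at $\theta\opt$ together with local Lipschitz domination, which supplies integrable bounds but not, by itself, pointwise continuity of the (sub)gradient. I would resolve this by dominated convergence: on the full-measure event where $\nabla\loss_\theta$ is continuous at $\theta\opt$, pointwise convergence holds, while the square-integrable Lipschitz majorant $\lipobj$ supplies the domination, giving the $L^2$ limit. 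In the common case flagged after Assumption~\ref{assumption:donsker}, where $\nabla \loss_\theta$ is itself locally Lipschitz, this is immediate with an explicit $O(\norm{h_n})$ rate, and the general case reduces to confirming that the subgradient selection inherits $L^2$-continuity from total boundedness of the Donsker class.
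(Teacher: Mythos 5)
Your proposal is correct and takes essentially the same route as the paper: both reduce the claim to increments of centered (Donsker) empirical processes at indices whose separation vanishes, invoke the asymptotic stochastic equicontinuity supplied by Assumption~\ref{assumption:donsker}, and then rescale by $\sqrt{n}$. The only differences are cosmetic---the paper splits $\widehat G$ into three processes ($\GG_n$, $\widetilde \GG_N^f$, $\GG_n^f$) where you use two, and your explicit verification of the $L^2$-seminorm convergence via the square-integrable Lipschitz majorant spells out a step the paper leaves implicit.
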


Lemma \ref{lemma:variance-control} follows by the classical
  result~\cite[Lemma 2.10.14]{VanDerVaartWe96} that if $\mc{F}$ is a Donsker
  class, then $\mc{F}^2 = \{f^2 : f \in \mc{F}\}$ is Glivenko-Cantelli.
  
  Lemma~\ref{lemma:gradient-process-linear} follows by the fact that the process is asymptotically stochastically equicontinuous, which follows by Assumption \ref{assumption:donsker}; see below for the proof.
  
\begin{proof}[Proof of Lemma \ref{lemma:gradient-process-linear}]
We adopt the definitions of $\GG_n g$, $\GG_N^f g$, and $\GG_n^f g$ from Equations \eqref{eq:Gn}, \eqref{eq:GNf}, and \eqref{eq:Gnf}, respectively.
    Because
    the process $[\GG_n \nabla \loss_\theta]_{\theta \in B}$
    converges to a (tight) Gaussian process in $\theta$
    by Assumption~\ref{assumption:donsker},
    as $h_n = O_P(1/\sqrt{n})$ we necessarily have
    \begin{equation*}
      \GG_n \nabla \loss_{\theta\opt + h_n}
      - \GG_n\nabla \loss_{\theta\opt}
      \cp 0
    \end{equation*}
    (as the process is asymptotically stochastically equicontinuous
    by Assumption~\ref{assumption:donsker}). Similarly,
    Assumption~\ref{assumption:donsker} implies $\GG_N^f \nabla \loss_{\theta\opt + h_n}
      - \GG_N^f\nabla \loss_{\theta\opt} \cp 0$ and $\GG_n^f \nabla \loss_{\theta\opt + h_n}
      - \GG_n^f\nabla \loss_{\theta\opt} \cp 0$.
    Adding and subtracting the preceding quantities and rescaling by $\sqrt{n}$,
    we obtain the lemma.
\end{proof}

%  Now we use Lemma~\ref{lemma:gradient-process-linear} to give the
%  asymptotically equivalent formulations of $T$ and $U$
%  evaluated at the alternatives $\theta\opt + h_n$.
%By the analysis in Theorem \ref{theorem:normality}, we have
%  \begin{equation*}
%    \thetaPP = \theta\opt
%    + \nabla^2 L(\theta\opt)^{-1}
%    \widehat G(\theta\opt)
%    + o_P(1/\sqrt{n}).
%  \end{equation*}
Now we give asymptotically equivalent formulations of $T_n$ and $U_n$ evaluated at the alternatives $\theta\opt + h_n$. We can write $T_n$ as
\begin{equation*}
  T_n(\theta\opt + h_n)
  = \sqrt{n} \widehat\Sigma^{-1/2} (\thetaPP - \theta\opt - h_n)
  = \sqrt{n} \widehat\Sigma^{-1/2} (\thetaPP - \theta\opt)
  - \Sigma^{-1/2} h + o_P(1).
\end{equation*}
Turning to $U_n$, we let $V_{\Delta,\theta\opt} = \cov(\nabla \loss_{\theta\opt}
- \nabla \loss_{\theta\opt}^f)$ and
$V_{f,\theta\opt} = \cov(\nabla \loss_{\theta\opt}^f)$. Then Lemma \ref{lemma:variance-control} and
Lemma~\ref{lemma:gradient-process-linear} give
\begin{align*}
  &U_n(\theta\opt + h_n)\\
  &\quad \stackrel{(i)}{=} \sqrt{n} (V_{\Delta,\theta\opt} + \ratio V_{f,\theta\opt} + o_P(1))^{-1/2}
  \widehat G(\theta\opt + h_n)
  + \sqrt{n} (V_{\Delta,\theta\opt} + \ratio V_{f,\theta\opt} + o_P(1))^{-1/2}
  \nabla L(\theta\opt + h_n)
  \\
  &\quad \stackrel{(ii)}{=}
  \sqrt{n} (V_{\Delta,\theta\opt} + \ratio V_{f,\theta\opt})^{-1/2}
  \widehat G(\theta\opt) + (V_{\Delta,\theta\opt} + \ratio V_{f,\theta\opt})^{-1/2} H_{\theta\opt} h
  + o_P(1),
\end{align*}
where step~$(i)$ follows from Lemma~\ref{lemma:variance-control}
and $(ii)$ from Lemma~\ref{lemma:gradient-process-linear} coupled
with Slutsky's lemmas and
that $\nabla L(\theta\opt + h_n) = \nabla L(\theta\opt + h_n)
- \nabla L(\theta\opt) = \nabla^2 L(\theta\opt)h_n + o(\norm{h_n})
= H_{\theta\opt} h_n + o(1/\sqrt{n})$.

Rewriting the preceding two displays and applying Theorem \ref{theorem:normality} to the first,
we have
\begin{equation*}
  T_n(\theta\opt + h_n) \cd \normal(-\Sigma^{-1/2} h, I)
  ~~ \mbox{and} ~~
  U_n(\theta\opt + h_n) \cd \normal((V_{\Delta,\theta\opt} + \ratio V_{f,\theta\opt})^{-1/2} H_{\theta\opt} h, I).
\end{equation*}
The squared norm of the latter mean satisfies
\begin{equation*}
  \ltwo{(V_{\Delta,\theta\opt} + \ratio V_{f,\theta\opt})^{-1/2} H_{\theta\opt} h}^2
  = h^T H_{\theta\opt} (V_{\Delta,\theta\opt} + \ratio V_{f,\theta\opt})^{-1}
  H_{\theta\opt} h =
  h^T \Sigma^{-1} h
\end{equation*}
by the definition of $\Sigma$. Therefore, the rotational invariance of the Gaussian yields
\begin{equation}
  \begin{split}
    \P(\theta\opt + h_n \in \C_\alpha^{\mathrm{PP},T})
    & = \P(\ltwo{T_n(\theta\opt + h_n)}^2
    \le \chi^2_{d,1-\alpha})
    = \P\left(\ltwo{-\Sigma^{-1/2} h + Z}^2 \le
    \chi^2_{d,1-\alpha}\right) + o(1) \\
    \P(\theta\opt + h_n \in \C_\alpha^{\mathrm{PP},U})
    & = \P(\ltwo{U_n(\theta\opt + h_n)}^2
    \le \chi^2_{d,1-\alpha})
    = \P\left(\ltwo{-\Sigma^{-1/2} h + Z}^2 \le
    \chi^2_{d,1-\alpha}\right) + o(1),
  \end{split}
  \label{eqn:almost-home}
\end{equation}
where $Z \sim \normal(0, I)$.

Finally, we employ a standard compactification argument. Assume for the sake
of contradiction that
\begin{equation*}
  \limsup_n \sup_{\norm{h} \le c}
  \left|\P(\theta\opt + h / \sqrt{n} \in \CPP_\alpha)
  - \P\left(\ltwo{-\Sigma^{-1/2} h + Z}^2 \le \chi^2_{d,1-\alpha}\right)
  \right| > 0,
\end{equation*}
for one of $\CPP_\alpha \in \{\C_\alpha^{\mathrm{PP},T}, \C_\alpha^{\mathrm{PP},U}\}$.
If this is the case, then there must be a bounded subsequence of $h_n$
with
\begin{equation*}
  \left|\P(\theta\opt + h_n / \sqrt{n} \in \CPP_\alpha)
  - \P\left(\ltwo{-\Sigma^{-1/2} h_n + Z}^2 \le \chi^2_{d,1-\alpha}\right)
  \right| \to a > 0.
\end{equation*}
Take a convergent subsequence from this set, so that $\sqrt{n} (h_n / \sqrt{n})
= h_n \to h$. The existence of this subsequence contradicts
the limits~\eqref{eqn:almost-home}. This proves the first claim of the theorem.

For the second claim, take any $t_n = \omega(1/\sqrt{n})$ and $\delta\in(0,1)$. Then
\begin{align*}
  \P\left( \theta\opt + h t_n \in \C_\alpha^{\mathrm{PP},T}\right) &\leq \P\left( \|\sqrt{n}\widehat\Sigma^{-1/2}  h t_n \| \leq \chi_{d,1-\alpha} + \|\sqrt{n}\widehat\Sigma^{-1/2}  (\thetaPP - \theta\opt) \|\right)\\
  &\leq \P\left( \|\sqrt{n}\widehat\Sigma^{-1/2}  h t_n \| \leq \chi_{d,1-\alpha} + \chi_{d,1-\delta}\right) + \P\left( \|\sqrt{n}\widehat\Sigma^{-1/2}  (\thetaPP - \theta\opt) \| > \chi_{d,1-\delta}\right).
\end{align*}
Taking the limit, we get
\begin{equation*}
  \limsup_n \sup_{\|h\|\geq c} \P\left( \theta\opt + h t_n \in \C_\alpha^{\mathrm{PP},T}\right) \leq  \delta.
\end{equation*}
As $\delta$ was arbitrary, we conclude that $\limsup_n \sup_{\|h\|\geq c}
\P( \theta\opt + h t_n \in \C_\alpha^{\mathrm{PP},T}) = 0$.

\subsection{Proof of Proposition \ref{prop:optimal-lambda}}
\label{sec:proof-optimal-lambda}

By definition
\begin{equation*}
  \lambda\opt = \argmin_{\lambda} \Tr\left(\Sigma^\lambda\right) = \argmin_{\lambda} \Tr\left(H_{\theta\opt}^{-1} \left(r \cdot \lambda^2 \cov(\nabla \loss_{\theta\opt}^f) + \cov( \nabla \loss_{\theta\opt} -  \lambda \nabla \loss_{\theta\opt}^f ) \right) H_{\theta\opt}^{-1}\right).
\end{equation*}
Writing $\cov( \nabla \loss_{\theta\opt} - \lambda \nabla
\loss_{\theta\opt}^f ) = \cov(\nabla \loss_{\theta\opt}) + \lambda^2
\cov(\nabla \loss_{\theta\opt}^f) - \lambda (\Cov(\nabla
\loss_{\theta\opt}, \nabla \loss_{\theta\opt}^f) + \Cov(\nabla
\loss_{\theta\opt}^f, \nabla \loss_{\theta\opt} ))$ and applying the
linearity of the trace, $\lambda\opt$ evidently minimizes
\begin{align*}
\lambda^2(1+r) \Tr\left(H_{\theta\opt}^{-1} \cov(\nabla \loss_{\theta\opt}^f)H_{\theta\opt}^{-1}\right) - \lambda \Tr\left(H_{\theta\opt}^{-1}\left(\Cov\left(\nabla \loss_{\theta\opt}, \nabla \loss_{\theta\opt}^f \right) + \Cov\left(\nabla \loss_{\theta\opt}^f, \nabla \loss_{\theta\opt} \right) \right)H_{\theta\opt}^{-1} \right).
\end{align*}
Analytically optimizing this quadratic gives the optimal choice
\begin{equation*}
  \lambda\opt = \frac{\Tr\left(H_{\theta\opt}^{-1}\left(\Cov\left(\nabla \loss_{\theta\opt}, \nabla \loss_{\theta\opt}^f \right) + \Cov\left(\nabla \loss_{\theta\opt}^f, \nabla \loss_{\theta\opt} \right) \right)H_{\theta\opt}^{-1} \right)}{2(1+r) \Tr\left(H_{\theta\opt}^{-1}\cov(\nabla \loss_{\theta\opt}^f) H_{\theta\opt}^{-1} \right) }.
\end{equation*}

% -*- Mode: latex -*- %

\section{Experiments with one-step PPI}
\label{app:one-step-ppi}

We repeat the synthetic experiments from the main text
(Section~\ref{sec:simulations}), but only compare PPI (with $\lambda=1$), power-tuned PPI
(\ppipp) with clipping $\hat \lambda$ to $[0,1]$, and one-step PPI.  The figures below show the results in sequence;
one-step PPI and power-tuned PPI overlap almost completely in all plots but
one: Figure~\ref{fig:onestep-ppi-mean-anticorrelated}.  This figure shows
results from a new experiment using anticorrelated predictions (the same
setting as in Figure~\ref{fig:mean_estimation}, except that we take $f(X_i)
= -Y_i + \epsilon$).  In this setting, one-step PPI outperforms PPI and
power-tuned PPI because it allows $\hat\lambda$ to be negative.

\begin{figure}[ht]
    \centering
    \includegraphics[width=0.8\textwidth]{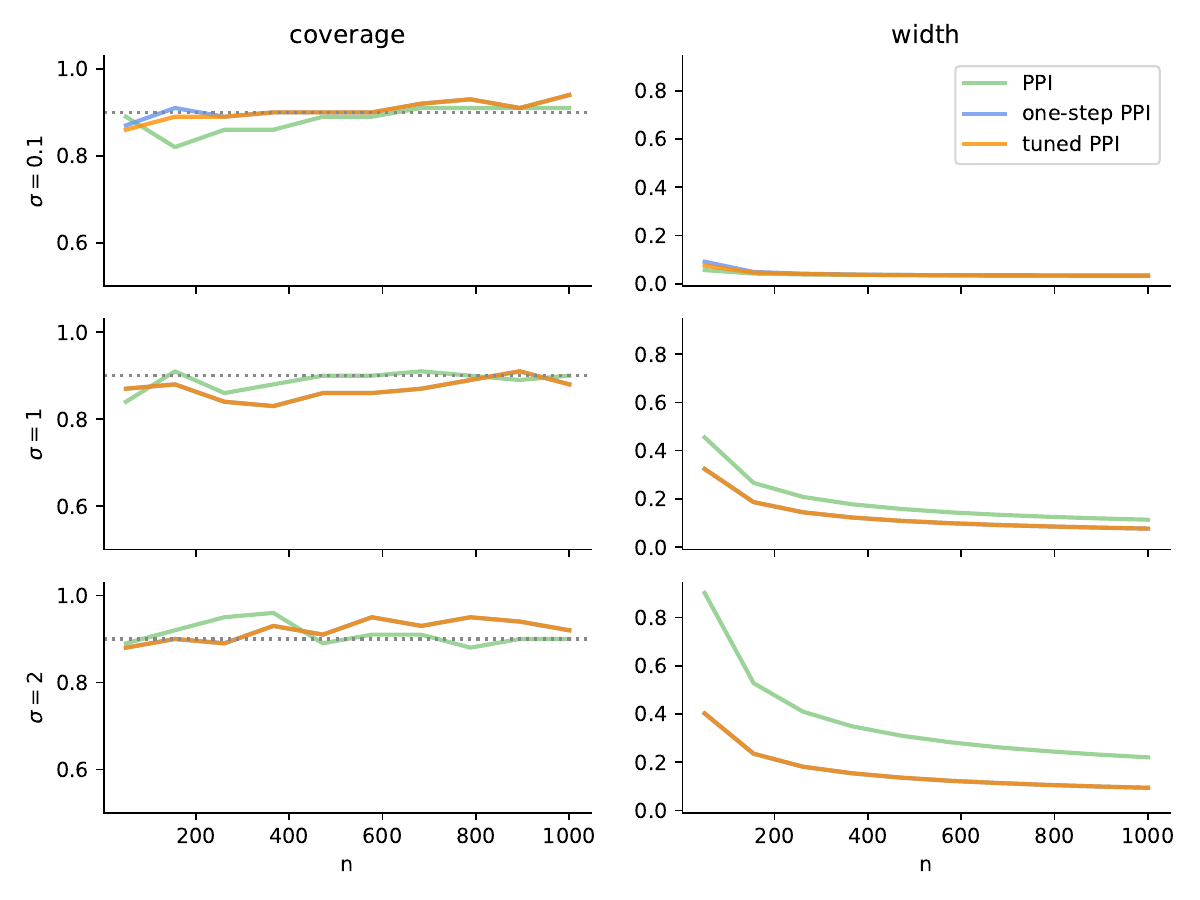}
    \caption{Mean estimation with one-step PPI. The left panel shows coverage and the right shows width. The setup is the same as in Figure \ref{fig:mean_estimation}.}
    \label{fig:onestep-ppi-mean}
\end{figure}

\begin{figure}[ht]
    \centering
    \includegraphics[width=0.8\textwidth]{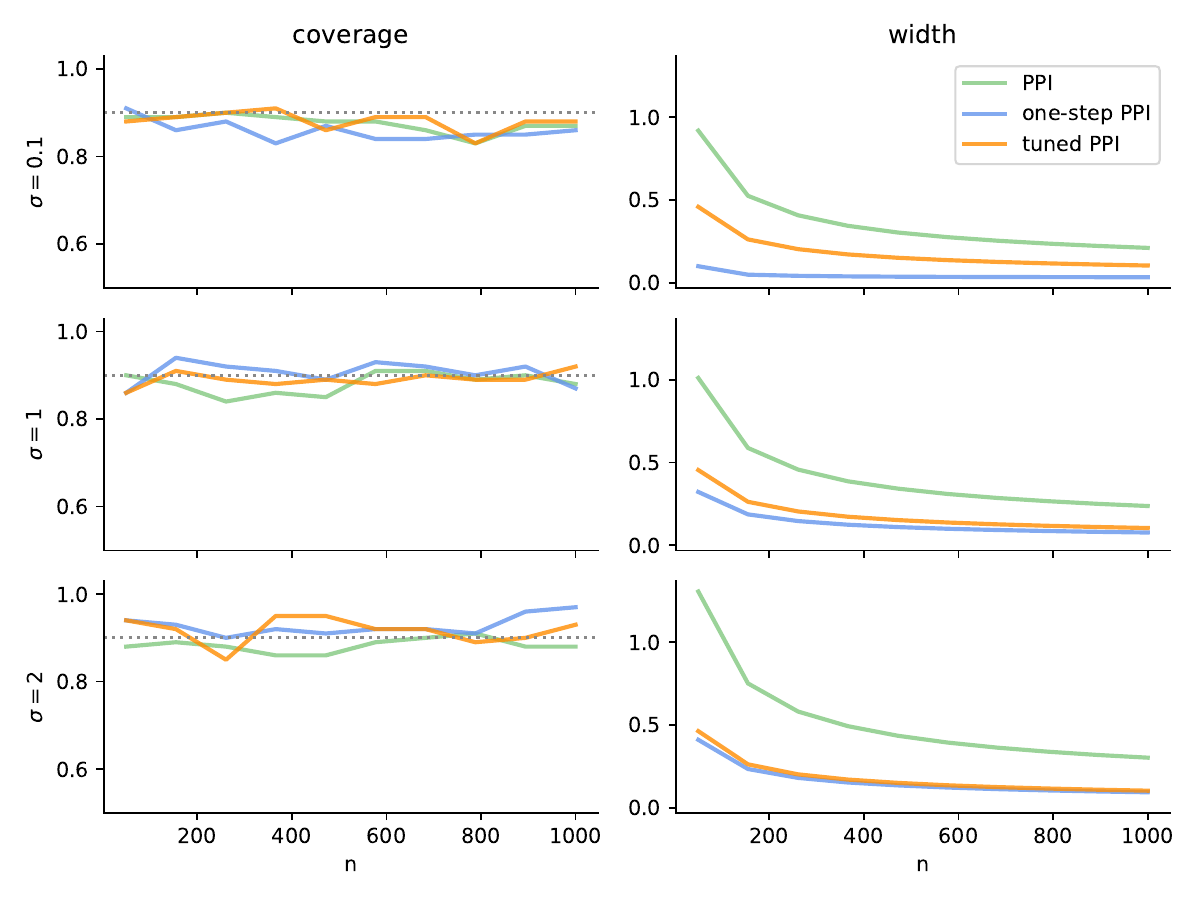}
    \caption{Mean estimation with one-step PPI and anticorrelated predictions. The left panel shows coverage and the right shows width. The setup is the same as in Figure \ref{fig:mean_estimation}, only we multiply the true outcomes by $-1$ when generating the predictions, $f(X_i) = -Y_i + \epsilon$.}
    \label{fig:onestep-ppi-mean-anticorrelated}
\end{figure}

\begin{figure}[ht]
    \centering
    \includegraphics[width=0.8\textwidth]{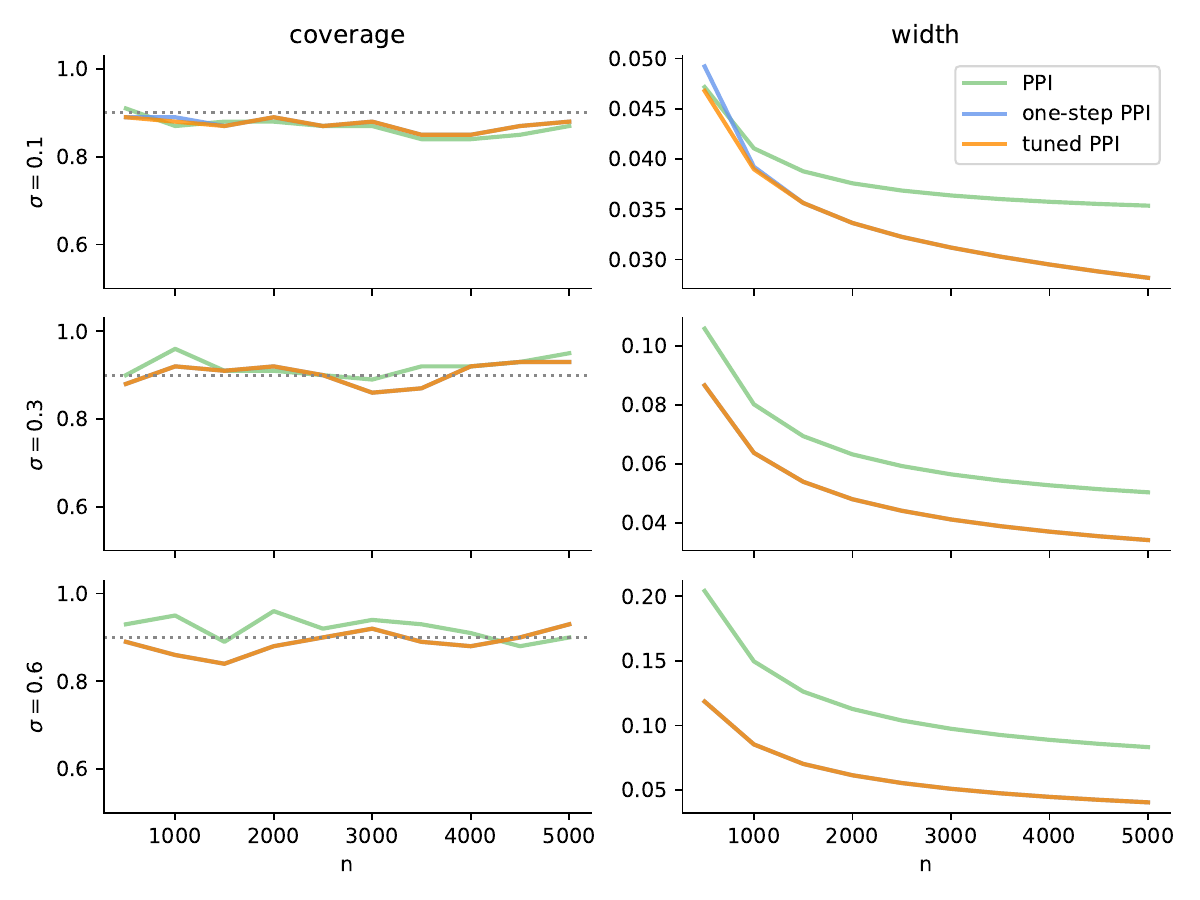}
    \caption{Linear regression with one-step PPI. The left panel shows coverage and the right shows width. The setup is the same as in Figure \ref{fig:linear_regression}.}
    \label{fig:onestep-ppi-linear}
\end{figure}
\begin{figure}[ht]
    \centering
    \includegraphics[width=0.8\textwidth]{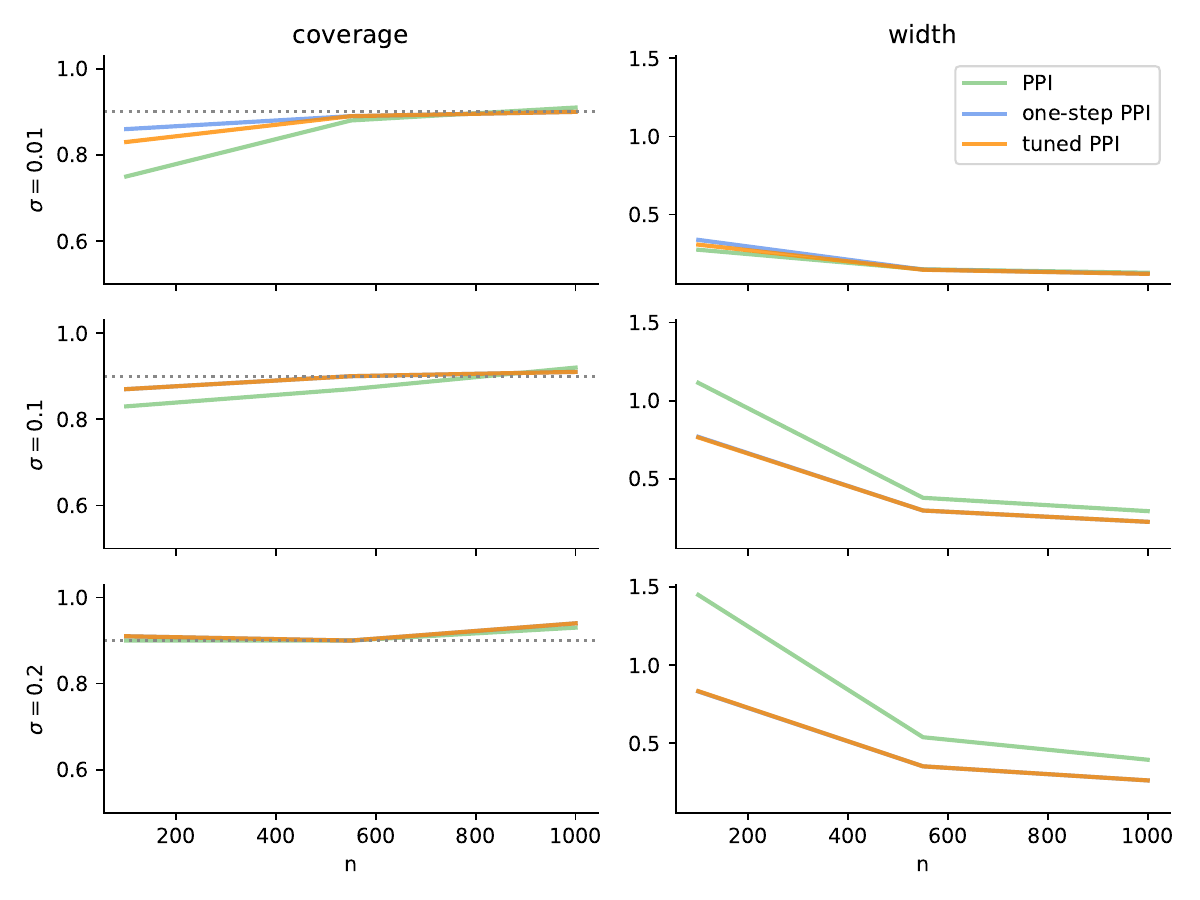}
    \caption{Logistic regression with one-step PPI. The left panel shows coverage and the right shows width. The setup is the same as in Figure \ref{fig:logistic_regression}.}
    \label{fig:onestep-ppi-logistic}
\end{figure}

\section{Confidence intervals on the odds ratio}
\label{app:odds-ratio}

\newcommand{\tZ}{\widetilde{Z}}
\newcommand{\diag}{\textup{diag}}

We provide an asymptotically exact prediction-powered confidence interval on
the odds ratio, which allows for power tuning. We are interested in
quantifying the strength of association between two events $Z\in\{0,1\}$ and
$Y\in\{0,1\}$, defining the odds ratio
\begin{equation}
  \label{eq:odds_ratio}
  \frac{\mu_1/(1-\mu_1)}{\mu_0/(1-\mu_0)},
\end{equation}
where $\mu_1 = \P(Y = 1 \mid Z=1)$ and $\mu_0 = \P(Y = 1 \mid Z=0)$.  We
consider the setting where the labels $Y$ are expensive or difficult to
measure, while the indicator $Z$ is readily available. We therefore have a
small dataset of size $n$ with both $Z_i$ and $Y_i$ as well as other
covariates $X_i$ and a large dataset of size $N$ with only $\tZ_i$ and
$\tX_i$ on which $f(\tX_i)$ imputes the missing labels.
We construct confidence intervals for the log odds ratio
$\theta\opt = \log\frac{\mu_1/(1-\mu_1)}{\mu_0/(1-\mu_0)}$, which yields
a confidence interval the odds
ratio~\eqref{eq:odds_ratio} by exponentiating.

We condition on the indicator variables $\{Z_i\}_{i=1}^n$ and
$\{\tZ_i\}_{i=1}^N$.
For a bit $b \in \{0, 1\}$, denote
$n_b = |\{i \in [n] \mid Z_i = b\}|$
and $N_b = |\{i \in [N] \mid \tZ_i = b\}|$,
so that the natural estimators of the means
$\mu_0$ and $\mu_1$ are
\begin{align*}
  \hat{\mu}_b^\lambda
  & \defeq
  \frac{1}{n_b} \sum_{i=1}^n \ind{Y_i = 1, Z_i = b} +
  \lambda \left(\frac{1}{N_b} \sum_{i=1}^N \ind{f(\Xt_i)=1, \tilde Z_i = b} -
  \frac{1}{n_b} \sum_{i=1}^n \ind{f(X_i)=1,  Z_i = b} \right) \\
  & \,=
  P_n(Y = 1 \mid Z = b) + \lambda\left(\wt{P}_n(f = 1 \mid Z = b)
  - P_n(f = 1 \mid Z = b)\right),
\end{align*}
where $P_n$ and $\wt{P}_N$ denote the empirical distributions
on the labeled and unlabeled samples.
Let
$r_b = \lim \frac{n_b}{N_b}$
and define the
conditional variances
\begin{equation*}
  \sigma_b^2(\lambda)
  \defeq \var(Y \mid Z = b)
  - 2 \lambda \cov(Y, f(X) \mid Z = b)
  + \lambda^2 (1 + r_b) \var(f(X) \mid Z = b),
\end{equation*}
indexed by $b \in \{0, 1\}$ and $\lambda \in \R$.
Then
Theorem~\ref{theorem:normality} shows that
\begin{align*}
  \sqrt{n}_b (\hat{\mu}_b^\lambda - \mu_b)
  \cd \normal\left(0, \sigma^2_b(\lambda)\right)
  %%   \var(Y - \lambda f(X) \mid Z = b)
  %%   + \lambda^2 r_b \var(f(X) \mid Z = b)
  %%   \right)
\end{align*}
conditionally almost surely (conditionally on the sequences
$Z_i$ and $\tZ_i$).
Because $\hat{\mu}_0^{\lambda_0}$ and $\hat{\mu}_1^{\lambda_1}$ are
independent conditional on $\{Z_i\}_{i=1}^n$ and $\{\tZ_i\}_{i=1}^N$,
we obtain (a.s.) the joint convergence
\begin{equation*}
  \sqrt{n}\left((\hat{\mu}^{\lambda_0}_0 , \hat{\mu}^{\lambda_1}_1)
  - (\mu_0, \mu_1)
  \right)
  \cd \normal\left(\zeros,
  \diag(\sigma_0^2(\lambda_0), \sigma_1^2(\lambda_1))\right).
%%   \sqrt{n}\left(\left[
%%     \begin{matrix} \what{\mu}^{\lambda_0}_0 \\
%%       \what{\mu}^{\lambda_1}_1 \end{matrix}\right]
%%   - \left[\begin{matrix} \mu_0 \\ \mu_1 \end{matrix}\right]
%%   \right)
%%   \cd \normal\left(\zeros, \left[\begin{matrix}
%%       \sigma_0^2(\lambda_0) & 0 \\ 0 & \sigma_1^2(\lambda_1)
%%     \end{matrix} \right]\right).
\end{equation*}
Applying the delta method
to $\phi(t) = \log\frac{t}{1 - t}$ (with $\phi'(t) = \frac{1}{t(1 - t)}$),
for any choice
of $\lambda_0, \lambda_1$, the natural estimator
%% \begin{equation*}
$\hat{\theta} =
\log \frac{\hat{\mu}_1^{\lambda_1}}{1 - \hat{\mu}_1^{\lambda_1}}
- \log \frac{\hat{\mu}_0^{\lambda_0}}{1 - \hat{\mu}_0^{\lambda_0}}$
%% \end{equation*}
satisfies
\begin{align}
  \nonumber
  \lefteqn{\sqrt{n} (\hat{\theta} - \theta\opt)} \\
  & = \sqrt{\frac{n}{n_1}} \sqrt{n_1}
  \left(
  \log \frac{\hat{\mu}_1^{\lambda_1}}{1 - \hat{\mu}_1^{\lambda_1}}
  - \log \frac{\mu_1}{1 - \mu_1}\right)
  - \sqrt{\frac{n}{n_0}} \sqrt{n_0}
  \left(
  \log \frac{\hat{\mu}_0^{\lambda_0}}{1 - \hat{\mu}_0^{\lambda_0}}
  - \log \frac{\mu_0}{1 - \mu_0}\right) \nonumber \\
  & \cd \normal\left(0, \frac{1}{p_1 \mu_1^2(1 - \mu_1)^2}
  \sigma^2_1(\lambda_1)
  + \frac{1}{p_0 \mu_0^2 (1 - \mu_0)^2} \sigma^2_0(\lambda_0)
  \right),
  \label{eqn:limit-log-odds-ratio}
\end{align}
where $p_b = \lim \frac{n_b}{N} = \P(Z = b)$ for $b \in \{0, 1\}$.

%% \jcdcomment{Missed some squares here.}

Finally, we apply Proposition~\ref{prop:optimal-lambda}
to obtain the natural plug-in variance-minimizing estimates for the
weights $\lambda$, which gives
\begin{equation*}
  \hat{\lambda}_b =
  \frac{\widehat\Cov_n(Y_i, f(X_i) \mid Z_i = b)}{(1 + \frac {n_b} {N_b})
    \widehat \var_{N+n}(f(X_i) \mid Z_i = b)}.
\end{equation*}
Standard continuous mapping arguments~\cite{VanDerVaart98}
then imply that the \ppipp estimator
\begin{equation*}
  \thetaPP
  = \log \frac{\hat{\mu}_1^{\hat{\lambda}_1}}{1 - \hat{\mu}_1^{\hat{\lambda}_1}}
  - \log \frac{\hat{\mu}_0^{\hat{\lambda}_0}}{1 - \hat{\mu}_0^{\hat{\lambda}_0}},
\end{equation*}
when we substitute in the asymptotic~\eqref{eqn:limit-log-odds-ratio},
satisfies
\begin{equation*}
  \sqrt{n}(\thetaPP - \theta\opt)
  \cd \normal\left(0, \frac{1}{p_1 \mu_1^2(1 - \mu_1)^2} \inf_\lambda
  \sigma_1^2(\lambda)
  + \frac{1}{p_0 \mu_0^2(1 - \mu_0)^2} \inf_\lambda
  \sigma_0^2(\lambda)\right).
\end{equation*}
The CLT implies a confidence interval for the odds ratio by substituting
empirical estimates of the variance and covariance terms in the asymptotic
variance.

\end{document}